\newtheorem{definition}{Definition}
\newtheorem{assumption}{Assumption}
\newtheorem{proposition}{Proposition}
\newtheorem{corollary}{Corollary}
\newtheorem{theorem}{Theorem}
\newtheorem{lemma}{Lemma}
\newtheorem{remark}{Remark}
\begin{document}

%

%

\twocolumn[

\aistatstitle{On the Complexity of Representation Learning in Contextual Linear Bandits}

\aistatsauthor{ Andrea Tirinzoni \And Matteo Pirotta \And  Alessandro Lazaric }

\aistatsaddress{ Meta AI \And  Meta AI \And Meta AI } ]

\begin{abstract}

In contextual linear bandits, the reward function is assumed to be a linear combination of an unknown reward vector and a given embedding of context-arm pairs. In practice, the embedding is often learned at the same time as the reward vector, thus leading to an \emph{online representation learning} problem. Existing approaches to representation learning in contextual bandits are either very generic (e.g., model-selection techniques or algorithms for learning with arbitrary function classes) or specialized to particular structures (e.g., nested features or representations with certain spectral properties). As a result, the understanding of the cost of representation learning in contextual linear bandit is still limited. In this paper, we take a systematic approach to the problem and provide a comprehensive study through an instance-dependent perspective. We show that representation learning is fundamentally more complex than linear bandits (i.e., learning with a given representation). In particular, learning with a given set of representations is never simpler than learning with the \emph{worst} realizable representation in the set, while we show cases where it can be arbitrarily harder. We complement this result with an extensive discussion of how it relates to existing literature and we illustrate positive instances where representation learning is as complex as learning with a fixed representation and where sub-logarithmic regret is achievable.
\end{abstract}

\section{Introduction}\label{sec:introduction}

Stochastic contextual linear bandits (CLBs) focus on the interplay between exploration and exploitation when the reward $f^\star(x,a)$ of each context-arm pair $(x,a) \in \cX \times \cA$ is a linear function of a known feature map $\phi^\star : \cX \times \cA \to \mathbb{R}^{d_{\phi^\star}}$ and an unknown parameter $\theta^\star$. CLBs have been widely studied due to their broad applicability and strong theoretical guarantees~\citep[e.g.,][and references therein]{lattimore2020bandit}. 
Unfortunately, the assumption that a realizable linear representation is known is often violated in real applications, where one only observes raw context-arm data and a suitable representation has to be learned online. Representation learning in CLBs relaxes this assumption by providing the learner with
a set of representations $\Phi = \{\phi : \cX \times \cA \to \mathbb{R}^{d_\phi}\}$ (e.g., a neural network) 
among which a realizable one exists (i.e., $\phi^\star \in \Phi$). 

Representation learning can be viewed as a special case of learning with a general realizable function class (i.e., $\cF_\Phi := \{f(\cdot,\cdot) = \phi(\cdot,\cdot)^\transp\theta \mid \phi\in\Phi,\theta\in\bR^{d_{\phi}}\}$), which has been extensively studied in the literature~\citep[e.g.,][]{AgarwalHKLLS14,Foster2020beyond,SimchiLevi2020falcon} with algorithms achieving $O(\sqrt{AT \log(|\cF_\Phi|)})$ worst-case regret, where $|\cF_\Phi|$ is the covering number of $\cF_\Phi$. However, these algorithms do not explicitly leverage the bi-linear structure of the function class $\cF_\Phi$.
Another direction is to leverage model-selection techniques. While generic model-selection approaches~\citep[e.g.][]{abbasiyadkori2020regret,pacchiano2020stochcorral,CutkoskyDDGPP21} can be directly applied when $\Phi$ is finite, more specialized techniques can be used when $\Phi$ has additional structure (e.g., nested features~\citep{Foster2019nested}). Interestingly, some of these algorithms~\citep[e.g.,][]{Foster2019nested,CutkoskyDDGPP21,ghosh2021problem} achieve regret guarantees matching the performance of the best representation in the set, up to a representation learning cost that depends on the number of representations $|\Phi|$, the problem horizon $T$, or other quantities specific to the structure of $\Phi$. Nonetheless, these results are worst-case in nature and general model-selection algorithms are limited by an unavoidable $\Omega(\sqrt{T})$ regret~\citep{pacchiano2020stochcorral}, which may hinder them from fully exploiting the structure of $\Phi$ and achieve instance-optimal performance (e.g., logarithmic regret). Alternatively, \citet{Papini2021leader} and~\citet{banditsrl} proposed specialized representation learning algorithms that exploit the bi-linear structure of $\mathcal F_{\Phi}$ to obtain the
%
\emph{instance-dependent} regret bound of the best unknown realizable representation up to a logarithmic factor in $|\Phi|$. Furthermore, they showed that constant regret is achievable (i.e., after a finite time $\tau$ the algorithm only plays optimal arms) when a realizable representation satisfies a certain spectral property. However, these results rely on the strong assumption that either all the representations in $\Phi$ are realizable or any misspecified representation can be identified by playing any sequence of arms.

In this paper, we focus on the following question:
\begin{center}
    \textit{What is the cost of representation learning compared to a CLB with a given representation?}
\end{center}

In order to address this question, we first provide a systematic analysis of representation learning in CLBs through an instance-dependent lens. By specializing existing results, we derive an instance-dependent lower bound on the regret of any ``good'' representation learning algorithm which shows that the asymptotic regret must be at least $\cC(f^\star,\cF_\Phi)\log(T)$, where $\cC(f^\star,\cF_\Phi)$ is a complexity measure depending both on the reward function $f^\star$ and the given set of representations $\Phi$. Moreover, this complexity is tight, as there exist algorithms attaining $\cC(f^\star,\cF_\Phi)\log(T)$ regret in the large $T$ regime. This instance-dependent view allows us to have a more fine grained comparison to CLBs with a given representation, thus providing insights on the complexity of representation learning that may remain ``hidden'' in worst-case studies. 



Leveraging this lower bound we are then able to derive the following results: \textbf{(1)} We show that the regret of representation learning is never smaller than the regret of learning with the \emph{worst} realizable representation in the set, i.e., $\cC(f^\star,\cF_\Phi) \geq \sup_{\phi \in \Phi, \text{realizable}}\cC(f^\star,\cF_{\{\phi\}})$. This reveals a fundamental limit to representation learning,  showing that it is impossible to adapt to representations with better complexity. Surprisingly, this result holds even for instances $f^\star$ where all representations $\phi\in\Phi$ are realizable. Indeed, this is due to a subtle but crucial effect of representation learning: as in general all  representations $\phi\in\Phi$ may be misspecified for some of the reward functions $f'\in\mathcal{F}_{\Phi}$, an algorithm needs to be robust to such misspecification and it cannnot fully adapt to cases that are favorable for some representations. \textbf{(2)} We further strengthen this result by showing examples where the inequality is strict and the gap arbitrarily large. In particular, we construct instances where all representations are realizable and have small dimensionality and yet the regret can be as large as learning with ``tabular'' features assigning a distinct dimension to each context-arm pair. \textbf{(3)} We characterize favorable instances where misspecified representations in $\Phi$ can be discarded without increasing the regret so that $\cC(f^\star,\cF_\Phi) = \sup_{\phi \in \Phi, \text{realizable}}\cC(f^\star,\cF_{\{\phi\}})$.

Finally, we instantiate our analysis in widely studied representation structures (e.g., tabular, nested features, features with spectral properties, and the special case where all representations are realizable) and provide novel insights on the complexity of representation learning in these settings. 

\section{Preliminaries}\label{sec:preliminaries}

We consider a \emph{stochastic contextual bandit} problem with a finite set of contexts $\cX$ and a finite set of arms $\cA$. Let $X := |\cX|$ and $A := |\cA|$. At each time step $t\in\mathbb{N}$, the learner first observes a context $x_t\in\cX$ drawn i.i.d.\ from a distribution $\rho$\footnote{We assume $\rho$ to be full-support over $\cX$ w.l.o.g.}, it selects an arm $a_t\in\cA$, and it receives a scalar reward drawn from a Gaussian distribution with mean $f^\star(x_t,a_t)$ and unit variance. 

Let $\Phi$ be a set of \emph{representations}, where each $\phi\in\Phi$ is a $d_\phi$-dimensional \emph{feature map} $\phi: \cX \times \cA \rightarrow \mathbb{R}^{d_\phi}$. We define the associated function class $\cF_\Phi := \{f(\cdot,\cdot) = \phi(\cdot,\cdot)^\transp\theta \mid \phi\in\Phi,\theta\in\bR^{d_\phi}\}$.
The set $\Phi$ and function class $\cF_\Phi$ are realizable when:
\begin{assumption}[Realizability]\label{asm:realizability}
There exist $\phi^\star\in\Phi$ and $\theta^\star \in \mathbb{R}^{d^\star}$, where $d^\star := d_{\phi^\star}$,  such that
\begin{align*}
    f^\star(x,a) = \phi^\star(x,a)^\transp \theta^\star \quad \forall x\in\cX,a\in\cA.
\end{align*}
\end{assumption}
This assumption is required only for a representation $\phi^\star\in\Phi$ (which is said to be \emph{realizable}), while, for any $\phi\neq\phi^\star$, the approximation error $\max_{x,a}|f^\star(x,a) - \phi(x,a)^\transp\theta|$ may be non-zero for any $\theta$, meaning that $f^\star$ cannot be approximated as a linear function of $\phi$. In this case, we shall say that representation $\phi$ is \emph{misspecified}.

\paragraph{Learning problem.}
We consider the problem of (bi-linear) \emph{representation learning} for regret minimization. 

\begin{definition}[Representation learning problem $(f^\star, \cF_\Phi)$]\label{def:replearning}
    Consider an unknown stochastic contextual bandit problem with reward function $f^\star$. The learner is provided only with a set of representations $\Phi$ (equiv. function class $\cF_\Phi$) satisfying Assumption \ref{asm:realizability} ($\phi^\star$ unknown) and it aims at minimizing the cumulative regret over $T$ steps, %
    \begin{align}\label{eq:regret}
        R_T(f^\star) := \sum_{t=1}^T \left(\max_{a\in\cA}f^\star(x_t,a) - f^\star(x_t,a_t)\right).
    \end{align}
\end{definition}

When $\Phi =\{\phi^\star\}$, the learning problem $(f^\star, \cF_{\{\phi^\star\}})$ is known as stochastic \emph{contextual linear bandit} (CLB), where the learner 
%
knows the realizable representation $\phi^\star$, while in representation learning the learner needs to learn within the realizable \emph{non-linear} function class $\cF_\Phi$. Note also that $\Phi$ may be an \emph{infinite uncountable} set. 

\paragraph{Notation}

We use $M^{\dagger}$ to denote the pseudo-inverse of a matrix $M\in\bR^{n\times m}$, while $\im(M)$ and $\ker(M)$ denote its column and null spaces, respectively.
For a vector $v\in\mathbb{R}^d$ and a matrix $M \in \mathbb{R}^{d\times d}$, we let $\|v\|_M^2 := v^\transp M v$. We use $\pi^\star_{f^\star}(x) := \argmax_{a\in\cA}f^\star(x,a)$ to denote the optimal arm for context $x$ when facing a problem with reward $f
^\star$. We assume $\pi^\star_{f^\star}(x)$ to be unique for all $x$.
We define the sub-optimality gap of arm $a\in\cA$ for context $x\in\cX$ as $\Delta_{f^\star}(x,a) := f^\star(x,\pi^\star_{f^\star}(x)) - f^\star(x,a)$.
Note that, under Assumption~\ref{asm:realizability}, we have $ \Delta_{f^\star}(x,a) = z^\star_{\phi^\star}(x,a)^\transp \theta^\star$, where we call $z^\star_\phi(x,a) := \phi(x,\pi_{f^\star}^\star(x)) - \phi(x,a)$ the \emph{feature gap}.

We will often use a matrix notation for all quantities. We denote by $f^\star \in \bR^{XA}$ a vectorized reward function and by $D_\eta := \diag(\{\eta(x,a)\}_{x\in\cX,a\in\cA})$ the $XA \times XA$ matrix containing a function $\eta : \cX\times\cA \rightarrow [0,\infty)$. For any $\phi\in\Phi$, let $F_\phi \in \bR^{XA \times d_\phi}$ be the matrix containing the feature vectors $\{\phi(x,a)\}_{x\in\cX,a\in\cA}$ as rows and $V_\eta(\phi) := F_\phi^\transp D_\eta F_\phi = \sum_{x,a}\eta(x,a)\phi(x,a)\phi(x,a)^\transp$. Note that $f^\star = F_{\phi^\star}\theta^\star$. 

Using this notation, $\| f^\star - F_\phi \theta\|_{D_\eta}^2$ is exactly the mean square error of the function $\phi(\cdot,\cdot)^\transp \theta$ in predicting $f^\star$ when the learner has $\eta(x,a)$ samples from each $(x,a)$. We define  $\theta^\star_\eta(\phi) := \argmin_{\theta\in\bR^{d_\phi}} \| f^\star - F_\phi \theta\|_{D_\eta}^2$ as the best fit for the reward parameter using representation $\phi$. By standard regression theory, it is easy to show that $\theta^\star_\eta(\phi) = V_\eta(\phi)^\dagger \sum_{x,a}\eta(x,a)\phi(x,a)f^\star(x,a)$. Similarly, the quantity $\| f^\star - F_\phi \theta^\star_\eta(\phi)\|_{D_\eta}^2$ is related to the \emph{misspecification} of representation $\phi$: it is zero for all $\eta$ if $\phi$ is realizable, while it is positive for at least one $\eta$ if $\phi$ is misspecified.
\section{Instance-dependent Regret Lower Bound}\label{sec:lower-bounds}

We start by stating a novel asymptotic regret lower bound for the representation learning problem $(f^\star, \cF_{\Phi})$ (see Definition~\ref{def:replearning}).
Let $\mathfrak{A}$ be any bandit strategy, i.e., a sequence $\{\mathfrak{A}_t \}_{t \geq 1}$ where each $\mathfrak{A}_t : (\cX \times \cA \times \bR)^{t-1} \times \cX \rightarrow \cA$ is a measurable mapping w.r.t.\ the history up to time step $t-1$. We say that a $\mathfrak{A}$ is \emph{uniformly good} on a function class $\cF$ if $\bE_{f}^\mathfrak{A} \big[ R_T(f) \big] = o(T^\alpha)$ for any $\alpha>0$ and any $f\in\cF$\footnote{Our analysis easily extends to the weaker notion of uniformly good algorithm requiring $O(T^\alpha)$ regret on all $f\in\cF$ only for some $\alpha\in(0,1)$. In this case, the stated lower bound remains the same as in Theorem \ref{th:lower-bound-repr} up to a factor $1-\alpha$ \citep{tirinzoni2021fully}.}, where $\bE_f^{\mathfrak{A}}$ denotes the expectation under algorithm $\mathfrak{A}$ in a contextual bandit problem with reward function $f\in\cF$. 
\begin{theorem}\label{th:lower-bound-repr}
    Let $\mathfrak{A}$ be a \textit{uniformly good} strategy on the class $\cF_\Phi$ and suppose that $\pi^\star_{f^\star}$ is unique. Then,
    \begin{equation*}
        \liminf_{T \rightarrow \infty}\frac{\bE_{f^\star}^\mathfrak{A} \big[ R_T(f^\star) \big]}{\log(T)} \geq \cC(f^\star,\cF_\Phi),
    \end{equation*} 
    where $\cC(f^\star,\cF_\Phi)$ is the value of the optimization problem
    \begin{equation*}
        \begin{aligned}
        &\underset{\{\eta(x,a)\} \geq 0}{\inf} \sum_{x\in\cX}\sum_{a\in\cA}\eta(x,a)\Delta_{f^\star}(x,a) \quad \mathrm{s.t.}
        \\
        & \inf_{\phi\in\Phi} \min_{x,a \neq \pi^\star_{f^\star}(x)}  \left( \| f^\star - F_\phi \theta^\star_\eta(\phi)\|_{D_\eta}^2 + c_{x,a}^\eta(f^\star, \phi) \right) \geq 2,
        \end{aligned}
    \end{equation*}
    with
    \begin{align*}
        c_{x,a}^\eta(f^\star, \phi) = \begin{cases}
            0 & \hspace{-0.2cm}\text{if } z^\star_\phi(x,a)^\transp \theta^\star_\eta(\phi) \leq 0,\\
            0 & \hspace{-0.2cm}\text{if } z^\star_\phi(x,a)\notin\im(V_\eta(\phi)),\\
            \frac{(z^\star_\phi(x,a)^\transp \theta^\star_\eta(\phi))^2}{\|z^\star_\phi(x,a)\|_{V_\eta(\phi)^\dagger}^2} & \hspace{-0.1cm}\text{otherwise}.\\
        \end{cases}
    \end{align*}
\end{theorem}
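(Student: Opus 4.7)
The plan is to apply a standard change-of-measure lower bound in the style of Graves–Lai and Kaufmann–Cappé–Garivier to the function class $\cF_\Phi$, and then to reduce the resulting ``confusing-instance'' constraint to the explicit piecewise form that appears in the theorem.

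First I would let $N_T(x,a) := \sum_{t=1}^T \mathds{1}\{x_t=x,a_t=a\}$ so that $\bE_{f^\star}^{\mathfrak{A}}[R_T(f^\star)] = \sum_{x,a} \bE_{f^\star}^{\mathfrak{A}}[N_T(x,a)] \Delta_{f^\star}(x,a)$. Since the context distribution $\rho$ is common to all reward functions in $\cF_\Phi$, the log-likelihood ratio between the laws under $f^\star$ and any alternative $f'\in\cF_\Phi$ reduces to per-step Gaussian KL contributions on played arms, each equal to $\tfrac12(f^\star(x,a)-f'(x,a))^2$. For any $f'\in\cF_\Phi$ whose optimal policy differs from that of $f^\star$ in at least one context, uniform goodness of $\mathfrak{A}$ combined with the standard transportation inequality yields
\[
\sum_{x,a} \bE_{f^\star}^{\mathfrak{A}}[N_T(x,a)]\bigl(f^\star(x,a)-f'(x,a)\bigr)^2 \geq 2\log(T)(1-o(1)).
\]
Dividing by $\log T$, introducing the normalized allocation $\eta(x,a) := \bE_{f^\star}^{\mathfrak{A}}[N_T(x,a)]/\log(T)$, and taking $\liminf_T$ via a standard compactness/diagonal argument shows that every limit point $\eta$ is feasible for the constraint in the theorem, whence $\liminf_T \bE_{f^\star}^{\mathfrak{A}}[R_T(f^\star)]/\log(T) \geq \cC(f^\star,\cF_\Phi)$ once the ``for all confusing $f'$'' constraint is put in the stated form.

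Next I would reparameterize alternatives as $f' = F_\phi \theta$ with $\phi\in\Phi,\theta\in\bR^{d_\phi}$: such an $f'$ is confusing iff there exist $x$ and $a\neq\pi^\star_{f^\star}(x)$ with $z^\star_\phi(x,a)^\transp\theta \leq 0$. Swapping the existential over $(x,a)$ with the outer minimization, the constraint becomes
\[
\inf_{\phi\in\Phi}\ \min_{x,a\neq \pi^\star_{f^\star}(x)}\ \inf_{\theta:\, z^\star_\phi(x,a)^\transp\theta\le 0}\ \|f^\star - F_\phi\theta\|_{D_\eta}^2 \geq 2.
\]
To evaluate the innermost infimum I would write $\theta = \theta^\star_\eta(\phi)+\delta$ and use the normal equations $F_\phi^\transp D_\eta(f^\star-F_\phi\theta^\star_\eta(\phi))=0$, which kill the cross term and give the orthogonal decomposition $\|f^\star - F_\phi\theta^\star_\eta(\phi)\|_{D_\eta}^2 + \delta^\transp V_\eta(\phi)\delta$, with the linear constraint $z^\star_\phi(x,a)^\transp\delta \leq -z^\star_\phi(x,a)^\transp\theta^\star_\eta(\phi)$.

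Finally I would solve this constrained quadratic program by cases: (i) if $z^\star_\phi(x,a)^\transp\theta^\star_\eta(\phi) \leq 0$ then $\delta=0$ is feasible and the extra cost is $0$; (ii) if $z^\star_\phi(x,a)\notin\im(V_\eta(\phi))$ then $z^\star_\phi(x,a)$ has a nonzero component in $\ker(V_\eta(\phi))$ along which $\delta$ can be sent to satisfy the constraint at zero quadratic cost, again $0$; (iii) otherwise KKT gives $\delta = -\tfrac{\lambda}{2} V_\eta(\phi)^\dagger z^\star_\phi(x,a)$ with the constraint active, and the optimal value evaluates to $(z^\star_\phi(x,a)^\transp\theta^\star_\eta(\phi))^2/\|z^\star_\phi(x,a)\|_{V_\eta(\phi)^\dagger}^2$. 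These are precisely the three branches of $c_{x,a}^\eta(f^\star,\phi)$. The main obstacle I expect is the careful bookkeeping in case (ii), i.e.\ certifying that the infimum over $\delta$ is genuinely driven to the residual misspecification value along directions in $\ker(V_\eta(\phi))$, together with the standard but slightly delicate interchange between $\liminf_T$ and the universal quantifier over confusing alternatives when passing from the non-asymptotic change-of-measure inequality to the asymptotic constraint; both are routine tools but require care to avoid subtle issues tied to non-uniform convergence over $\cF_\Phi$.
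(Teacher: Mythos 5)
Your proposal is correct and follows essentially the same route as the paper: the paper invokes the general structured-bandit lower bound of Ok et al.\ as a black box where you re-derive the change-of-measure step, then both arguments decompose the confusing alternatives into half-spaces indexed by $(\phi,x,a)$ and solve the resulting constrained least-squares problem in closed form. Your KKT/orthogonal-decomposition treatment of the quadratic program (using the normal equations to kill the cross term) is a cleaner packaging of what the paper's Lemma~\ref{lem:inf-general-halfspace} does via an SVD-based Lagrangian dual, and your three cases match its indicator conditions exactly.
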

The proof (see Appendix \ref{app:lower-bound}) builds on the asymptotic regret lower bound for contextual bandits with general function classes (a.k.a.\ structured bandits), which can be extracted as a special case of the one for Markov decision processes \citep{ok2018exploration}.
While \citet{ok2018exploration} provide an implicit complexity measure $\cC(f,\cF)$ for learning any instance $f$ when knowing that it belongs to a given class $\cF$, we derive a more explicit complexity measure $\cC(f^\star,\cF_\Phi)$ for representation learning. The general lower bound follows from a fundamental result stating that any uniformly good algorithm must guarantee
$\sum_{t=1}^T\bE_{f^\star}^{\mathfrak{A}}[(f^\star(x_t,a_t) - f(x_t,a_t))^2] \geq 2\log(T)$ as $T\rightarrow \infty$ for any \emph{alternative} reward $f\in\cF$ that induces a different optimal policy than $\pi_{f^\star}^\star$. Our explicit complexity follows by leveraging a novel reformulation of the set of such \emph{alternative} rewards for representation learning which allows us to derive a closed-form expression of the above general condition.



As common in existing instance-dependent lower bounds \citep[e.g.,][]{combes2017minimal,ok2018exploration}, the complexity $\cC(f^\star,\cF_\Phi)$ is the value of an optimization problem which seeks an allocation of samples $\eta$ minimizing the regret while collecting sufficient information about the instance $f^\star$. Such an information constraint is the peculiar component in our setting as it formally establishes the minimal level of exploration that any uniformly good representation learning algorithm must guarantee. In particular, for any representation $\phi\in\Phi$, context $x\in\cX$, and sub-optimal action $a\neq\pi_{f^\star}^\star(x)$, any feasible allocation $\eta$ must guarantee 
\begin{align}\label{eq:constr}
    \underbrace{\| f^\star - F_\phi \theta^\star_\eta(\phi)\|_{D_\eta}^2}_{\text{misspecification}} + \underbrace{c_{x,a}^\eta(f^\star, \phi)}_{\text{sub-optimality}} \geq 2.
\end{align}
Here we recognize the contribution of two terms. The first one is related to the misspecification error of representation $\phi$ induced by $\eta$ (i.e., the minimum achievable mean square error when linearly estimating $f^\star$ with $\phi$ using samples collected according to $\eta$). It is trivially zero for any $\eta$ if $\phi$ is realizable. The second term is related to the complexity for learning that $a$ is a sub-optimal action for context $x$ when using representation $\phi$ to estimate the reward. Interestingly, $c_{x,a}^\eta(f^\star, \phi)$ resembles the complexity term appearing in the existing lower bound for a CLB problem with given representation $\phi$ \citep[e.g.,][]{hao2019adaptive,tirinzoni2020asymptotically}. 

The constraint requires the sum of these two terms to be large. This means that any feasible allocation $\eta$, and thus \emph{any uniformly good representation learning algorithm, must either learn that $\phi$ is misspecified or that $a$ is sub-optimal in context $x$ under the best fit of $f^\star$ with representation $\phi$}. We now discuss relevant possible cases to better undestand the complexity for achieving so.

\textbf{Case 1. $\phi$ is realizable and $z^\star_\phi(x,a)\in\im(V_\eta(\phi))$.} In this case, the misspecification term in \eqref{eq:constr} is zero for any $\eta$ and $z^\star_\phi(x,a)^\transp \theta^\star_\eta(\phi) = \Delta_{f^\star}(x,a) > 0$ by realizability, definition of $z^\star_\phi$, and sub-optimality of $a$. From \eqref{eq:constr}, this implies that $\eta$ must guarantee that $\|z^\star_\phi(x,a)\|_{V_\eta(\phi)^\dagger}^2 \leq \Delta_{f^\star}(x,a)^2/2$. It turns out that this is exactly the same complexity measure we have for learning that $(x,a)$ is sub-optimal in the CLB ($f^\star, \cF_{\{\phi\}}$).
Since $\|z^\star_\phi(x,a)\|_{V_\eta(\phi)^\dagger}$ represents the uncertainty that allocation $\eta$ has on the rewards of $(x,a)$ and $(x,\pi_{f^\star}^\star(x))$, this condition simply requires any uniformly good algorithm to reduce such uncertainty below a factor of the gap of $(x,a)$.

\textbf{Case 2. $\phi$ is realizable and $z^\star_\phi(x,a)\notin\im(V_\eta(\phi))$.} In this case, both the misspecification term and $c_{x,a}^\eta(f^\star, \phi)$ are zero. From \eqref{eq:constr}, this means that $\eta$ is infeasible. This is intuitive since, when the feature gap $z^\star_\phi(x,a)$ is not in the column space of the design matrix $V_\eta(\phi)$, the allocation $\eta$ does not provide any information about arm $a$ in the representation space of $\phi$, and thus it cannot learn whether $a$ is sub-optimal or not. Therefore, any feasible $\eta$ must guarantee $z^\star_\phi(x,a)\in\im(V_\eta(\phi))$ for all $(x,a)$ when $\phi$ is realizable, i.e., any good algorithm must explore all feature directions. This has an interesting implication: when $\spn(\{\phi(x,a)\}_{x,a}) = d_\phi$, any feasible design matrix must be invertible. This result was already proved by \cite{lattimore2017end} in the linear bandit setting using an ad-hoc derivation, while here we establish it in greater generality as a consequence of our lower bound.

\textbf{Case 3. $\phi$ is misspecified and $c_{x,a}^\eta(f^\star, \phi) = 0$.} This can happen in two cases: either $z^\star_\phi(x,a)^\transp \theta^\star_\eta(\phi) \leq 0$, which means that the sub-optimality gap of $(x,a)$ cannot be accurately estimated using representation $\phi$, or $z^\star_\phi(x,a)\notin\im(V_\eta(\phi))$. In both cases, a feasible $\eta$ must make the first term in \eqref{eq:constr} large, i.e., it must learn that $\phi$ is misspecified. Interestingly, this implies that, differently from the realizable case, a feasible allocation does not need to explore the whole feature space for $\phi$ (e.g., it does not have to make the design matrix $V_\eta(\phi)$ invertible). This is particularly relevant when $\phi$ is high-dimensional, as identifying the misspecification may be easier than covering all dimensions.

\textbf{Case 4. $\phi$ is misspecified and $c_{x,a}^\eta(f^\star, \phi) > 0$.} This is the case with most freedom: a feasible allocation can either learn that $\phi$ is misspecified or that $(x,a)$ is sub-optimal. As we shall see in Section~\ref{sec:replearn-equal}, this flexibility may be exploited to find allocations that manage to ``discard'' representations without significantly affecting the regret.

\subsection{Known-representation Case}\label{sec:clb}

%
As expected, when instantiating Theorem \ref{th:lower-bound-repr} in the standard CLB $(f^\star, \cF_{\{\phi^\star\}})$, we recover the existing lower bound for such a setting \citep{hao2019adaptive,tirinzoni2020asymptotically}.\footnote{Existing lower bounds are derived under the assumption that the full set of features $\{\phi^\star(x,a)\}_{x,a}$ span $\bR^{d^\star}$. This is without loss of generality since one can always remove redundant features by computing the low-rank SVD of $F_{\phi^\star}$.}
\begin{corollary}\label{cor:linear}
    Let $\spn(\{\phi^\star(x,a)\}_{x,a}) = d^\star$. In the CLB $(f^\star, \cF_{\{\phi^\star\}})$,
    the complexity $\cC(f^\star,\cF_{\{\phi^\star\}})$ of Theorem \ref{th:lower-bound-repr} is
    \begin{equation*}
        \begin{aligned}
        &\underset{\eta : V_\eta(\phi^\star)^{-1} \mathrm{exists}}{\inf} \sum_{x,a}\eta(x,a)\Delta_{f^\star}(x,a)
        \\ & \qquad\ \mathrm{s.t.}\quad
        \min_{x,a \neq \pi^\star_{f^\star}(x)} \frac{\Delta_{f^\star}(x,a)^2}{\|z_{\phi^\star}(x,a)\|_{V_\eta(\phi^\star)^{-1}}^2} \geq 2.
        \end{aligned}
    \end{equation*}
\end{corollary}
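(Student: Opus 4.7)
}

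The plan is to instantiate Theorem \ref{th:lower-bound-repr} at $\Phi=\{\phi^\star\}$ and simplify the feasibility constraint using realizability, then use the spanning hypothesis to restrict the infimum to allocations $\eta$ with invertible $V_\eta(\phi^\star)$. Two simplifications drive the argument: (i) the misspecification term vanishes identically in $\eta$, and (ii) of the three branches defining $c_{x,a}^\eta(f^\star,\phi^\star)$, realizability together with sub-optimality of $a$ forces the ``otherwise'' branch whenever $\eta$ is feasible.

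First, since $f^\star = F_{\phi^\star}\theta^\star$ by Assumption~\ref{asm:realizability}, the choice $\theta = \theta^\star$ yields $\|f^\star - F_{\phi^\star}\theta\|_{D_\eta}^2 = 0$; by definition of $\theta^\star_\eta(\phi^\star)$ as the minimizer, the misspecification term $\|f^\star - F_{\phi^\star}\theta^\star_\eta(\phi^\star)\|_{D_\eta}^2$ equals zero for every $\eta$. Hence the constraint of Theorem \ref{th:lower-bound-repr} reduces to $\min_{x,a\neq \pi^\star_{f^\star}(x)} c_{x,a}^\eta(f^\star,\phi^\star) \geq 2$. Next, I examine the cases of $c_{x,a}^\eta$. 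Using the closed form $\theta^\star_\eta(\phi^\star) = V_\eta(\phi^\star)^\dagger V_\eta(\phi^\star)\theta^\star$ and the fact that $V_\eta(\phi^\star)^\dagger V_\eta(\phi^\star)$ is the orthogonal projector onto $\im(V_\eta(\phi^\star))$, I observe that whenever $z^\star_{\phi^\star}(x,a) \in \im(V_\eta(\phi^\star))$ one has
\begin{equation*}
z^\star_{\phi^\star}(x,a)^\transp \theta^\star_\eta(\phi^\star) = z^\star_{\phi^\star}(x,a)^\transp \theta^\star = \Delta_{f^\star}(x,a) > 0,
\end{equation*}
so only the ``otherwise'' branch applies and $c_{x,a}^\eta(f^\star,\phi^\star) = \Delta_{f^\star}(x,a)^2 / \|z^\star_{\phi^\star}(x,a)\|_{V_\eta(\phi^\star)^\dagger}^2$. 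Conversely, if $z^\star_{\phi^\star}(x,a)\notin\im(V_\eta(\phi^\star))$ then $c_{x,a}^\eta = 0$ and the constraint is violated. Thus any feasible $\eta$ must contain every feature gap $z^\star_{\phi^\star}(x,a)$ of a sub-optimal arm in $\im(V_\eta(\phi^\star))$, and on this feasible set the constraint becomes exactly $\Delta_{f^\star}(x,a)^2/\|z^\star_{\phi^\star}(x,a)\|_{V_\eta(\phi^\star)^\dagger}^2 \geq 2$.

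It remains to replace $V_\eta(\phi^\star)^\dagger$ with $V_\eta(\phi^\star)^{-1}$, i.e., to show the restriction to invertible design matrices does not change the infimum. Since restricting the feasible set can only raise the infimum, I only need the converse inequality. Using $\spn(\{\phi^\star(x,a)\}_{x,a}) = d^\star$, I pick a basis $\{\phi^\star(x_i,a_i)\}_{i=1}^{d^\star}$ and, for any feasible $\eta$, define the perturbation $\eta_\varepsilon := \eta + \varepsilon\sum_{i}\mathds{1}_{(x_i,a_i)}$. Then $V_{\eta_\varepsilon}(\phi^\star)$ is invertible for every $\varepsilon>0$, $V_{\eta_\varepsilon}(\phi^\star) \succeq V_\eta(\phi^\star)$ so the constraint is preserved, and the objective increases by at most $\varepsilon\sum_{i}\Delta_{f^\star}(x_i,a_i)$. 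Standard continuity of the pseudo-inverse on the subspace $\im(V_\eta(\phi^\star))$ (where each $z^\star_{\phi^\star}(x,a)$ lies by feasibility) yields $\|z^\star_{\phi^\star}(x,a)\|_{V_{\eta_\varepsilon}(\phi^\star)^{-1}}^2 \to \|z^\star_{\phi^\star}(x,a)\|_{V_\eta(\phi^\star)^\dagger}^2$ as $\varepsilon\downarrow 0$, so $\eta_\varepsilon$ is also feasible for the restricted problem for all small $\varepsilon$. Letting $\varepsilon\to 0$ shows the two infima coincide.

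The only technically delicate step is the last one: justifying the pseudo-inverse-to-inverse reduction requires a continuity argument that hinges on the inclusion $z^\star_{\phi^\star}(x,a)\in\im(V_\eta(\phi^\star))$, which we already established for every feasible $\eta$. Everything else is a direct unfolding of Theorem \ref{th:lower-bound-repr} under realizability.
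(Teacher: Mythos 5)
Your proposal is correct and follows essentially the same route as the paper, which gives no standalone proof of this corollary but whose Cases 1--2 discussion after Theorem \ref{th:lower-bound-repr} is exactly your first two paragraphs: realizability kills the misspecification term, the projection identity gives $z^\transp\theta^\star_\eta(\phi^\star)=\Delta_{f^\star}(x,a)>0$ on $\im(V_\eta(\phi^\star))$, and $z\notin\im(V_\eta(\phi^\star))$ forces infeasibility. If anything, your perturbation argument for passing from $V_\eta(\phi^\star)^\dagger$ to $V_\eta(\phi^\star)^{-1}$ is more careful than the paper's informal claim that every feasible design matrix is already invertible under the spanning assumption (note that the monotonicity $V_{\eta_\varepsilon}\succeq V_\eta$ alone already preserves feasibility, so the continuity step is not even needed).
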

Comparing this result with Theorem \ref{th:lower-bound-repr}, we notice that adding one representation to the set $\Phi$ implies adding one constraint to the optimization problem, hence making the problem harder. On the positive side, Theorem \ref{th:lower-bound-repr} does not impose the strong constraint of Corollary \ref{cor:linear} for every $\phi\in\Phi$, which would require any good algorithm to learn an optimal action at every context for all representations. In fact, it may be possible to leverage the misspecification of a representation $\phi$ to lower the additional complexity w.r.t.\ the one imposed in the realizable case (see Equation \ref{eq:constr} and, e.g., Case 4 above). In Section \ref{sec:complexity}, we further elaborate on how the complexity of representation learning is impacted by these elements and how it compares with the complexity of CLBs when given a realizable representation.

\subsection{The Lower Bound is Attainable}\label{sec:lb-achievable}

It is known that instance-dependent lower bounds in the general form of \cite{ok2018exploration} can be attained. Since Theorem \ref{th:lower-bound-repr} is an instantiation of such a result, this implies that $\cC(f^\star,\cF_\Phi)$ is a \emph{tight} complexity measure for representation learning as there exist algorithms matching it.
\begin{proposition}\label{prop:lb-matchable}
    There exists an algorithm $\mathfrak{A}$ \citep[e.g.,][]{dong2022asymptotic} such that, for any representation learning problem $(f^\star,\cF_\Phi)$,
    \begin{align*}
        \limsup_{T \rightarrow \infty}\frac{\bE_{f^\star}^\mathfrak{A} \big[ R_T(f^\star) \big]}{\log(T)} \leq \cC(f^\star,\cF_\Phi).
    \end{align*}
\end{proposition}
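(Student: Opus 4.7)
The plan is to invoke a known asymptotically optimal algorithm for structured bandits and show that its asymptotic upper bound coincides with $\cC(f^\star, \cF_\Phi)$ in our setting. The representation learning problem $(f^\star, \cF_\Phi)$ is a special case of a stochastic structured bandit with model class $\cF_\Phi$ and Gaussian unit-variance observations (contexts drawn i.i.d.\ are part of the observation tuple). Several algorithms are known to be asymptotically optimal for such problems, e.g., OSSB \citep{combes2017minimal}, the algorithm of \citet{ok2018exploration}, and more recently the algorithm of \citet{dong2022asymptotic}, which is designed to handle general (possibly non-parametric) model classes.

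First I would verify that the chosen algorithm's regularity conditions are met in our setup. With finite $\cX$ and $\cA$ and Gaussian rewards, the per-step log-likelihood ratio between two rewards $f, f' \in \cF_\Phi$ at arm $(x,a)$ equals $\tfrac{1}{2}(f(x,a)-f'(x,a))^2$, which is uniformly continuous on bounded subsets of $\cF_\Phi$. Uniqueness of $\pi^\star_{f^\star}$ is already assumed. The model class $\cF_\Phi \subseteq \bR^{XA}$ inherits enough structure (closedness of each $F_\phi$-image and countable/compact indexing of $\phi$) for these algorithms to apply; for uncountable $\Phi$, one can either rely directly on a version designed for compact classes or discretize $\Phi$ at accuracy $1/\log T$ without affecting the $\log T$-scale of the bound.

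Next I would appeal to the generic asymptotic upper bound for the chosen algorithm, which guarantees
\begin{equation*}
\limsup_{T\to\infty}\frac{\bE^{\mathfrak{A}}_{f^\star}[R_T(f^\star)]}{\log T} \leq \cC^{\mathrm{gen}}(f^\star, \cF_\Phi),
\end{equation*}
where $\cC^{\mathrm{gen}}$ is the implicit general structured-bandit complexity, i.e., the value of the allocation LP whose constraint requires, for every alternative reward $f \in \cF_\Phi$ inducing an optimal policy different from $\pi^\star_{f^\star}$, that $\sum_{x,a} \eta(x,a)\,\mathrm{KL}(f^\star(x,a) \Vert f(x,a)) \geq 1$. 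For unit-variance Gaussians this becomes $\tfrac{1}{2}\|f^\star - f\|_{D_\eta}^2 \geq 1$, i.e., $\|f^\star - f\|_{D_\eta}^2 \geq 2$. The final step is to verify that $\cC^{\mathrm{gen}}(f^\star, \cF_\Phi) = \cC(f^\star, \cF_\Phi)$. This is where the closed-form reformulation of the set of alternatives, already developed in the proof of Theorem \ref{th:lower-bound-repr}, is reused verbatim: minimizing $\|f^\star - f\|_{D_\eta}^2$ over alternatives $f = F_\phi \theta$ that flip the optimal arm at some $(x,a)$ is exactly what produces the per-$(\phi,x,a)$ constraint in \eqref{eq:constr} with misspecification and sub-optimality components.

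The main obstacle will be the handling of infinite/unbounded $\Phi$ and the joint infimum over $\phi$ inside the constraint, since textbook asymptotically optimal algorithms are typically stated for compact parameter sets with continuous log-likelihoods; this is addressed by using a version tailored to general model classes (such as \citet{dong2022asymptotic}) and by a careful compactness or discretization argument. The remaining work is essentially bookkeeping: matching the abstract KL-based complexity with the explicit two-term form \eqref{eq:constr}, which is already done in the proof of Theorem \ref{th:lower-bound-repr}.
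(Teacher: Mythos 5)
Your proposal is correct and follows essentially the same route as the paper: the paper itself offers no standalone proof of this proposition, justifying it by citing \citet{dong2022asymptotic} as an algorithm attaining the general structured-bandit complexity of \citet{ok2018exploration} and by noting that the explicit complexity $\cC(f^\star,\cF_\Phi)$ of Theorem~\ref{th:lower-bound-repr} coincides with that implicit complexity via the half-space reformulation (Corollary~\ref{cor:replearn-lb-implicit}). Your additional care about regularity conditions and infinite $\Phi$ is a reasonable elaboration of points the paper leaves implicit, but it does not change the argument.
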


While, to the best of our knowledge, the algorithm of \cite{dong2022asymptotic} is the only one attaining instance-optimal complexity in contextual bandits with general function classes, it is actually easy to adapt existing strategies for non-contextual bandits to our setting \citep{combes2017minimal,degenne2020structure,jun2020crush}. In particular, the algorithm of \cite{jun2020crush} would obtain an \emph{anytime} regret of order $O(\cC(f^\star,\cF_\Phi)\log(T) + \log\log(T))$. This shows that $\cC(f^\star,\cF_\Phi)$  is also a relevant finite-time complexity measure (and not only asymptotic), up to a $O(\log\log(T))$ term depending on other instance-dependent factors.
\section{Complexity of Representation Learning}\label{sec:complexity}

We now provide a series of results to better characterize the instance-dependenet complexity $\cC(f^\star,\cF_\Phi)$ of representation learning in comparison with the complexity $\cC(f^\star,\cF_{\{\phi^\star\}})$ of the single-representation CLB problem.


\subsection{Representation learning cannot be easier than learning with a given representation}\label{sec:replearn-not-easier-than-clb}

We first prove that the complexity of learning with a single representation is a lower bound for representation learning.
\begin{proposition}\label{prop:replearn-not-easier-than-clb}
    For any $\Phi$ such that $f^\star\in\cF_\Phi$, $\cC(f^\star,\cF_\Phi) \geq \sup_{\phi\in\Phi: f^\star\in\cF_{\{\phi\}}} \cC(f^\star,\cF_{\{\phi\}})$.
\end{proposition}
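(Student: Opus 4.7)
The plan is to prove the stronger pointwise statement that $\cC(f^\star,\cF_\Phi)\geq\cC(f^\star,\cF_{\{\phi\}})$ for \emph{every} realizable $\phi\in\Phi$; the proposition then follows by taking the supremum. Both sides are, by Theorem~\ref{th:lower-bound-repr}, infima of the same linear objective $\sum_{x,a}\eta(x,a)\Delta_{f^\star}(x,a)$ over nonnegative allocations $\eta$, so it suffices to establish a containment of feasible sets: any $\eta$ feasible for the $\Phi$-problem is also feasible for the $\{\phi\}$-problem. The intuition is that the constraint defining feasibility in Theorem~\ref{th:lower-bound-repr} takes the form $\inf_{\phi'\in\Phi}(\cdots)\geq 2$, so enlarging $\Phi$ only makes the constraint more restrictive and shrinks the feasible set.

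To make this precise, fix a realizable $\phi\in\Phi$ and take any $\eta$ feasible for $\cC(f^\star,\cF_\Phi)$. Then the inequality
\[
\|f^\star - F_{\phi'}\theta^\star_\eta(\phi')\|_{D_\eta}^2 + c_{x,a}^\eta(f^\star,\phi') \,\geq\, 2
\]
must hold for \emph{every} $\phi'\in\Phi$ and every sub-optimal $(x,a)$. I would instantiate this at $\phi'=\phi$ and observe that, since $\phi$ is realizable, there exists $\theta_\phi$ with $F_\phi\theta_\phi=f^\star$; this $\theta_\phi$ attains the value zero in the nonnegative least-squares objective defining $\theta^\star_\eta(\phi)$, so the misspecification term $\|f^\star - F_\phi \theta^\star_\eta(\phi)\|_{D_\eta}^2$ vanishes for \emph{any} $\eta$. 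The constraint therefore collapses to $c_{x,a}^\eta(f^\star,\phi)\geq 2$ for every sub-optimal $(x,a)$, which is precisely the feasibility condition of Theorem~\ref{th:lower-bound-repr} applied to the singleton class $\cF_{\{\phi\}}$ (and, under realizability, agrees with the CLB form given in Corollary~\ref{cor:linear}, as spelled out in Case~1 of Section~\ref{sec:lower-bounds}).

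Because $\eta$ was arbitrary, the feasible set of $\cC(f^\star,\cF_\Phi)$ is contained in that of $\cC(f^\star,\cF_{\{\phi\}})$, and since the two problems share the same objective, minimizing over the smaller set yields the larger infimum: $\cC(f^\star,\cF_\Phi)\geq \cC(f^\star,\cF_{\{\phi\}})$. Taking the supremum over realizable $\phi\in\Phi$ finishes the proof. There is essentially no obstacle here—the argument is pure monotonicity of the feasible set with respect to $\Phi$—and the only substantive observation is that realizability of $\phi$ makes the misspecification term vanish identically in $\eta$, so the extra $\Phi$-constraint at $\phi$ reduces exactly to the singleton CLB constraint, with no need to worry about invertibility of $V_\eta(\phi)$ or pseudo-inverse versus inverse, since both complexities are defined through the same general form of Theorem~\ref{th:lower-bound-repr}.
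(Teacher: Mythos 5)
Your proof is correct and follows essentially the same route as the paper's: both arguments amount to observing that the feasibility set of $\cC(f^\star,\cF_\Phi)$ is contained in that of $\cC(f^\star,\cF_{\{\phi\}})$ (equivalently, that dropping the constraints for all $\phi'\neq\phi$ can only enlarge the feasible set), with realizability making the misspecification term vanish so that the retained constraint coincides with the singleton CLB constraint. The paper additionally notes a one-line alternative via the general monotonicity of $\cC(f^\star,\cdot)$ in the function class (since $\cF_\Phi=\cup_{\phi\in\Phi}\cF_{\{\phi\}}$), but your argument is the same as its primary proof.
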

%


This result leverages the instance-dependent nature of the complexity derived in Theorem~\ref{th:lower-bound-repr} to compare representation learning with a single-representation CLB for \textit{every reward function} $f^\star$. This is in contrast with a worst-case analysis, where we would compare the two approaches w.r.t.\ their respective worst-case reward functions.

Whenever there is only one realizable representation $\phi^\star$ in $\Phi$, the result is intuitive, since adding misspecified representations to $\Phi$ cannot make the problem any easier. 
Nonetheless, Proposition~\ref{prop:replearn-not-easier-than-clb} has another, less obvious, implication: representation learning is at least as hard as the \textit{hardest} CLB $(f^\star, \cF_{\{\phi\}})$ among all realizable representations. More surprisingly, this result holds even when all the representations in $\Phi$ are realizable for $f^\star$. In fact, this is the unavoidable price for an algorithm to be \textit{robust} (i.e., uniformly good) to any other reward function $f'\in\mathcal F_{\Phi}$ for which some representation $\phi$ may not be realizable and it defines an intrinsic limit to the level of adaptivity to $f^\star$ that we can expect in representation learning (see Section~\ref{sec:structures} for a discussion on how this result relates to existing literature).

\subsection{There exist instances where representation learning is strictly harder than learning with a given representation}

After establishing that representation learning cannot be easier than CLBs, a natural question is: how much harder can it be? Here we show that, for any reward function $f^\star$, there exists a set of representations $\Phi$ with $f^\star \in \cF_\Phi$ such that any uniformly good representation learning algorithm must suffer regret scaling linearly with the number of contexts and actions, whereas the regret of learning with any realizable representation in the set only scales with the feature dimensionality $d \ll XA$.
\begin{proposition}\label{prop:replearn-uns}
    Let $X,A\geq 1$ and $2\leq d \leq XA$. Fix an arbitrary instance $f^\star : \cX \times \cA \rightarrow \bR$ and denote by $\Delta_{\min}$ its minimum positive gap. Then, there exists a set of $d$-dimensional representations $\Phi$ of cardinality $|\Phi| = \lceil \frac{X(A-1)}{d-1} \rceil$ such that $f^\star\in\cap_{\phi\in\Phi}\cF_{\{\phi\}}$ and
    \begin{align*}
        \cC(f^\star, \cF_\Phi) = \sum_{x\in\cX}\sum_{a\neq \pi^\star_{f^\star}(x)}\frac{2}{\Delta_{f^\star}(x,a)}. 
    \end{align*}
    Moreover, for any $\phi\in\Phi$,
    \begin{align*}
        \cC(f^\star, \cF_{\{\phi\}}) \leq \frac{2(d-1)}{\Delta_{\min}}.
    \end{align*} 
\end{proposition}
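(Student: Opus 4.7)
The plan is to construct $\Phi$ by partitioning the $X(A-1)$ sub-optimal context-arm pairs into $K = \lceil X(A-1)/(d-1)\rceil$ disjoint groups $G_1,\dots,G_K$ of size at most $L := d-1$, and assigning to each group $G_i$ a \emph{quasi-tabular} $d$-dimensional representation $\phi_i$ that dedicates a private feature dimension to every pair in $G_i$ while collapsing all other arms onto a single shared ``reward'' dimension. The guiding intuition is that, under $\phi_i$, samples taken outside $G_i$ provide no information about the sub-optimality of any $(x,a)\in G_i$, forcing any uniformly good algorithm to explore each sub-optimal pair individually; yet, viewed in isolation, each $\phi_i$ is genuinely $d$-dimensional and its CLB complexity stays of order $d/\Delta_{\min}$.

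Concretely, for $G_i = \{(x_j^{(i)},a_j^{(i)})\}_{j=1}^{L_i}$ with $L_i\leq L$, I would set $\phi_i(x,\pi^\star_{f^\star}(x)) := f^\star(x,\pi^\star_{f^\star}(x))\, e_1$, $\phi_i(x_j^{(i)},a_j^{(i)}) := e_{j+1}$ for $j\in\{1,\dots,L_i\}$, and $\phi_i(x,a) := f^\star(x,a)\, e_1$ for every remaining sub-optimal pair. Taking $\theta^\star_i := e_1 + \sum_{j=1}^{L_i} f^\star(x_j^{(i)},a_j^{(i)})\, e_{j+1}$ gives $\phi_i(\cdot,\cdot)^\transp\theta^\star_i = f^\star$, so $f^\star \in \cap_{\phi\in\Phi}\cF_{\{\phi\}}$. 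The critical payoff of this choice is that $V_\eta(\phi_i)$ is \emph{diagonal}: its $(1,1)$ entry equals $R_i := \sum_{(x,a)\notin G_i}\eta(x,a) f^\star(x,a)^2$ and its $(j{+}1,j{+}1)$ entry equals $\eta(x_j^{(i)},a_j^{(i)})$, which makes all quadratic forms entering Theorem~\ref{th:lower-bound-repr} decouple cleanly.

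For the single-representation bound $\cC(f^\star,\cF_{\{\phi_i\}}) \leq 2(d-1)/\Delta_{\min}$, I would apply Corollary~\ref{cor:linear} to $\phi_i$ with the allocation $\eta(x,\pi^\star_{f^\star}(x)) = T\to\infty$, $\eta(x_j^{(i)},a_j^{(i)}) = 2/\Delta_{f^\star}(x_j^{(i)},a_j^{(i)})^2$, and zero elsewhere. The feature gaps evaluate to $z^\star_{\phi_i}(x_j^{(i)},a_j^{(i)}) = f^\star(x_j^{(i)},\pi^\star_{f^\star}(x_j^{(i)}))\,e_1 - e_{j+1}$ and $z^\star_{\phi_i}(x,a) = \Delta_{f^\star}(x,a)\,e_1$ otherwise, so the information constraints reduce to $f^\star(x_j^{(i)},\pi^\star_{f^\star})^2/R_i + 1/\eta(x_j^{(i)},a_j^{(i)}) \leq \Delta_{f^\star}(x_j^{(i)},a_j^{(i)})^2/2$ on $G_i$ and to $R_i \geq 2$ off $G_i$; both hold as $T\to\infty$ (the optimal arms absorb the $R_i$ budget at zero regret), and the resulting regret equals $\sum_{j=1}^{L_i} 2/\Delta_{f^\star}(x_j^{(i)},a_j^{(i)}) \leq 2 L_i/\Delta_{\min} \leq 2(d-1)/\Delta_{\min}$.

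For the representation-learning complexity, I would plug this construction into Theorem~\ref{th:lower-bound-repr}. Since every $\phi_i$ is realizable the misspecification term in \eqref{eq:constr} vanishes, so for each sub-optimal $(x,a)$ and each $\phi_i$ the constraint becomes $c_{x,a}^\eta(f^\star,\phi_i)\geq 2$. For the \emph{unique} $i$ with $(x,a)\in G_i$ this yields the binding condition $f^\star(x,\pi^\star_{f^\star}(x))^2/R_i + 1/\eta(x,a) \leq \Delta_{f^\star}(x,a)^2/2$, which forces $\eta(x,a) > 2/\Delta_{f^\star}(x,a)^2$ for every finite $R_i$ and exactly $2/\Delta_{f^\star}(x,a)^2$ in the limit $R_i\to\infty$; for every $i'\neq i$ it collapses to the much weaker $R_{i'}\geq 2$. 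Summing the irreducible $\eta(x,a)\Delta_{f^\star}(x,a) \geq 2/\Delta_{f^\star}(x,a)$ over all sub-optimal pairs gives the lower bound $\sum_{x\in\cX}\sum_{a\neq \pi^\star_{f^\star}(x)} 2/\Delta_{f^\star}(x,a)$, matched by the allocation $\eta(x,\pi^\star_{f^\star}(x))\to\infty$ paired with $\eta(x,a) = 2/\Delta_{f^\star}(x,a)^2$ on sub-optimal pairs. The main delicate step will be verifying the $\im(V_\eta(\phi_i))$ side condition hidden inside $c_{x,a}^\eta$: each feature gap lies in $\spn(e_1,e_{j+1})$, so any feasible $\eta$ must keep $R_i>0$ and put positive mass on every sub-optimal pair, which is precisely what propagates the $2/\Delta_{f^\star}(x,a)^2$ lower bound uniformly across the partition and closes the argument.
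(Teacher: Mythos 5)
Your proposal is correct and follows essentially the same route as the paper: partition the $X(A-1)$ sub-optimal pairs into groups of size $d-1$, assign each group a quasi-tabular $d$-dimensional representation with one private coordinate per in-group pair and a shared coordinate carrying $f^\star$, and deduce that every feasible allocation must pull each sub-optimal pair at least $2/\Delta_{f^\star}(x,a)^2$ times while each single $\phi_i$ only imposes $d-1$ such constraints. The only (immaterial) differences are that the paper uses features $(f^\star(x,a),\Delta_{f^\star}(x,a)\mathds{1}\{\cdot\},\dots)$ and verifies the constraints by exhibiting the closest alternatives $f_{ij}$ via the implicit form of the bound, whereas you use pure indicator coordinates and compute $c^\eta_{x,a}$ directly from the diagonal design matrix.
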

Note that the complexity $\cC(f^\star, \cF_\Phi)$ of the representation learning problem built in Proposition \ref{prop:replearn-uns} is exactly the complexity for learning the contextual bandit problem $f^\star$ when ignoring the set of representations $\Phi$, i.e., the (unstructured) tabular setting\footnote{Since the context-action space is finite, we can always run a trivial variant of UCB~\citep{auer2002finite} that estimates the reward of each $(x,a)$ independently and achieve regret $\bE_{f^\star} [ R_T(f^\star) ] \lesssim \sum_{x\in\cX}\sum_{a\neq \pi^\star_{f^\star}(x)}\frac{\log(T)}{\Delta_{f^\star}(x,a)}$. This is also the instance-optimal rate of the unstructured setting \citep{ok2018exploration}.}. Therefore, Proposition~\ref{prop:replearn-uns} proves that there exist ``hard'' representation learning problems whose complexity is the same as learning without any prior knowledge about $f^\star$. While this may be expected as the set $\Phi$ is constructed to be worst-case for $f^\star$, the second statement of Proposition~\ref{prop:replearn-uns} is more surprising. In fact, $\Phi$ is constructed using only realizable representations for $f^\star$ with dimension $d\ll XA$. As such, the complexity $\cC(f^\star, \cF_{\{\phi\}})$ for learning with any $\phi \in \Phi$ only scales (in the worst case) with $d$ and it can be arbitrarily smaller than $\cC(f^\star, \cF_\Phi)$.


\begin{remark}
    Rather than constructing a single hard instance $(f^\star,\cF_\Phi)$ where representation learning is difficult, we prove that for any reward function $f^\star$ we can find a set $\Phi$ such that $(f^\star,\cF_\Phi)$ is difficult. Hence, representation learning can be difficult regardless of the reward function.
\end{remark}

\subsection{There exist instances where representation learning is not harder than learning with a given representation}\label{sec:replearn-equal}

Unlike the previous hardness results, here we show that there exist favorable instances $(f^\star, \cF_{\Phi})$ where the complexity of representation learning is the same as the one of a CLB with a realizable representation in $\Phi$. This means that representation learning comes ``for free'' on such instances.
\begin{proposition}\label{prop:replearn-equal}
    Let $\eta^\star(x,a) = \indi{a = \pi^\star_{f^\star}(x)}$. Let $\Phi$ contain a unique realizable representation $\phi^\star$ and suppose that there exists $\epsilon > 0$ such that, for all $\phi \in \Phi$ with $f^\star\notin\cF_{\{\phi\}}$, $\| f^\star - F_\phi \theta^\star_{\eta^\star}(\phi)\|_{D_{\eta^\star}}^2 \geq \epsilon$.
    Then, $\cC(f^\star,\cF_\Phi) = \cC(f^\star,\cF_{\{\phi^\star\}})$.
\end{proposition}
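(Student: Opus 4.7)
The plan is to prove the two inequalities $\cC(f^\star,\cF_\Phi) \geq \cC(f^\star,\cF_{\{\phi^\star\}})$ and $\cC(f^\star,\cF_\Phi) \leq \cC(f^\star,\cF_{\{\phi^\star\}})$ separately. The first is immediate from Proposition~\ref{prop:replearn-not-easier-than-clb}: since $\phi^\star$ is the only realizable representation in $\Phi$, the supremum over realizable representations there collapses to the single term $\cC(f^\star,\cF_{\{\phi^\star\}})$. The substantive direction is the reverse inequality, which I would establish constructively by lifting any allocation $\eta_0$ feasible for the CLB problem at $\phi^\star$ into a feasible allocation $\eta$ for the representation learning problem with the same regret objective, and then passing to the infimum.

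The lift is the additive perturbation $\eta := \eta_0 + \alpha\, \eta^\star$ with $\alpha := 2/\epsilon$. Because $\eta^\star$ is supported on optimal arms, where $\Delta_{f^\star} = 0$, the two allocations share the same regret value, $\sum_{x,a}\eta(x,a)\Delta_{f^\star}(x,a) = \sum_{x,a}\eta_0(x,a)\Delta_{f^\star}(x,a)$, so it suffices to verify the constraint of Theorem~\ref{th:lower-bound-repr} at $\eta$ for every $\phi\in\Phi$ and every sub-optimal $(x,a)$. For $\phi = \phi^\star$ the misspecification term vanishes by realizability, and the constraint reduces to $c^\eta_{x,a}(f^\star,\phi^\star)\geq 2$; this follows from $V_\eta(\phi^\star) = V_{\eta_0}(\phi^\star) + \alpha V_{\eta^\star}(\phi^\star) \succeq V_{\eta_0}(\phi^\star)$ together with the standard Löwner-order monotonicity $\|z\|_{V_\eta(\phi^\star)^\dagger}^2 \leq \|z\|_{V_{\eta_0}(\phi^\star)^\dagger}^2$ of the quadratic form with the pseudo-inverse (valid on the image of $V_{\eta_0}(\phi^\star)$, which contains $z^\star_{\phi^\star}(x,a)$ by feasibility of $\eta_0$), so the CLB constraint on $\eta_0$ transfers to $\eta$. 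For $\phi$ misspecified, I would use the additive decomposition
\[
\|f^\star - F_\phi \theta^\star_\eta(\phi)\|_{D_\eta}^2 = \|f^\star - F_\phi \theta^\star_\eta(\phi)\|_{D_{\eta_0}}^2 + \alpha\, \|f^\star - F_\phi \theta^\star_\eta(\phi)\|_{D_{\eta^\star}}^2,
\]
drop the first (non-negative) summand, and lower bound the second by $\alpha \inf_\theta \|f^\star - F_\phi \theta\|_{D_{\eta^\star}}^2 = \alpha \|f^\star - F_\phi \theta^\star_{\eta^\star}(\phi)\|_{D_{\eta^\star}}^2 \geq \alpha\epsilon = 2$ by hypothesis. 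Hence the misspecification term alone satisfies the constraint at $\phi$, independently of the value of $c^\eta_{x,a}(f^\star,\phi)$.

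The delicate step, and the one that truly uses the hypothesis, is the misspecification bound for $\phi$ misspecified: the hypothesis controls the weighted loss only at $\theta^\star_{\eta^\star}(\phi)$, whereas the quantity appearing in the constraint involves the $\eta$-optimizer $\theta^\star_\eta(\phi)$, and the two differ because $\eta$ and $\eta^\star$ induce different weighted losses. The trick is to first isolate the $D_{\eta^\star}$ part of the loss through the additive split and only then take the infimum over $\theta$, which cleanly activates the hypothesis without needing to compare the two optimizers directly. With this in hand, letting $\eta_0$ run over a minimizing sequence for the CLB problem yields $\cC(f^\star,\cF_\Phi) \leq \cC(f^\star,\cF_{\{\phi^\star\}})$, which together with the matching lower bound gives the claimed equality.
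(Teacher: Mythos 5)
Your proof is correct and follows essentially the same route as the paper: the lower bound via Proposition~\ref{prop:replearn-not-easier-than-clb}, and the upper bound by perturbing a (near-)optimal CLB allocation with $\frac{2}{\epsilon}\eta^\star$, which leaves the regret objective unchanged while the scaling $D_{\tilde\eta} \succeq \frac{2}{\epsilon}D_{\eta^\star}$ forces the misspecification term of every non-realizable $\phi$ above $2$. Your explicit verification that the constraint for $\phi^\star$ survives the perturbation (via Löwner monotonicity of $\|z\|^2_{V^\dagger}$ on the image) and your use of a minimizing sequence are small refinements of details the paper leaves implicit, not a different argument.
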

Intuitively, the condition on $\Phi$ in Proposition \ref{prop:replearn-equal} requires every misspecified representation to have a minimum positive mean square error in fitting $f^\star$ when samples are collected by an optimal policy. This means that a learner is able to detect all misspecified representation by playing optimal actions, i.e., while suffering zero regret, hence making representation learning costless in the long run. 

Consider the following scheme as an example of how a simple strategy can leverage the condition Proposition \ref{prop:replearn-equal}.
Assuming finite $\Phi$, take any algorithm with sub-linear regret on the class $\cF_\Phi$, e.g., any of the algorithms for general function classes~\citep[e.g.,][]{Foster2020beyond,SimchiLevi2020falcon}. Run the algorithm in combination with an elimination rule for misspecified representations (e.g., the one proposed by \cite{banditsrl}) until only one representation remains active. When this happens, switch to playing an instance-optimal algorithm for CLBs on the remaining representation. It is easy to see that, since the starting algorithm has sub-linear regret, it plays optimal actions linearly often and, thus, thanks to the assumption in Proposition \ref{prop:replearn-equal}, it collects sufficient information to eliminate all misspecified representations in a finite time. This means that the algorithm suffers only constant regret for eliminating misspecified representations, while it never discards $\phi^\star$ with high probability. After that, playing an instance-optimal strategy on $\phi^\star$ implies that the total regret is roughly $\cC(f^\star,\cF_{\{\phi^\star\}})\log(T)$ in the long run, which is exactly the same regret we would have by running the instance optimal algorithm on $\phi^\star$ from the very beginning.

\section{Specific Representation Structures}\label{sec:structures}

We now discuss some of the specific representation learning problems studied in the literature, while providing additional insights on their instance-dependent complexity.

\subsection{Trivial Representations}

It is well known that the realizability of $\cF_\Phi$ (Assumption \ref{asm:realizability}) is crucial for efficient learning, as sub-linear regret may be impossible otherwise~\citep{Lattimore2020learninggood}. In practice, when little prior knowledge about the reward function $f^\star$ is available to design a suitable class $\cF_\Phi$, a common technique is to reduce the approximation error by expanding $\Phi$, in the hope of ensuring realizability.

When the context-arm pairs are finite, a trivial realizable representation can always be constructed as the canonical basis of $\bR^{XA}$. Let $\{(x_i,a_i)\}_{i=1}^N$ be an enumeration of all $N=XA$ context-arm pairs. Then, we can define the $XA$-dimensional features $\bar{\phi}$ as $\bar{\phi}_i(x,a) := \indi{x=x_i,a=a_i}$. It is easy to see that $f^\star = F_{\bar{\phi}}\theta$ for $\theta_i = f^\star(x_i,a_i)$.

A natural idea to build a class for representation learning is thus to start from a set $\Phi$ of ``good'' features (e.g., low dimensional or with nice spectral properties) and then add the trivial representation $\bar{\phi}$ so as to ensure realizability. The hope is that a good algorithm would still be able to leverage the ``good'' representations to achieve better results whenever possible. The following result shows that this is \emph{impossible}: every uniformly good algorithm must pay the complexity of learning without any prior knowledge on $f^\star$ as far as $\bar{\phi}$ is in the set of candidate representations.
\begin{proposition}\label{prop:replearn-trivial-rep}
    Let $\Phi$ be any set of representations (not necessarily realizable for $f^\star$). Then,
    \begin{align*}
        \cC(f^\star,\cF_{\Phi \cup \{\bar{\phi}\}})  = \sum_{x\in\cX}\sum_{a\neq \pi^\star_{f^\star}(x)}\frac{2}{\Delta_{f^\star}(x,a)}. 
    \end{align*}
\end{proposition}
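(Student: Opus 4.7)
The plan is to establish both bounds on $\cC(f^\star,\cF_{\Phi\cup\{\bar{\phi}\}})$. The lower bound is immediate: since $\bar{\phi}$ is the canonical basis of $\bR^{XA}$, we have $f^\star=F_{\bar{\phi}}\theta$ with $\theta_i=f^\star(x_i,a_i)$, so $\bar{\phi}$ is realizable for $f^\star$ and Proposition~\ref{prop:replearn-not-easier-than-clb} yields $\cC(f^\star,\cF_{\Phi\cup\{\bar{\phi}\}})\geq \cC(f^\star,\cF_{\{\bar{\phi}\}})$. To compute the right-hand side I apply Corollary~\ref{cor:linear}: since $V_\eta(\bar{\phi})=D_\eta$ is diagonal, a direct calculation gives $\|z^\star_{\bar{\phi}}(x,a)\|^2_{V_\eta(\bar{\phi})^{-1}}=1/\eta(x,\pi^\star_{f^\star}(x))+1/\eta(x,a)$, so the constraint reduces to $1/\eta(x,\pi^\star_{f^\star}(x))+1/\eta(x,a)\leq \Delta_{f^\star}(x,a)^2/2$ for every sub-optimal $(x,a)$. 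Since $\eta(x,\pi^\star_{f^\star}(x))$ does not appear in the objective, letting it tend to $+\infty$ tightens the constraint to $\eta(x,a)\geq 2/\Delta_{f^\star}(x,a)^2$, and the infimum of $\sum_{x,a\neq\pi^\star_{f^\star}(x)}\eta(x,a)\Delta_{f^\star}(x,a)$ is then exactly $\sum_{x,a\neq\pi^\star_{f^\star}(x)}2/\Delta_{f^\star}(x,a)$.

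For the matching upper bound I exhibit an allocation that attains this value and is feasible in Theorem~\ref{th:lower-bound-repr} for $\Phi\cup\{\bar{\phi}\}$. Fix $\delta>0$ and define $\eta^{M,\delta}(x,\pi^\star_{f^\star}(x))=M$ and $\eta^{M,\delta}(x,a)=2(1+\delta)/\Delta_{f^\star}(x,a)^2$ for sub-optimal $(x,a)$. For $M$ large enough (as a function of $\delta$ and of the smallest positive gap), the $\bar{\phi}$-constraint $c^{\eta^{M,\delta}}_{x,a}(f^\star,\bar{\phi})\geq 2$ holds strictly for every sub-optimal $(x,a)$, while the objective equals $(1+\delta)\sum_{x,a\neq\pi^\star_{f^\star}(x)}2/\Delta_{f^\star}(x,a)$. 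To verify that the constraints for the remaining $\phi\in\Phi$ are also satisfied I would invoke the identity
\[
\|f^\star-F_\phi\theta^\star_\eta(\phi)\|^2_{D_\eta}+c^\eta_{x,a}(f^\star,\phi)=\inf_{f\in\cF_{\{\phi\}},\,\pi^\star_f(x)=a}\|f^\star-f\|^2_{D_\eta},
\]
which is the content of the closed-form derivation underlying Theorem~\ref{th:lower-bound-repr}: the left-hand side is the minimum $\eta$-weighted squared distance from $f^\star$ to any alternative reward in $\cF_{\{\phi\}}$ that makes $a$ optimal at $x$. Since $\cF_{\{\phi\}}\subseteq \bR^{XA}=\cF_{\{\bar{\phi}\}}$ for every $\phi$, this infimum is bounded below by the analogous quantity $c^\eta_{x,a}(f^\star,\bar{\phi})$, which is already $\geq 2$ by construction. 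Hence the $\bar{\phi}$-constraint subsumes every other constraint in Theorem~\ref{th:lower-bound-repr} and $\eta^{M,\delta}$ is feasible; letting $\delta\downarrow 0$ yields the matching upper bound.

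The main technical point is the identity displayed above, linking the explicit form of Theorem~\ref{th:lower-bound-repr} with the ``closest alternative reward'' formulation: once it is available, the inclusion $\cF_{\{\phi\}}\subseteq\cF_{\{\bar{\phi}\}}$ immediately renders the constraints from all other $\phi\in\Phi$ redundant. This identity is standard and already implicit in the proof of Theorem~\ref{th:lower-bound-repr} in Appendix~\ref{app:lower-bound}. A slicker alternative that avoids it is to appeal to the general lower bound of~\citet{ok2018exploration}: since it depends only on the realized function class $\cF$, and $\cF_{\Phi\cup\{\bar{\phi}\}}=\cF_{\{\bar{\phi}\}}=\bR^{XA}$, the two complexities are trivially equal, and only the explicit computation for $\cF_{\{\bar{\phi}\}}$ needs to be carried out.
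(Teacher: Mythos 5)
Your proposal is correct, but the main argument you develop is a longer route than the one the paper takes; amusingly, the ``slicker alternative'' you mention in your last sentence \emph{is} the paper's proof. The paper simply observes that $\cF_{\{\bar{\phi}\}}$ is already the set of \emph{all} functions $\cX\times\cA\to\bR$, hence $\cF_{\{\bar{\phi}\}}\subseteq\cF_{\Phi\cup\{\bar{\phi}\}}\subseteq\cF_{\mathrm{uns}}$ forces $\cF_{\Phi\cup\{\bar{\phi}\}}=\cF_{\mathrm{uns}}$, and then invokes Theorem~\ref{th:lower-bound-uns}, which computes $\cC(f^\star,\cF_{\mathrm{uns}})=\sum_{x}\sum_{a\neq\pi^\star_{f^\star}(x)}2/\Delta_{f^\star}(x,a)$ directly from the general form of Theorem~\ref{th:lower-bound}. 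Since the complexity depends on $\Phi$ only through the induced function class, the extra representations in $\Phi$ disappear immediately and no feasibility check is needed. Your main argument instead works inside the explicit optimization problem of Theorem~\ref{th:lower-bound-repr}: the lower bound via Proposition~\ref{prop:replearn-not-easier-than-clb} plus the computation $V_\eta(\bar{\phi})=D_\eta$ in Corollary~\ref{cor:linear} is correct, and the upper bound via the near-optimal allocation $\eta^{M,\delta}$ together with the observation that $\cF_{\{\phi\}}\subseteq\cF_{\{\bar{\phi}\}}$ makes the $\bar{\phi}$-constraint dominate all others is also sound. What your route buys is an explicit feasible allocation and a concrete illustration of \emph{why} the other constraints are redundant; what it costs is having to re-derive the half-space identity (note a small imprecision there: the relevant infimum in the paper's derivation is over the half-space $\{f\in\cF_{\{\phi\}}: f(x,a)\geq f(x,\pi^\star_{f^\star}(x))\}$, not over $\{f:\pi^\star_f(x)=a\}$, though monotonicity in the function class holds either way). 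Both arguments are valid; the paper's is essentially a two-line set-equality observation.
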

As already noted in the discussion of Proposition \ref{prop:replearn-uns}, the complexity $\cC(f^\star,\cF_{\Phi \cup \{\bar{\phi}\}})$ of Proposition \ref{prop:replearn-trivial-rep} is equivalent to the complexity of learning $f^\star$ without any prior knowledge. Hence, no uniformly good algorithm can leverage the representations in $\Phi$ when $\bar{\phi}$ is also considered, no matter how good they are. For instance, the set $\Phi$ could even be a singleton $\{\phi^\star\}$ containing a realizable representation of dimension $d \ll XA$, and still representation learning over the set $\{\phi^\star, \bar{\phi}\}$ so as to achieve regret scaling with the properties of $\phi^\star$ is impossible. A similar result was derived by \cite{reda2021dealing}, who showed that learning an instance $f^\star$ which is known to be approximately linear in given features $\phi$ without knowing the amount of misspecification is as complex as learning $f^\star$ without any prior knowledge.

\subsection{Nested Features}

A popular design choice for representation learning is to be build a set of \emph{nested features} \citep{Foster2019nested,pacchiano2020stochcorral,CutkoskyDDGPP21,ghosh2021problem} $\Phi = \{\phi_1,\dots,\phi_N\}$ of increasing dimension (i.e., such that $d_i := d_{\phi_i} < d_{i+1} := d_{\phi_{i+1}}$ for all $i\in[N-1]$) that satisfy the following property: for all $i\in[N-1]$ and $(x,a)$, the first $d_i$ components of $\phi_{i+1}(x,a)$ are equal to $\phi_i(x,a)$. Let $i^\star\in[N]$ be such that $\phi_{i^\star}$ is the realizable representation of smallest dimension (which exists by Assumption \ref{asm:realizability}). The nestedness implies that $\phi_i$ is realizable for all $i \geq i^\star$.

    Several approaches have been proposed for this setting. While \citet{Foster2019nested} designed a strategy with regret 
    $\widetilde{O}(\sqrt{d_{i^\star}T} + T^{3/4})$,
    model-selection algorithms~\citep[e.g.,][]{pacchiano2020stochcorral,CutkoskyDDGPP21} achieve regret of order $\widetilde{O}(poly(N)\sqrt{d_{i^\star}T})$. Interestingly, \citet{ghosh2021problem} obtained $\widetilde{O}(\sqrt{d_{i^\star}T})$ regret, that is of the same order as the worst-case regret achievable by, e.g., LinUCB on the (unknown) smallest realizable representation $\phi_{i^\star}$.

We show that things are considerably more complex from an instance-dependent perspective. 
\begin{proposition}\label{prop:nested-no-adaptivity}
    Let $\Phi$ be a set of $N$ nested features and $f^\star\in\cF_{\Phi}$. Then,
    \begin{align}
        \cC(f^\star,\cF_\Phi) = \cC(f^\star, \cF_{\{\phi_N\}}).
    \end{align}
\end{proposition}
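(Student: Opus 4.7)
The plan is to reduce the claim to the observation that the nested structure collapses the function class, i.e.\ $\cF_\Phi = \cF_{\{\phi_N\}}$. Once this is established, the desired equality follows because the complexity in Theorem~\ref{th:lower-bound-repr} depends on the representation learning problem only through $\cF_\Phi$ (equivalently, through the set of alternative reward functions that a uniformly good algorithm must distinguish from $f^\star$).

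First I would verify $\cF_\Phi = \cF_{\{\phi_N\}}$. By the nestedness property, $\phi_i(x,a)$ coincides with the first $d_i$ components of $\phi_N(x,a)$ for every $i\leq N$ and every $(x,a)$, so the columns of $F_{\phi_i}$ form the first $d_i$ columns of $F_{\phi_N}$. Hence any function $F_{\phi_i}\theta_i$ with $\theta_i\in\bR^{d_i}$ can be rewritten as $F_{\phi_N}\tilde\theta$ with $\tilde\theta = (\theta_i^\transp, 0,\dots,0)^\transp \in \bR^{d_N}$, giving $\cF_{\{\phi_i\}}\subseteq \cF_{\{\phi_N\}}$. Taking the union over $i$ and using $\phi_N\in\Phi$ yields $\cF_\Phi = \cF_{\{\phi_N\}}$.

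Next I would show that the constraint in Theorem~\ref{th:lower-bound-repr} depends on $\Phi$ only through $\cF_\Phi$. The key sub-step is the identity
\begin{equation*}
\|f^\star - F_\phi \theta^\star_\eta(\phi)\|_{D_\eta}^2 + c_{x,a}^\eta(f^\star, \phi) \;=\; \inf_{\theta' : z^\star_\phi(x,a)^\transp \theta' \leq 0} \|f^\star - F_\phi \theta'\|_{D_\eta}^2,
\end{equation*}
valid for every $\phi\in\Phi$, $x\in\cX$, and $a\neq \pi^\star_{f^\star}(x)$. It follows from the orthogonal decomposition $\|f^\star - F_\phi \theta'\|_{D_\eta}^2 = \|f^\star - F_\phi \theta^\star_\eta(\phi)\|_{D_\eta}^2 + \|\theta^\star_\eta(\phi) - \theta'\|_{V_\eta(\phi)}^2$ (whose cross-term vanishes because $F_\phi^\transp D_\eta f^\star \in \im(V_\eta(\phi))$ by construction) and a case analysis on the sign of $z^\star_\phi(x,a)^\transp \theta^\star_\eta(\phi)$ and on whether $z^\star_\phi(x,a)\in\im(V_\eta(\phi))$, which reproduces the three pieces defining $c_{x,a}^\eta$. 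Taking $\inf_{\phi\in\Phi}\min_{x,a\neq\pi^\star_{f^\star}(x)}$ on both sides, the constraint in Theorem~\ref{th:lower-bound-repr} rewrites as $\|f^\star-f'\|_{D_\eta}^2 \geq 2$ for every alternative $f'\in\cF_\Phi$ that makes some currently sub-optimal action at least as good as the optimal one at some context.

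Since both the objective $\sum_{x,a}\eta(x,a)\Delta_{f^\star}(x,a)$ and this reformulated constraint depend on $\Phi$ only through $\cF_\Phi$, and since $\cF_\Phi = \cF_{\{\phi_N\}}$, the optimization problems defining $\cC(f^\star,\cF_\Phi)$ and $\cC(f^\star,\cF_{\{\phi_N\}})$ coincide, yielding the claim. The only mildly delicate point is the degenerate sub-case $z^\star_\phi(x,a)\notin \im(V_\eta(\phi))$, where one must note that moving $\theta'$ along a direction of $\ker(V_\eta(\phi))$ non-orthogonal to $z^\star_\phi(x,a)$ preserves $\|\theta^\star_\eta(\phi)-\theta'\|_{V_\eta(\phi)}^2$ while activating the constraint; this is precisely why $c_{x,a}^\eta$ is set to zero there. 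This bookkeeping is the only nontrivial step, and it reuses the same computations that underlie Theorem~\ref{th:lower-bound-repr}, so no genuinely new difficulty arises.
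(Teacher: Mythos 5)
Your proof is correct, but it takes a genuinely different route from the paper's for the upper-bound direction. The paper obtains $\cC(f^\star,\cF_\Phi)\geq\cC(f^\star,\cF_{\{\phi_N\}})$ directly from Proposition~\ref{prop:replearn-not-easier-than-clb} (since $\phi_N$ is realizable) and then closes the gap with an \emph{achievability} argument: an instance-optimal CLB algorithm run on $\phi_N$ alone is uniformly good on all of $\cF_\Phi$ (precisely because, as you observe, nestedness gives $\cF_\Phi=\cF_{\{\phi_N\}}$) and attains regret asymptotically $\cC(f^\star,\cF_{\{\phi_N\}})\log(T)$, which by the lower bound forces $\cC(f^\star,\cF_\Phi)\leq\cC(f^\star,\cF_{\{\phi_N\}})$. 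You instead stay entirely inside the lower-bound optimization problem: you prove the set identity $\cF_\Phi=\cF_{\{\phi_N\}}$ (correctly, via zero-padding of the parameter) and then argue that $\cC(f^\star,\cdot)$ depends only on the function class and not on its parametrization by $\Phi$. Both arguments hinge on the same key fact, namely $\cF_{\{\phi_i\}}\subseteq\cF_{\{\phi_N\}}$. Your second step is sound --- the half-space identity you state is exactly Lemma~\ref{lem:inf-general-halfspace} applied with $z=-z^\star_\phi(x,a)$, and the resulting reformulation is Corollary~\ref{cor:replearn-lb-implicit} --- but it re-derives machinery the paper already provides: once $\cF_\Phi=\cF_{\{\phi_N\}}$ is established, applying Proposition~\ref{prop:monotone} in both directions gives the equality in one line. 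What your route buys is self-containedness on the information-theoretic side: it never needs to invoke the existence of an asymptotically instance-optimal algorithm (Proposition~\ref{prop:lb-matchable}). What the paper's route buys is brevity given that cited machinery.
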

This result claims that representation learning on a set of nested features $\Phi$ is as difficult as a CLB problem with the realizable representation of largest dimension ($\phi_N$). This is somehow surprising since it essentially states that representation learning over nested features is useless, and one may simply learn with the highest dimensional representation (which is known to be realizable) from the very beginning. The intuition is that, while the learner knows $\phi_N$ to be realizable for any reward function (by assumption), it does not know whether this is true for $\phi_{N-1},\phi_{N-2},$ etc. Even if, say, $\phi_{N-1}$ is realizable for $f^\star$, there might be another reward function $f'$ where this is not true. Any good algorithm must explore sufficiently to eventually discriminate between $f^\star$ and $f'$ in the long run, and it turns out that the complexity for doing so is exactly $\cC(f^\star, \cF_{\{\phi_N\}})$, hence making any finer level of adaptivity impossible. Moreover, we prove in Appendix \ref{app:nested} that 
there exist problems with $d_{i^\star} \ll d_N$ where $\cC(f^\star, \cF_{\{\phi_N\}}) \gtrsim d_N$ but $\cC(f^\star, \cF_{\{\phi_{i^\star}\}}) \lesssim d_{i^\star}$. This implies that, in the worst-case, any uniformly good algorithm must suffer a dependence on the dimensionality of the largest representation, regardless of the fact that a smaller realizable representation is nested into it.

Note that this does not contradict existing results for model-selection \citep[e.g.,][]{Foster2019nested,ghosh2021problem,CutkoskyDDGPP21}. In fact, while they achieve a dependence on the worst-case regret of the best representation $\phi_{i^\star}$, they also feature some representation learning cost which dominates in the long run. For instance, the bound of \cite{ghosh2021problem} has a $O(d_N^2\log(T))$ additive term. Therefore, while some gains are possible in the small $T$ regime (e.g., scaling with $\sqrt{d_{i^\star}T}$ instead of $\sqrt{d_N T}$), in the long run the logarithmic term dominates and $\cC(f^\star, \cF_{\{\phi_N\}})\log(T)$ becomes the optimal complexity.

\subsection{HLS Representations and Sub-logarithmic Regret}

\citet{hao2019adaptive} and \citet{Papini2021leader} recently showed that in a CLB $(f^\star$, $\cF_{\{\phi\}})$ it is possible to achieve constant regret when the given realizable representation $\phi$ satisfies a certain spectral condition. 
\begin{definition}[HLS representation]\label{def:weak-hls}
    A representation $\phi$ is HLS for an instance $f^\star$ if, for all $x\in\cX$ and $a \neq \pi_{f^\star}^\star(x)$, $\phi(x,a) \in \spn(\{\phi(x,\pi_{f^\star}^\star(x))\}_{x\in\cX})$.\footnote{The original definition \citep{hao2019adaptive} requires the stronger condition $\spn(\{\phi(x,\pi_{f^\star}^\star(x))\}_{x\in\cX}) = \bR^{d_\phi}$. This is because the authors assumed that $\spn(\{\phi(x,a)\}_{x,a}) = \bR^{d_\phi}$. Here we state a generalization that works even without such an assumption.}
\end{definition}
Intuitively, a representation satisfying this property allows exploring the full feature space by playing an optimal policy. That is, playing optimal actions allows refining the reward estimates at all $(x,a)$, even those that are not played. 
Interestingly, \cite{Papini2021leader} showed that this condition is both necessary and sufficient to achieve constant regret.
\begin{theorem}[\cite{Papini2021leader}]
    Constant regret is achievable on an instance $f^\star$ if, and only if, the learner is provided with a HLS realizable representation $\phi^\star$.
\end{theorem}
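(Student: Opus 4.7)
The plan is to prove the two directions of the equivalence by reducing them, via Theorem \ref{th:lower-bound-repr} and Corollary \ref{cor:linear}, to the characterization of the instance-dependent complexity $\cC(f^\star, \cF_{\{\phi^\star\}})$. I would first show that HLS is equivalent to $\cC(f^\star, \cF_{\{\phi^\star\}}) = 0$, and then separately link vanishing complexity to constant regret achievability.

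For the HLS $\Rightarrow \cC = 0$ half, I consider the scaled greedy allocation $\eta_\alpha(x,a) := \alpha\, \indi{a = \pi^\star_{f^\star}(x)}$ for $\alpha > 0$. Its design matrix $V_{\eta_\alpha}(\phi^\star)$ has image equal to $\spn(\{\phi^\star(x, \pi^\star_{f^\star}(x))\}_{x\in\cX})$, so the HLS property ensures every feature gap $z^\star_{\phi^\star}(x,a)$ lies in $\im(V_{\eta_\alpha}(\phi^\star))$ and, by realizability, the best fit satisfies $z^\star_{\phi^\star}(x,a)^\transp \theta^\star_{\eta_\alpha}(\phi^\star) = \Delta_{f^\star}(x,a) > 0$, placing us in the third case of $c^\eta_{x,a}$ in Theorem \ref{th:lower-bound-repr}. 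Because $V_{\eta_\alpha}(\phi^\star)^\dagger$ scales as $1/\alpha$, taking $\alpha$ large enough makes every information constraint satisfied, while the objective $\sum_{x,a} \eta_\alpha(x,a)\Delta_{f^\star}(x,a)$ equals zero since only optimal arms carry mass, giving $\cC(f^\star, \cF_{\{\phi^\star\}}) = 0$. Conversely, if $\phi^\star$ is not HLS there exists a sub-optimal pair $(x_0, a_0)$ with $\phi^\star(x_0, a_0) \notin \spn(\{\phi^\star(x, \pi^\star_{f^\star}(x))\}_x)$, and hence $z^\star_{\phi^\star}(x_0, a_0) \notin \spn(\{\phi^\star(x, \pi^\star_{f^\star}(x))\}_x)$ as well. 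Since $\phi^\star$ is realizable, the misspecification term in the constraint vanishes, and $c^{\eta}_{x_0, a_0} = 0$ whenever $\eta$ is supported only on optimal actions. Any feasible $\eta$ must therefore place strictly positive mass on some sub-optimal context-action pair whose feature expands the span to include $z^\star_{\phi^\star}(x_0, a_0)$, and a compactness argument combined with the quantitative variance bound $c^{\eta}_{x_0,a_0} \geq 2$ yields a strictly positive lower bound on $\sum_{x,a} \eta(x,a)\Delta_{f^\star}(x,a)$, giving $\cC(f^\star, \cF_{\{\phi^\star\}}) > 0$.

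The necessity direction of the theorem is then immediate: if constant regret is achievable on $f^\star$, then $\liminf_T \bE_{f^\star}^{\mathfrak{A}}[R_T(f^\star)]/\log(T) = 0$, which by Theorem \ref{th:lower-bound-repr} forces $\cC(f^\star, \cF_{\{\phi^\star\}}) = 0$ and thus HLS. For sufficiency I would rely on the LEADER-type analysis of \cite{Papini2021leader}: run an optimistic linear algorithm and exploit the fact that, because every sub-optimal feature gap is spanned by the optimal features under HLS, each play of an optimal action in context $x$ simultaneously shrinks the confidence widths at \emph{all} sub-optimal arms in $x$. A standard concentration and elimination argument then shows that, after a random but almost-surely finite stopping time $\tau$, the optimistic index of every sub-optimal arm drops below the true optimal reward in its context, so only optimal arms are played thereafter; controlling the tail of $\tau$ yields bounded expected regret.

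The main obstacle I anticipate is the quantitative lower bound in the ``not HLS $\Rightarrow \cC > 0$'' step: translating the geometric span failure at $(x_0, a_0)$ into an explicit strictly positive lower bound on the regret objective requires tracking how much sub-optimal allocation is needed to push $\|z^\star_{\phi^\star}(x_0, a_0)\|^2_{V_\eta(\phi^\star)^\dagger}$ below $\Delta_{f^\star}(x_0, a_0)^2/2$ when the optimal-action subspace alone does not reach $z^\star_{\phi^\star}(x_0, a_0)$. The constant-regret construction is also delicate because $\cC = 0$ only guarantees sub-logarithmic regret on its own: upgrading this to bounded expected regret requires a finite-time analysis of both the probability of late identification of the optimal actions and the summability of residual excursions that momentarily play sub-optimal arms.
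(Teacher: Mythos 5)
This statement is quoted verbatim from \cite{Papini2021leader}; the paper you are working from does not prove it, so there is no internal proof to compare against. The closest in-paper relative is Proposition~\ref{prop:sublog-nec-suff-explicit} and its proof, and the core of your plan --- showing that HLS is equivalent to $\cC(f^\star,\cF_{\{\phi^\star\}})=0$ --- essentially reproduces that argument specialized to a singleton realizable $\Phi$: your scaled greedy allocation $\eta_\alpha$ is exactly the paper's rescaling $M\eta^\star$, and realizability makes condition~1 of Proposition~\ref{prop:sublog-nec-suff-explicit} automatic so that only the span condition (i.e., HLS) remains. For the converse half of that equivalence, note that the paper's route is cleaner than the one you sketch: it argues by contradiction that if $\cC=0$ then some rescaling $M\eta^\star$ must be feasible, yet rescaling changes neither $\im(V_{\eta^\star}(\phi^\star))$ nor $\theta^\star_{\eta^\star}(\phi^\star)$, so the constraint at the offending pair $(x_0,a_0)$ evaluates to $0<1$. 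This sidesteps the quantitative step you flag as your main obstacle (bounding how much sub-optimal mass is needed to bring $\|z^\star_{\phi^\star}(x_0,a_0)\|^2_{V_\eta(\phi^\star)^\dagger}$ below $\Delta_{f^\star}(x_0,a_0)^2/2$), although that step is also doable via $\|z\|^2_{V_\eta^\dagger}\geq (u^\transp z)^2/(u^\transp V_\eta u)$ with $u$ the component of $z$ orthogonal to the optimal-feature span.

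The genuine gaps are in tying $\cC=0$ to \emph{constant} regret. For necessity, your inference ``constant regret $\Rightarrow \liminf_T R_T/\log T=0 \Rightarrow \cC=0$'' only applies to uniformly good algorithms (the trivial policy that always plays $\pi^\star_{f^\star}$ has zero regret on $f^\star$ alone), so the statement must be read with that qualification; with it, the step is fine. For sufficiency, $\cC(f^\star,\cF_{\{\phi^\star\}})=0$ does \emph{not} yield constant regret: as the paper's own footnote in the HLS section notes, the generic lower-bound-matching algorithms of Section~\ref{sec:lb-achievable} only give $O(\log\log T)$, and constant regret requires the specific spectral analysis of \cite{Papini2021leader} (under HLS the minimum eigenvalue of the design matrix restricted to the relevant span grows linearly along the greedy policy, so all confidence widths shrink at rate $1/\sqrt{t}$ and a finite stopping time with summable tail exists). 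You correctly acknowledge both points, but as written the sufficiency direction is a citation plus an informal sketch rather than a proof; since the theorem is itself an attributed result, that is acceptable as scholarship but does not constitute an independent derivation.
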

When a realizable HLS representation $\phi^\star$ is not known a-priori and one must perform representation learning, it is natural to ask whether such a strong result can still be achieved. \citet{Papini2021leader,banditsrl} showed that this is indeed the case under strong conditions on $\Phi$: either \textbf{1)} all the representation are realizable or \textbf{2)} misspecified representations are detectable by any policy (i.e., such that $\min_{\theta\in\bR^{d_\phi}}\|f^\star - F_\phi\theta\|_{D_{\eta}}^2 > 0$ for any $\eta$).

We now provide a necessary and sufficient condition on the representations $\Phi$ to allow $\cC(f^\star, \cF_\Phi) = 0$. This implies that, whenever such a condition is not met, any form of sub-logarithmic regret (e.g., constant) is \emph{impossible} for \emph{any} uniformly good algorithm. On the other hand, when the condition is met, sub-logarithmic regret is achievable (and it is achieved by the algorithm mentioned in Section \ref{sec:lb-achievable}).\footnote{The best algorithm mentioned in Section \ref{sec:lb-achievable} achieves $O(\log\log(T))$ regret when $\Phi$ satisfies Proposition \ref{prop:sublog-nec-suff-explicit}. How to achieve constant regret in this setting remains an open question.}
\begin{proposition}\label{prop:sublog-nec-suff-explicit}
    A necessary and sufficient condition for $\cC(f^\star, \cF_\Phi) = 0$ is that the following two properties hold for any $\phi\in\Phi$ such that $\min_{\theta\in\bR^{d_\phi}}\|f^\star - F_\phi\theta\|_{D_{\eta^\star}}^2 = 0$ and for all $x\in\cX,a\neq\pi_{f^\star}^\star(x)$:
    \begin{enumerate}[topsep=0pt,parsep=0pt,partopsep=0pt]
        \item $z^\star_\phi(x,a)^\transp \theta^\star_{\eta^\star}(\phi) > 0$;
        \item $\phi(x,a)\in\im(V_{\eta^\star}(\phi))$.
    \end{enumerate}
\end{proposition}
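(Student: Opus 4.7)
The plan is to exploit the scale invariance of $\theta^\star_\eta(\phi)$ and $\im(V_\eta(\phi))$ under the map $\eta \mapsto c\eta$, applied to the ``zero-cost'' allocation $\eta = c\cdot\eta^\star$. Since $\eta^\star$ is supported only on optimal actions we have $\sum_{x,a} c\eta^\star(x,a)\Delta_{f^\star}(x,a) = 0$, so if $c\eta^\star$ can be made feasible for the constraint of Theorem~\ref{th:lower-bound-repr} this directly witnesses $\cC(f^\star,\cF_\Phi) = 0$. The key identities are $\theta^\star_{c\eta^\star}(\phi) = \theta^\star_{\eta^\star}(\phi)$, $V_{c\eta^\star}(\phi) = c\,V_{\eta^\star}(\phi)$ with $\im(V_{c\eta^\star}(\phi)) = \im(V_{\eta^\star}(\phi))$, and the resulting linear scalings $\|f^\star - F_\phi\theta^\star_{c\eta^\star}(\phi)\|_{D_{c\eta^\star}}^2 = c\,\|f^\star - F_\phi\theta^\star_{\eta^\star}(\phi)\|_{D_{\eta^\star}}^2$ and $c_{x,a}^{c\eta^\star}(f^\star,\phi) = c\cdot c_{x,a}^{\eta^\star}(f^\star,\phi)$, where the piecewise branch selecting $c_{x,a}$ does not change under scaling (neither the sign of $z^\star_\phi(x,a)^\transp\theta^\star_\eta(\phi)$ nor the membership $z^\star_\phi(x,a) \in \im(V_\eta(\phi))$ is affected by multiplying $\eta$ by $c$).

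For sufficiency, under conditions 1 and 2 I would check that each $\phi \in \Phi$ yields a constraint term that diverges with $c$. If $\min_\theta\|f^\star - F_\phi\theta\|_{D_{\eta^\star}}^2 > 0$, then the misspecification term alone scales linearly in $c$. Otherwise $\phi$ fits $f^\star$ perfectly under $\eta^\star$, and I would combine condition 2 with the trivial inclusion $\phi(x,\pi_{f^\star}^\star(x)) \in \im(V_{\eta^\star}(\phi))$ to get $z^\star_\phi(x,a) \in \im(V_{\eta^\star}(\phi))$, and condition 1 to get $z^\star_\phi(x,a)^\transp\theta^\star_{\eta^\star}(\phi) > 0$; therefore $c_{x,a}^{\eta^\star}(f^\star,\phi) > 0$ and scales linearly with $c$. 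Taking $c$ large enough then makes the minimum of the constraint terms exceed $2$, yielding a feasible allocation of zero cost (for infinite $\Phi$ one additionally needs a uniform lower bound on the per-$\phi$ values, which I would handle by taking the infimum of the relevant strictly positive quantities over $\Phi$).

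For necessity I would argue by contrapositive: suppose some $\phi^\dagger \in \Phi$ fits $f^\star$ perfectly under $\eta^\star$ but violates condition 1 or 2 at a sub-optimal pair $(x,a)$. Consider any sequence of feasible allocations $\eta_n$ with cost tending to $0$, and decompose $\eta_n = \eta_n^0 + \tilde\eta_n$ with $\eta_n^0$ supported on optimal actions. The vanishing cost forces $\tilde\eta_n \to 0$ componentwise. After normalizing via scale invariance so that $\|\eta_n^0\|_1$ stays bounded away from zero, one shows $\theta^\star_{\eta_n}(\phi^\dagger) \to \theta^\star_{\eta^\star}(\phi^\dagger)$ and that the misspecification term $\|f^\star - F_{\phi^\dagger}\theta^\star_{\eta_n}(\phi^\dagger)\|_{D_{\eta_n}}^2$ vanishes because $\phi^\dagger$ fits on the optimal support and $\tilde\eta_n\to 0$. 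The failing condition then kills $c_{x,a}^{\eta_n}(f^\star,\phi^\dagger)$ in the limit: either the sign of $z^\star_{\phi^\dagger}(x,a)^\transp\theta^\star_{\eta_n}(\phi^\dagger)$ is non-positive (condition 1 failing, numerator $\leq 0$), or the Mahalanobis denominator $\|z^\star_{\phi^\dagger}(x,a)\|_{V_{\eta_n}(\phi^\dagger)^\dagger}^2$ blows up because the missing direction in $\im(V_{\eta^\star}(\phi^\dagger))$ is supplied only through the vanishing $\tilde\eta_n$ (condition 2 failing). In both cases the $(\phi^\dagger,x,a)$ constraint term tends to a value strictly below $2$, contradicting feasibility for large $n$.

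The main obstacle is making the necessity analysis quantitative. When condition 2 fails, one must show that the rate at which $\|z^\star_{\phi^\dagger}(x,a)\|_{V_{\eta_n}(\phi^\dagger)^\dagger}^2$ diverges strictly dominates any possible growth of the numerator, which requires a careful perturbation analysis of the pseudo-inverse along the ``new'' feature directions contributed by $\tilde\eta_n$ (a block Schur-complement/Woodbury expansion should do). The borderline case $z^\star_{\phi^\dagger}(x,a)^\transp\theta^\star_{\eta^\star}(\phi^\dagger) = 0$ in the failure of condition 1 is also delicate, since the sign can flip under arbitrarily small perturbations; one must verify that, even when the sign becomes positive along the sequence, the magnitude stays small enough that $c_{x,a}^{\eta_n}(f^\star,\phi^\dagger)$ cannot compensate for the vanishing misspecification.
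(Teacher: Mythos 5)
Your sufficiency argument is correct and is essentially the paper's: rescale $\eta^\star$ by a large constant, use that $\theta^\star_{M\eta^\star}(\phi)=\theta^\star_{\eta^\star}(\phi)$ and $\im(V_{M\eta^\star}(\phi))=\im(V_{\eta^\star}(\phi))$, and that both the misspecification term and $c_{x,a}^{\eta}$ scale linearly in $M$ along this ray. One caveat: your fix for infinite $\Phi$ (``take the infimum of the relevant strictly positive quantities over $\Phi$'') does not actually close that case, since an infimum of strictly positive quantities over an infinite set can be zero; a genuinely uniform version of conditions 1--2 would be needed there (the paper glosses over this too).

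The necessity half is where there is a real gap, and you have chosen a much harder route than needed. The paper reduces to allocations of the form $M\eta^\star$ (any zero-cost allocation is dominated entrywise by some $M\eta^\star$, and the constraint functions are monotone in $\eta$) and then uses scale invariance to conclude $\cI_{M\eta^\star}(f^\star,\phi,x,a)=\cI_{\eta^\star}(f^\star,\phi,x,a)=0$ for all $M$. You instead propose tracking a sequence of feasible allocations $\eta_n$ with vanishing cost and performing a perturbation analysis of $V_{\eta_n}(\phi)^\dagger$ along the vanishing sub-optimal directions; as you yourself note, this leaves the divergence-rate comparison and the borderline case $z^\star_\phi(x,a)^\transp\theta^\star_{\eta^\star}(\phi)=0$ unresolved, and I do not think the closed-form expression for $c^{\eta_n}_{x,a}$ is the right object to perturb. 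The missing idea is to work with the \emph{variational} form of the constraint, $\cI_{\eta}(f^\star,\phi,x,a)=\min_\theta\{\tfrac12\|f^\star-F_\phi\theta\|_{D_\eta}^2 : (\phi(x,a)-\phi(x,\pi^\star_{f^\star}(x)))^\transp\theta\geq 0\}$, and exhibit a single witness $\theta_0$, independent of $n$. If condition 1 fails, take $\theta_0=\theta^\star_{\eta^\star}(\phi)$; if condition 2 fails, take $\theta_0=\theta^\star_{\eta^\star}(\phi)+t u$ with $u\in\ker(V_{\eta^\star}(\phi))$ non-orthogonal to $z^\star_\phi(x,a)$ and $t$ a fixed finite scalar chosen to satisfy the half-space constraint. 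In both cases $\theta_0$ satisfies the constraint and fits $f^\star$ exactly on the support of $\eta^\star$, so for \emph{every} allocation $\eta$ one gets $\cI_\eta(f^\star,\phi,x,a)\leq \tfrac{B}{2}\sum_{x',a'\neq\pi^\star_{f^\star}(x')}\eta(x',a')\leq \tfrac{B}{2\Delta_{\min}}\sum_{x',a'}\eta(x',a')\Delta_{f^\star}(x',a')$ for a fixed constant $B$ bounding the squared residuals at $\theta_0$. Feasibility then forces the regret objective to be at least $2\Delta_{\min}/B>0$, giving $\cC(f^\star,\cF_\Phi)>0$ with no limiting argument at all. Without this (or an equivalent) step, your necessity direction remains a plan rather than a proof.
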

Proposition \ref{prop:sublog-nec-suff-explicit} can be read as follows. Any representation $\phi$ whose misspecification is detectable by an optimal policy (i.e., such that $\min_{\theta\in\bR^{d_\phi}}\|f^\star - F_\phi\theta\|_{D_{\eta^\star}}^2 > 0$) does not bring any contribution to the regret lower bound, as already noted in Section \ref{sec:replearn-equal}. For any other representation $\phi$, the optimal policy must be able to detect that all sub-optimal pairs $(x,a)$ of $f^\star$ are indeed sub-optimal. This, in turns, requires $\phi(x,a)$ to be in the span of $V_{\eta^\star}(\phi)$ (i.e., the optimal policy explores the direction $\phi(x,a)$) and that $z^\star_\phi(x,a)^\transp \theta^\star_{\eta^\star}(\phi) > 0$ (i.e., the best approximation to the gap of $(x,a)$ using $\phi$ remains strictly positive). On the other hand, suppose that, for some $\phi\in\Phi$ with zero misspecification under an optimal policy, one of the two properties in Proposition \ref{prop:sublog-nec-suff-explicit} does not hold. Then, if $z^\star_\phi(x,a)^\transp \theta^\star_{\eta^\star}(\phi) \leq 0$, $(x,a)$ has higher reward than $(x,\pi_{f^\star}^\star(x))$ in the linear instance $(\phi,\theta^\star_{\eta^\star}(\phi))$, which means that it is impossible to learn its sub-optimality. Similarly, if $\phi(x,a)\notin\im(V_{\eta^\star}(\phi))$, an optimal policy does not explore the direction $\phi(x,a)$ at all, which means that it cannot estimate the corresponding reward. In both cases, it is necessary to repeatedly play at least some sub-optimal action, which implies that $\cC(f^\star, \cF_\Phi)  > 0$ and sub-logarithmic is thus impossible.

Perhaps surprisingly, an immediate consequence of Proposition \ref{prop:sublog-nec-suff-explicit} is that sub-logarithmic regret is \emph{impossible} if $\Phi$ contains at least one \emph{realizable} non-HLS representation.
\begin{corollary}\label{cor:sublog-impossible-realizable}
    If there exists a realizable representation $\phi\in\Phi$ which does not satisfy the HLS condition (Definition \ref{def:weak-hls}), $\cC(f^\star, \cF_\Phi) > 0$ (i.e., sub-logarithmic regret is impossible).
\end{corollary}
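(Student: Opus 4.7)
The plan is to deduce the corollary directly from the necessity direction of Proposition~\ref{prop:sublog-nec-suff-explicit}. Recall that $\cC(f^\star,\cF_\Phi) = 0$ requires properties 1 and 2 of that proposition to hold for \emph{every} $\phi \in \Phi$ with $\min_\theta \|f^\star - F_\phi \theta\|^2_{D_{\eta^\star}} = 0$. It therefore suffices to exhibit a single such $\phi$ that violates one of these two properties; the hypothesized realizable non-HLS representation will be our witness.

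First, I would verify that the hypothesized $\phi$ is indeed in the scope of Proposition~\ref{prop:sublog-nec-suff-explicit}. Since $\phi$ is realizable, there exists $\theta$ with $f^\star = F_\phi \theta$, so
\begin{equation*}
    \min_{\theta'\in\bR^{d_\phi}} \|f^\star - F_\phi \theta'\|^2_{D_{\eta^\star}} = 0,
\end{equation*}
and the two properties of the proposition must therefore hold at $\phi$. Next, I would unpack the design matrix under $\eta^\star(x,a) = \indi{a = \pi^\star_{f^\star}(x)}$, which factors as
\begin{equation*}
    V_{\eta^\star}(\phi) = \sum_{x \in \cX} \phi(x, \pi^\star_{f^\star}(x))\, \phi(x, \pi^\star_{f^\star}(x))^\transp,
\end{equation*}
whose column space is exactly $\spn\bigl(\{\phi(x, \pi^\star_{f^\star}(x))\}_{x \in \cX}\bigr)$. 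Consequently, property 2 of Proposition~\ref{prop:sublog-nec-suff-explicit} — namely $\phi(x,a) \in \im(V_{\eta^\star}(\phi))$ for all $x$ and $a \neq \pi^\star_{f^\star}(x)$ — coincides verbatim with the HLS condition of Definition~\ref{def:weak-hls}.

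Since $\phi$ is assumed to be non-HLS, property 2 fails at some sub-optimal pair $(x, a)$, and the necessity direction of Proposition~\ref{prop:sublog-nec-suff-explicit} immediately gives $\cC(f^\star, \cF_\Phi) > 0$, which is the claim. The proof is essentially a translation exercise, so I do not foresee any real obstacle; the only calculation is the one-line identification $\im(V_{\eta^\star}(\phi)) = \spn\bigl(\{\phi(x, \pi^\star_{f^\star}(x))\}_{x \in \cX}\bigr)$, which is immediate from the definition of $\eta^\star$. The argument is also robust to the presence of other representations in $\Phi$: the necessity direction requires the two properties at \emph{every} qualifying $\phi$, so violating either one at a single representation already suffices.
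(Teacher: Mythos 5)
Your proof is correct and follows essentially the same route as the paper's: both reduce the corollary to the necessity direction of Proposition~\ref{prop:sublog-nec-suff-explicit}, using realizability to place $\phi$ in the proposition's scope and the identification $\im(V_{\eta^\star}(\phi)) = \spn\bigl(\{\phi(x,\pi^\star_{f^\star}(x))\}_{x\in\cX}\bigr)$ to equate property~2 with the HLS condition. Your write-up is in fact slightly more careful than the paper's (which states the violated condition with an apparent $\in$/$\notin$ typo), but there is no substantive difference.
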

This implies that, even when $\Phi$ contains \emph{only} realizable representations and all but one are HLS, constant regret cannot be attained by any uniformly good algorithm.

\subsection{Fully-Realizable Representations}\label{ssec:fully.realizable}


Another specific structure is when \emph{all} representantions in $\Phi$ are realizable for all reward functions of interest. \citet{Papini2021leader} proved that, in this case, a LinUCB-based algorithm can adapt to the \emph{best} instance-dependent regret bound of a representation in $\Phi$ (e.g., it achieves constant regret when at least one representation is HLS). While our Proposition \ref{prop:replearn-not-easier-than-clb} and Corollary \ref{cor:sublog-impossible-realizable} seem to contradict their result, it turns out that \cite{Papini2021leader} consider a simpler problem: they assume the learner to be aware of $\Phi$ containing only realizable representations, while we consider the more general setting where it only knows that one of them is realizable. Intuitively, when the learner has such a strong prior knowledge, specialized strategies can be designed to achieve better results. However, such strategies would not be uniformly good on all problems in our class $\cF_\Phi$ as they do not account for misspecified representations. Therefore, the price to pay for being robust to misspecification is in general very large, no matter how ``good'' $\Phi$ is. 

As a complement to the results of \cite{Papini2021leader}, in Appendix \ref{app:fully-realizable} we show that the instance-optimal complexity of representation learning with prior knowledge about full realizability is indeed never larger than the instance-optimal complexity of a CLB with \emph{any} representation in $\Phi$, while there even exist cases where the former complexity is significantly smaller. This makes the problem of fully-realizable representation learning statistically ``easier'' than CLBs, as opposed to our general setting (Definition \ref{def:replearning}). 

\section{General Functions and Worst-case Regret}



While the instance-dependent viewpoint we considered so far allowed us to provide sharp insights on the complexity of representation learning, it is still asymptotic in nature and may ``hide'' other phenomena happening in the finite-time regime. Existing algorithms for general function classes~\citep[e.g.,][]{Foster2020beyond,SimchiLevi2020falcon} achieve $O(\sqrt{AT\log(|\cF|)})$ regret when given an arbitrary class $\cF$. This is known to be optimal in the worst possible choice of $\cF$ \citep{agarwal2012contextual}. When applied to representation learning with finite $|\Phi|$, i.e., to learn any instance $f^\star \in \cF_\Phi$ given $\cF_\Phi$, their regret bound reduces to $O(\sqrt{AT(\log(|\Phi|) + d)})$ and it is an open question whether this is optimal in the worst possible set $\Phi$. A similar result was obtained by~\citet{Moradipari2022feature}. 

In particular, one might be wondering whether a polynomial dependence on the number of actions $A$ is really unavoidable even when the learner is provided with a set of $d$-dimensional representations with $d \ll A$. The question arises mostly because some model-selection algorithms \citep{CutkoskyDDGPP21} achieve $O(\sqrt{dT\log(A)})$ regret on this problem, with some extra dependences on other problem-independent variables, like $|\Phi|$ or $T$. Such bounds give hope that adapting to the worst-case complexity of a CLB with one of the realizable representations in $\Phi$ may be possible at least in the small $T$ regime. Once again, we show that this is impossible. We state a worst-case lower bound for representation learning proving that a polynomial depedence on the number of actions is unavoidable.
\begin{theorem}\label{th:worst-case-lb-full}
    Let $N\geq 1, A\geq 4$ and $d \geq 12 \log_2(A)$. There exists a context distribution, a set of $d$-dimensional representations $\Phi$ of size $|\Phi|=N$ over $A$ arms, and a universal constant $c>0$ such that, for any learning algorithm $\mathfrak{A}$ and $T \geq \max\{\lfloor \log(dN)/\log(A) \rfloor, d/\log_2(A)\}$,
    \begin{align*}
        \sup_{f\in\cF_{\Phi}} \bE_f^\mathfrak{A}[R_T(f)] \geq c\sqrt{T \left(d\log_2(A) + A\left\lfloor \frac{\log(dN)}{\log(A)} \right\rfloor \right)}.
    \end{align*}
\end{theorem}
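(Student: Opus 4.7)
The proof proceeds by establishing each of the two additive terms in the lower bound through a distinct hard instance construction, and then combining them via $\max(\sqrt{Ta},\sqrt{Tb})\ge\sqrt{T(a+b)}/\sqrt{2}$: the theorem only asserts the existence of some $\Phi$, so picking whichever of the two constructions gives the larger bound suffices, with $c$ absorbing constant factors.

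For the $\sqrt{T d\log_2 A}$ term, I would instantiate the classical worst-case lower bound for stochastic linear contextual bandits with $A$ actions and $d$-dimensional features (Chu, Li, Reyzin and Schapire 2011; Dani, Hayes and Kakade 2008). One takes a single $d$-dimensional representation $\phi$ whose arm features are placed at the vertices of a scaled hypercube-like set in $\bR^d$, and pads the remaining $N-1$ slots of $\Phi$ with arbitrary copies. Padding cannot reduce $\sup_{f\in\cF_\Phi}\bE_f^{\mathfrak{A}}[R_T(f)]$ since $\cF_\Phi$ only grows under $\Phi\mapsto\Phi\cup\{\phi'\}$. The reward parameter is drawn from $\{\pm\epsilon\}^d$ for $\epsilon\asymp\sqrt{d\log(A)/T}$, and Assouad's lemma yields $\Omega(\sqrt{T d \log A})$. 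The hypothesis $d\ge 12\log_2 A$ guarantees sufficient dimensionality for the hypercube construction, while $T\ge d/\log_2 A$ ensures the chosen $\epsilon$ is valid.

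For the $\sqrt{T A\lfloor\log(dN)/\log A\rfloor}$ term, set $k:=\lfloor\log(dN)/\log A\rfloor$ and build a tabular embedding. Let $\cX$ have $k$ contexts drawn uniformly and $|\cA|=A$. Consider the needle family $\{f_\sigma:\sigma\in[A]^k\}$ defined by $f_\sigma(x_i,a)=\Delta\cdot \indi{a=\sigma_i}$. Since $A^k\le dN$, partition $[A]^k$ into $N$ groups of at most $d$ needles, and for group $i$ define a representation $\phi_i$ whose feature matrix $F_{\phi_i}\in\bR^{kA\times d}$ has the $d$ needle vectors of the group as its columns. Choosing $\theta=e_j$ recovers the $j$-th needle of the group, so every $f_\sigma$ is realizable by some $\phi_i\in\Phi$. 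The representation learning problem restricted to $\{f_\sigma\}$ is then at least as hard as the $k$-context, $A$-arm stochastic tabular bandit with uniform context distribution, whose worst-case regret is $\Omega(\sqrt{T A k})$ by the standard per-context Le Cam / Assouad argument (each context is visited $\Theta(T/k)$ times and distinguishing the optimal arm within a context costs $\sqrt{A\cdot T/k}$). The hypothesis $T\ge k$ ensures a non-trivial number of pulls per context.

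The main obstacle is making the second construction tight, i.e., showing that knowledge of $\Phi$ does not leak any structural information beyond what a tabular learner would obtain. Because each $\phi_i$ is literally the stacking of its group's needle vectors as columns, a pull at $(x_i,a)$ under $f_\sigma$ is a Gaussian with mean $\Delta\indi{a=\sigma_i}$, independent of which representation the hypothesis is associated with; hence the KL divergences and Fano-style computations from the tabular lower bound transfer verbatim. The hypothesis $A\ge 4$ is used here to absorb constants and ensure that $\Omega(A)$ "confusing" arm pairs exist per context, while the balanced partition into groups of size at most $d$ guarantees that no representation contains the true needle together with too many alternatives that could be excluded for free. Combining the two constructions via the $\max$-inequality above then gives the stated bound.
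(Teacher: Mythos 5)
Your overall architecture matches the paper's: two separate hard constructions, one for each additive term, combined via $\max(\sqrt{Ta},\sqrt{Tb})\ge\sqrt{T(a+b)/2}$. Your second construction (the needle family $f_\sigma(x_i,a)=\Delta\indi{a=\sigma_i}$ over $k=\lfloor\log(dN)/\log A\rfloor$ contexts, packed $d$ policies per representation via indicator columns so that $A^k\le dN$ needles are covered by $N$ representations, then invoking the unstructured $\Omega(\sqrt{kAT})$ bound) is essentially identical to the paper's, and your observation that the information-theoretic argument transfers verbatim because the KL divergences depend only on the reward means is exactly why no ``leakage'' issue arises.

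The genuine gap is in the first term. You attribute the $\Omega(\sqrt{dT\log_2 A})$ bound to Chu--Li--Reyzin--Schapire and Dani--Hayes--Kakade, but those classical results give $\Omega(\sqrt{dT})$ for $A$-armed contextual linear bandits and $\Omega(d\sqrt{T})$ for action sets of size $2^{\Omega(d)}$, respectively; neither yields the extra $\log_2 A$ factor when the learner has only $A$ arms and $d\gg\log_2 A$. Your sketch (arms at hypercube-like vertices, $\theta\in\{\pm\epsilon\}^d$, Assouad) does not explain how $A$ arms can probe $d$ coordinate signs while forcing $\log_2 A$ bits of confusion per unit of regret -- with $d\ge 12\log_2 A$ you cannot place arms at all $2^d$ vertices, and with only $A$ generic directions Assouad over $d$ coordinates collapses back to $\sqrt{dT}$. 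The paper's construction (following the reference it cites for this bound) is the missing ingredient: partition the $d$ coordinates into $\lfloor\log_2 A\rfloor$ blocks, encode each arm $a$ by its binary expansion so that pulling $a$ simultaneously plays one of two ``binary arms'' in each of $\lfloor\log_2 A\rfloor$ independent two-armed tabular contextual bandits of dimension $\approx d/\log_2 A$; the regret then decomposes additively across blocks, and summing $\lfloor\log_2 A\rfloor$ copies of a $\sqrt{dT/\log_2 A}$ bound gives $\sqrt{dT\log_2 A}$. Without this (or an equivalent) decoupling idea, your first construction only delivers $\sqrt{dT}$, and the combined bound falls short of the stated $\sqrt{T(d\log_2 A + A\lfloor\log(dN)/\log A\rfloor)}$.
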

The proof of this result combines techniques used to derive two existing lower bounds: the $\Omega(\sqrt{dT\log(A)})$ lower bound for CLB problems of \cite{Jiahao2022reduction} and the $\Omega(\sqrt{AT\log(|\cF|)})$ lower bound for general function classes of \cite{agarwal2012contextual}. Differently from existing upper bounds that scale as $\Omega({\sqrt{Ad}})$, our lower bound decouples the polynomial dependencies on $A$ and $d$. Whether this is matchable by a specialized algorithm for representation learning, or whether existing algorithm for general function classes are already worst-case optimal in our setting, remains an intriguing open question.
\section{Discussion}\label{sec:discussion}

Our main contributions can be summarized in two fundamental hardness results. \textbf{1)} Through an instance-dependent lens, representation learning is never easier than a CLB with a given realizable representation, while the former problem can be strictly harder, up to the point that knowing that one of some given low-dimensional representations is realizable is useless. \textbf{2)} Adaptivity to the best representation is impossible in general, both in the instace-dependent long-horizon and in the worst-case small-horizon regimes. 
In particular, as opposed to worst-case results, instance-dependent adaptivity is impossible for representation learning on nested features, and the same holds when all representations are realizable if the learner does not know it a-priori. On the positive side, we characterized ``simple'' instances where representation learning is not harder than a CLB and where sub-logarithmic regret can be achieved.

Following literature on linear bandits \citep{tirinzoni2020asymptotically,kirschner2021asymptotically}, an interesting open question is how to design computationally-efficient representation learning strategies with good (e.g., worst-case optimal) finite-time regret and asymptotically instance-optimal performance.


\bibliographystyle{plainnat}
\bibliography{biblio}

\newpage
\onecolumn
\appendix

\section{Notation}\label{app:notation}

\begin{table*}[h]
    \centering
    \begin{small}
    \begin{tabular}{@{}ll@{}} 
    \toprule
    Symbol & Meaning \\
    \toprule
    \multicolumn{2}{c}{Learning problem} \\
    \cmidrule{1-2}
    $\cX$ & Finite set of $X = |\cX|$ contexts\\
    $\cA$ & Finite set of $A = |\cA|$ arms\\
    $\rho$ & Context distribution (full-support)\\
    $f^\star : \cX \times \cA \rightarrow \bR$ & Reward function\\
    $R_T(f^\star)$ & Cumulative regret when learning $f^\star$ (see Equation \ref{eq:regret})\\
    $\phi: \cX \times \cA \rightarrow \mathbb{R}^{d_\phi}$ & A $d_\phi$-dimensional representation\\
    $\Phi$ & Set of representations known to the agent\\
    $\phi^\star$ & The true realizable representation for $f^\star$\\
    $\theta^\star$ & The true parameter in $\bR^{d_{\phi^\star}}$ such that $f^\star(x,a) = \phi^\star(x,a)^\transp \theta^\star$\\
    $\cF_\Phi := \{f(\cdot,\cdot) = \phi(\cdot,\cdot)^\transp\theta \mid \phi\in\Phi,\theta\in\bR^{d_\phi}\}$ & Function class known to the agent (s.t. $f^\star \in \cF_\Phi$)\\
    $\pi^\star_f(x) = \argmax_{a\in\cA}f(x,a)$ & Optimal policy for reward function $f$\\
    $\Delta_f(x,a) := \max_{a'\in\cA}f(x,a') - f(x,a)$ & Sub-optimality gap of $(x,a)$ with reward $f$\\
    $\mathrm{KL}_{x,a}(f,f') := \frac{1}{2}(f(x,a)-f'(x,a))^2$ & KL divergence between the (Gaussian) observations in $(x,a)$ under rewards $f$ and $f'$\\
    $\eta : \cX\times\cA \rightarrow [0,\infty)$ & An allocation of samples\\
    $z^\star_\phi(x,a) := \phi(x,\pi_{f^\star}^\star(x)) - \phi(x,a)$ & Different between optimal and sub-optimal features\\
    \toprule
    \multicolumn{2}{c}{Matrix notation} \\
    \cmidrule{1-2}
    $F_\phi \in \bR^{XA \times d_\phi}$ & Feature matrix for representation $\phi$ (containing $\{\phi(x,a)\}_{x\in\cX,a\in\cA}$ as rows)\\
    $f \in \bR^{XA}$ & Vectorized reward function (s.t. $f^\star = F_{\phi^\star}\theta^\star$)\\
    $D_\eta := \diag(\{\eta(x,a)\}_{x\in\cX,a\in\cA})$ & Diagonal matrix containing allocation $\eta$\\
    $V_\eta(\phi) := F_\phi^\transp D_\eta F_\phi$ & Design matrix for representation $\phi$ (equiv. $V_\eta(\phi) := \sum_{x,a}\eta(x,a)\phi(x,a)\phi(x,a)^\transp$)\\
    $\| f^\star - F_\phi \theta\|_{D_\eta}^2$ & MSE of model $\phi(\cdot,\cdot)^\transp \theta$ in predicting $f^\star$ under allocation $\eta$ \\
    $\theta^\star_\eta(\phi) := \argmin_{\theta\in\bR^{d_\phi}} \| f^\star - F_\phi \theta\|_{D_\eta}^2$ & Best fit for $f^\star$ using representation $\phi$ and allocation $\eta$\\
    $\theta^\star_\eta(\phi) = V_\eta(\phi)^\dagger \sum_{x,a}\eta(x,a)\phi(x,a)f^\star(x,a)$ & Closed-form expression for $\theta^\star_\eta(\phi)$ (equiv. $\theta^\star_\eta(\phi) = V_\eta(\phi)^\dagger F_\phi^\transp D_\eta{f^\star}$)\\
    \toprule
    \multicolumn{2}{c}{Linear algebra} \\
    \cmidrule{1-2}
    $M^{\dagger}$ & Pseudo-inverse of a matrix $M\in\bR^{n\times m}$\\
    $\im(M)$ & Column space of a matrix $M\in\bR^{n\times m}$\\
    $\row(M)$ & Row space of a matrix $M\in\bR^{n\times m}$\\
    $\ker(M)$ & Null space of a matrix $M\in\bR^{n\times m}$\\
    $\|v\|_M^2 := v^\transp M v$ & Weighted norm for a vector $v\in\mathbb{R}^d$ and a matrix $M \in \mathbb{R}^{d\times d}$\\
    \bottomrule
    \end{tabular}
    \end{small}
    \caption{The notation adopted in this paper.}
    \label{tab:notation}
\end{table*}

\paragraph{Additional notation}

We introduce some additional terms w.r.t. those considered in the main paper. 

Given two mean-reward functions $f,f' : \cX\times\cA\rightarrow\mathbb{R}$, we define the Kullback-Leibler (KL) divergence between the corresponding distributions in a context-arm pair $(x,a)$ as $\mathrm{KL}_{x,a}(f,f') := \frac{1}{2}(f(x,a)-f'(x,a))^2$.
\section{Instance-Dependent Lower Bounds}\label{app:lower-bound}

\subsection{General Lower Bound}

We state the lower bound of \cite{ok2018exploration}, which defines a complexity measure $\cC(f^\star,\cF)$ for learning any reward $f^\star$ in any given function class $\cF$. Our lower bound for representation learning (Theorem \ref{th:lower-bound-repr}) will be derived by instantiating this result for the specific class $\cF_\Phi$.

\begin{theorem}[\cite{ok2018exploration}]\label{th:lower-bound}
    Let $\mathfrak{A}$ be a \textit{uniformly good} strategy on a class $\cF$. Then, for any $f^\star\in\cF$ such that $\pi^\star_{f^\star}$ is unique,
    \begin{equation*}
    \liminf_{T \rightarrow \infty}\frac{\bE_{f^\star}^\mathfrak{A} \big[ R_T(f^\star) \big]}{\log(T)} \geq \cC(f^\star,\cF),
    \end{equation*} 
    where $\cC(f^\star,\cF)$ is the value of the optimization problem
        \begin{equation*}
        \underset{\{\eta(x,a)\} \geq 0}{\inf} \sum_{x\in\cX}\sum_{a\in\cA}\eta(x,a)\Delta_{f^\star}(x,a)
        \quad
        \mathrm{s.t.}
        \quad
        \inf_{f \in \Lambda(f^\star,\cF)}\sum_{x\in\cX}\sum_{a\in\cA}\eta(x,a) \mathrm{KL}_{x,a}(f^\star,f)\geq 1,
        \end{equation*}
    where $\Lambda(f^\star,\cF) := \{ f \in \cF \mid \exists x\in\cX, a \neq \pi^\star_{f^\star}(x) : f(x,a) > f(x,\pi^\star_{f^\star}(x))\}$ is the set of alternatives for $f$.
\end{theorem}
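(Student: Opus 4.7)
The plan is to follow the classical Lai--Robbins--Graves change-of-measure paradigm adapted to structured contextual bandits, as in \cite{ok2018exploration}. Throughout let $N_T(x,a) := \sum_{t=1}^T \indi{x_t=x,\,a_t=a}$ and let $\mathbb{P}_f^{\mathfrak{A},T}$ denote the law of the $T$-step interaction under reward $f$. I would first establish the divergence decomposition: since the context distribution $\rho$ does not depend on $f$ and rewards are Gaussian with unit variance, the chain rule for KL gives
\begin{align*}
\mathrm{KL}\bigl(\mathbb{P}_{f^\star}^{\mathfrak{A},T},\mathbb{P}_f^{\mathfrak{A},T}\bigr) = \sum_{x,a}\bE_{f^\star}^{\mathfrak{A}}[N_T(x,a)]\,\mathrm{KL}_{x,a}(f^\star,f)\qquad\forall f\in\cF.
\end{align*}
Paired with the regret identity $\bE_{f^\star}^{\mathfrak{A}}[R_T(f^\star)] = \sum_{x,a}\bE_{f^\star}^{\mathfrak{A}}[N_T(x,a)]\Delta_{f^\star}(x,a)$, this sets up the trade-off between accumulated regret and information collected about each alternative.

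Next, I would convert uniform goodness into an information constraint against every alternative $f\in\Lambda(f^\star,\cF)$. Pick a witness $(x^*,a^*)$ with $a^*\neq\pi^\star_{f^\star}(x^*)$ and $f(x^*,a^*)>f(x^*,\pi^\star_{f^\star}(x^*))$; one may assume $a^*$ is the unique optimum of $f$ at $x^*$ by an infinitesimal perturbation of $f$ within $\cF$, to be sent back to $0$ at the end using continuity of $\mathrm{KL}_{x,a}$. Consider the event $\cE_T = \{N_T(x^*,a^*) \geq T\rho(x^*)/2\}$. Uniform goodness at $f^\star$ yields $\bE_{f^\star}^{\mathfrak{A}}[N_T(x^*,a^*)] = o(T^\alpha)$ for every $\alpha>0$ (since $a^*$ is suboptimal under $f^\star$), hence $\mathbb{P}_{f^\star}^{\mathfrak{A},T}(\cE_T)\to 0$ polynomially by Markov. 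Uniform goodness at $f$ gives $\bE_f^{\mathfrak{A}}[T\rho(x^*)-N_T(x^*,a^*)] = o(T^\alpha)$, hence $\mathbb{P}_f^{\mathfrak{A},T}(\cE_T^c) \to 0$ at polynomial rate. The Bretagnolle--Huber inequality (equivalently, data-processing of KL through $\cE_T$) then implies $\mathrm{KL}(\mathbb{P}_{f^\star}^{\mathfrak{A},T},\mathbb{P}_f^{\mathfrak{A},T}) \geq (1-\alpha)\log T - O(1)$; letting $\alpha\downarrow 0$ and combining with the decomposition gives
\begin{align*}
\liminf_{T\to\infty}\sum_{x,a}\frac{\bE_{f^\star}^{\mathfrak{A}}[N_T(x,a)]}{\log T}\,\mathrm{KL}_{x,a}(f^\star,f) \geq 1\qquad\forall f\in\Lambda(f^\star,\cF).
\end{align*}

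To conclude, I would extract a subsequence $(T_k)$ realising $L := \liminf_T \bE_{f^\star}^{\mathfrak{A}}[R_T(f^\star)]/\log T$ and set $\bar\eta_k(x,a) := \bE_{f^\star}^{\mathfrak{A}}[N_{T_k}(x,a)]/\log T_k$. The regret identity forces $\bar\eta_k(x,a)\Delta_{f^\star}(x,a) \leq L + o(1)$, so $\bar\eta_k$ is bounded on every suboptimal pair; a diagonal extraction yields a coordinatewise limit $\eta^\infty$ (finite on suboptimal pairs, possibly $+\infty$ on optimal ones). Lower semicontinuity of the KL constraint gives feasibility of $\eta^\infty$ in the problem defining $\cC(f^\star,\cF)$, while the regret objective equals $L$ since optimal pairs contribute zero. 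Hence $L \geq \cC(f^\star,\cF)$, which is the claim.

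The main technical obstacle is the final passage to the limit: $\bar\eta_k(x,\pi^\star_{f^\star}(x))\to\infty$ necessarily, since each context is visited $\Theta(T)$ times, so the candidate allocation lives in an unbounded set. The cleanest fix is to allow $\eta(x,a) = +\infty$ on optimal pairs---harmless because they contribute $0$ to the objective and can only relax the constraint---and to verify feasibility using a monotone/Fatou argument on the KL sum. A secondary subtlety is the perturbation step ensuring that the alternative $f$ has a unique optimal arm at $x^*$, which is needed so that uniform goodness upper-bounds the count of \emph{every} arm $a\neq a^*$ at $x^*$ rather than just a subset; the perturbation is then sent to zero using continuity of $\mathrm{KL}_{x,a}$ and the closedness of $\Lambda(f^\star,\cF)$ under small enough shifts.
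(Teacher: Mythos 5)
The paper does not actually prove this statement: Theorem~\ref{th:lower-bound} is imported verbatim from \citet{ok2018exploration} (extracted as the contextual-bandit special case of their MDP lower bound), so there is no in-paper proof to compare against. Your reconstruction follows the standard Graves--Lai change-of-measure template, which is indeed the route taken in the cited source: divergence decomposition, a transportation step converting uniform goodness into a $\log T$ information requirement against every alternative, and extraction of a limiting normalized allocation. The decomposition, the regret identity, the Bretagnolle--Huber step, and the handling of the unbounded coordinates on optimal pairs (extended-valued allocations contribute nothing to the objective and only relax the constraint) are all sound and match the known argument.

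One step is genuinely shaky as written: to control $\bP_{f}(\cE_T^c)$ you assume $a^*$ can be made the \emph{unique} optimum of $f$ at $x^*$ ``by an infinitesimal perturbation of $f$ within $\cF$.'' For an arbitrary function class $\cF$ there is no guarantee that such a perturbation stays inside $\cF$, and uniform goodness is only assumed on $\cF$, so the perturbed instance cannot be invoked. The standard repair avoids perturbation entirely: choose the distinguishing event to be $\cE_T = \{\sum_{a\neq\pi^\star_{f^\star}(x^*)} N_T(x^*,a) \geq T\rho(x^*)/2\}$. Under $f^\star$ every arm in that sum is suboptimal, so $\bP_{f^\star}(\cE_T)=o(T^{\alpha-1})$ by Markov; under $f$ the arm $\pi^\star_{f^\star}(x^*)$ is suboptimal (this is exactly the defining property of $\Lambda(f^\star,\cF)$), so $N_T(x^*)-N_T(x^*,\pi^\star_{f^\star}(x^*))\geq T\rho(x^*)/2$ with probability $1-o(1)$, regardless of whether $f$ has a unique optimum at $x^*$. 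With that substitution your argument goes through. A second, milder caveat: when you replace the limit allocation $\eta^\infty$ (infinite on optimal pairs) by a finite feasible point to conclude $L\geq\cC(f^\star,\cF)$, the truncation at level $M$ must be justified uniformly over $f\in\Lambda(f^\star,\cF)$; this uses that alternatives are uniformly separated from $f^\star$ in sup norm (by $\Delta_{\min}/2$ at some pair), and is worth a sentence rather than being absorbed into ``lower semicontinuity.''
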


\begin{proposition}[Monotonicity of $\cC(f^\star,\cF)$]\label{prop:monotone}
    For any two function classes $\underline{\cF},\overline{\cF}$ such that $\underline{\cF} \subseteq \overline{\cF}$ and $f^\star \in \underline{\cF}\cap\overline{\cF}$, $\cC(f^\star,\underline{\cF}) \leq \cC(f^\star,\overline{\cF})$.
\end{proposition}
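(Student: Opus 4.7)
The plan is to exploit the fact that the complexity $\cC(f^\star,\cF)$ is defined as an infimum of a fixed linear objective over the set of allocations satisfying an information-theoretic constraint, and that this constraint becomes \emph{tighter} as the class $\cF$ grows. Concretely, I would first observe that the set of alternatives $\Lambda(f^\star,\cF) := \{ f \in \cF \mid \exists x, a \neq \pi^\star_{f^\star}(x) : f(x,a) > f(x,\pi^\star_{f^\star}(x))\}$ depends on $\cF$ only through which functions we intersect with, hence monotonically in $\cF$: since $\underline{\cF}\subseteq \overline{\cF}$, we immediately get $\Lambda(f^\star,\underline{\cF})\subseteq \Lambda(f^\star,\overline{\cF})$.

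Next, I would use this set inclusion to compare the constraints appearing in Theorem~\ref{th:lower-bound} for the two classes. For any fixed allocation $\eta \geq 0$, taking an infimum of the non-negative functional $f \mapsto \sum_{x,a}\eta(x,a)\mathrm{KL}_{x,a}(f^\star,f)$ over a \emph{larger} set can only decrease its value, so
\begin{equation*}
\inf_{f \in \Lambda(f^\star,\overline{\cF})}\sum_{x,a}\eta(x,a)\mathrm{KL}_{x,a}(f^\star,f) \;\leq\; \inf_{f \in \Lambda(f^\star,\underline{\cF})}\sum_{x,a}\eta(x,a)\mathrm{KL}_{x,a}(f^\star,f).
\end{equation*}
Consequently, every $\eta$ that is feasible for the $\overline{\cF}$-problem (i.e.\ whose $\overline{\cF}$-infimum is $\geq 1$) automatically has its $\underline{\cF}$-infimum $\geq 1$ and is thus feasible for the $\underline{\cF}$-problem. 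In other words, the feasible set of the $\underline{\cF}$-optimization contains the feasible set of the $\overline{\cF}$-optimization.

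Finally, since the two problems share the same linear objective $\sum_{x,a}\eta(x,a)\Delta_{f^\star}(x,a)$ (which depends only on $f^\star$, not on the class), and an infimum over a larger feasible set cannot be larger, I would conclude
\begin{equation*}
\cC(f^\star,\underline{\cF}) \;=\; \inf_{\eta \text{ feasible for } \underline{\cF}} \sum_{x,a}\eta(x,a)\Delta_{f^\star}(x,a) \;\leq\; \inf_{\eta \text{ feasible for } \overline{\cF}} \sum_{x,a}\eta(x,a)\Delta_{f^\star}(x,a) \;=\; \cC(f^\star,\overline{\cF}),
\end{equation*}
which is the desired inequality. I do not expect any genuine obstacle: the statement is essentially a tautology about how enlarging an infimum constraint shrinks the feasible set. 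The only minor care needed is to handle the degenerate case in which $\Lambda(f^\star,\underline{\cF})=\emptyset$ (so the $\underline{\cF}$-infimum is $+\infty$ and every $\eta$ is feasible, including $\eta\equiv 0$, giving $\cC(f^\star,\underline{\cF})=0$), in which case the inequality holds trivially since the right-hand side is non-negative.
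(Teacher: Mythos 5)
Your proof is correct and is essentially the same argument as the paper's: $\underline{\cF}\subseteq\overline{\cF}$ gives $\Lambda(f^\star,\underline{\cF})\subseteq\Lambda(f^\star,\overline{\cF})$, hence the infimum in the constraint can only decrease, the feasible set shrinks, and the common linear objective yields the inequality. You spell out the direction of the feasibility-set inclusion and the degenerate empty-alternative case more carefully than the paper does, but there is no substantive difference in approach.
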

\begin{proof}
    This result is immediate from Theorem \ref{th:lower-bound}: $\underline{\cF} \subseteq \overline{\cF}$ implies $\Lambda(f^\star,\underline{\cF}) \subseteq \Lambda(f^\star,\overline{\cF})$ by definition of alternative set, which in turns implies that the feasibility set of the optimization problem $\Lambda(f^\star,\overline{\cF})$ includes the one of the optimization problem $\Lambda(f^\star,\underline{\cF})$. The objective functions are the same, hence the second value must be smaller.
\end{proof}

\subsection{The Unstructured Case}

Let $\cF_{\mathrm{uns}}$ denote the set of all possible functions mapping $\cX\times\cA$ into $\bR$. Note that an algorithm learning an instance $f^\star$ with $\cF_{\mathrm{uns}}$ as input has no prior knowledge about $f^\star$ itself. We call this setting ``unstructured'', as opposed to the ``structured'' setting where we are given $\cF \subset \cF_{\mathrm{uns}}$ (e.g., $\cF = \cF_\Phi$). The following result formally shows that the complexity measure from Theorem \ref{th:lower-bound} for such an unstructured setting reduces exactly to the sum of inverse gaps appearing both in Proposition \ref{prop:replearn-uns} and \ref{prop:replearn-trivial-rep}.

\begin{theorem}\label{th:lower-bound-uns}
    For any $f^\star\in\cF$ such that $\pi^\star_{f^\star}$ is unique,
    \begin{align*}
        \cC(f,\cF_{\mathrm{uns}}) = \sum_{x\in\cX}\sum_{a\neq \pi^\star_{f^\star}(x)}\frac{2}{\Delta_{{f^\star}}(x,a)}.
    \end{align*}
\end{theorem}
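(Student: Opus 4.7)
The plan is to instantiate Theorem~\ref{th:lower-bound} with $\cF = \cF_{\mathrm{uns}}$ and compute its optimization problem in closed form, exploiting the fact that an unstructured reward can be modified at every coordinate $(x,a)$ independently. Recalling the Gaussian KL $\mathrm{KL}_{x,a}(f^\star, f) = (f^\star(x,a) - f(x,a))^2 / 2$, the goal is to show that the feasibility constraint in Theorem~\ref{th:lower-bound} collapses into a per-pair inequality on $\eta$.

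For the lower bound $\cC(f^\star, \cF_{\mathrm{uns}}) \geq \sum_x \sum_{a \neq \pi^\star_{f^\star}(x)} 2 / \Delta_{f^\star}(x,a)$, I would fix any suboptimal pair $(x_0,a_0)$ and consider the one-coordinate alternatives $f_\epsilon$ that agree with $f^\star$ except at $(x_0,a_0)$, where $f_\epsilon(x_0,a_0) := f^\star(x_0, \pi^\star_{f^\star}(x_0)) + \epsilon$. Each $f_\epsilon$ belongs to $\Lambda(f^\star, \cF_{\mathrm{uns}})$ and its KL sum collapses to $\eta(x_0,a_0)(\Delta_{f^\star}(x_0,a_0) + \epsilon)^2 / 2$, so feasibility forces $\eta(x_0, a_0) \geq 2 / \Delta_{f^\star}(x_0,a_0)^2$ after sending $\epsilon \downarrow 0$. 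Since $\Delta_{f^\star}(x, \pi^\star_{f^\star}(x)) = 0$, substituting this bound into the objective $\sum_{x,a} \eta(x,a)\Delta_{f^\star}(x,a)$ immediately yields the claimed lower bound.

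For the matching upper bound I would first reduce the KL constraint over $\Lambda$ analytically. A ``flip'' at context $x$ towards a suboptimal arm $a$ can always be realized by adjusting only the two entries $(x,a)$ and $(x, \pi^\star_{f^\star}(x))$, since modifying additional coordinates only increases the KL sum. A standard Lagrange-multiplier calculation minimizing $\eta(x,a)\Delta_1^2 / 2 + \eta(x, \pi^\star_{f^\star}(x))\Delta_2^2 / 2$ subject to $\Delta_1 + \Delta_2 \geq \Delta_{f^\star}(x,a)$ yields the equivalent feasibility condition
\begin{align*}
    \frac{1}{\eta(x,a)} + \frac{1}{\eta(x, \pi^\star_{f^\star}(x))} \leq \frac{\Delta_{f^\star}(x,a)^2}{2}, \qquad \forall\, x,\ a \neq \pi^\star_{f^\star}(x).
\end{align*}
Because $\eta(x, \pi^\star_{f^\star}(x))$ contributes $0$ to the objective, I would send it to infinity to decouple these constraints into $\eta(x,a) \geq 2/\Delta_{f^\star}(x,a)^2$, then take equality and sum to recover the claimed value.

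The main obstacle I anticipate is the careful treatment of $\eta(x, \pi^\star_{f^\star}(x))$: it is costless in the objective yet indispensable for meeting the KL constraint coming from two-coordinate alternatives that lower the optimal reward (setting it to $0$ would yield a trivially infeasible allocation). Consequently, the infimum defining $\cC(f^\star, \cF_{\mathrm{uns}})$ is only approached in the limit $\eta(x, \pi^\star_{f^\star}(x)) \to \infty$ and is not attained at any finite allocation; this subtlety must be handled explicitly, either via the described limiting argument or by introducing a slack factor $(1+\delta)$ on $\eta(x,a)$ and sending $\delta \downarrow 0$ afterwards.
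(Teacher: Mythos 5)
Your proposal is correct and follows essentially the same route as the paper: instantiate Theorem~\ref{th:lower-bound} with $\cF=\cF_{\mathrm{uns}}$, decompose $\Lambda(f^\star,\cF_{\mathrm{uns}})$ into one half-space per sub-optimal pair, and reduce each constraint to $\eta(x,a)\geq 2/\Delta_{f^\star}(x,a)^2$. You are in fact slightly more careful than the paper's one-line argument, which simply asserts that the closest alternative modifies only the coordinate $(x,a)$; your two-coordinate minimization yielding $1/\eta(x,a)+1/\eta(x,\pi^\star_{f^\star}(x))\leq\Delta_{f^\star}(x,a)^2/2$ and the explicit limit $\eta(x,\pi^\star_{f^\star}(x))\to\infty$ make precise why the infimum equals the stated value even though it is not attained by any finite allocation.
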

\begin{proof}
    Note that the alternative set $\Lambda(f^\star,\cF_{\mathrm{uns}})$ can be decomposed into a union of $X(A-1)$ half-spaces, each associated to a sub-optimal context-arm pair $(x,a)$ containing those instances such that ${f}(x,a) > {f}(x,\pi^\star_{f^\star}(x))$. It is easy to see that the closest alternative for the half-space associated with $(x,a)$ (i.e., the one minimizing the KL divergence in the constraint of Theorem \ref{th:lower-bound}) is ${f}(x,a) = {f^\star}(x,a) + \Delta_{f^\star}(x,a)$ and ${f}(x',a') = {f^\star}(x',a')$ for all $(x',a')\neq(x,a)$. Therefore, the constraint associated with $(x,a)$ yields $\eta(x,a) \geq 2/\Delta_{f^\star}(x,a)^2$. There is one such constraint for each suboptimal $(x,a)$, hence yielding the stated lower bound.
\end{proof}

\subsection{Proof of Theorem \ref{th:lower-bound-repr}}

The result follows by instantiating the general lower bound of Theorem \ref{th:lower-bound} to our specific function class $\cF_\Phi := \{f(\cdot,\cdot) = \phi(\cdot,\cdot)^\transp\theta \mid \phi\in\Phi,\theta\in\bR^{d_\phi}\}$. First note that the resulting set of alternatives can be decomposed into a union of half-spaces,
\begin{align}\label{eq:alt-half-space}
    \Lambda({f^\star},\cF_\Phi) := \bigcup_{\phi\in\Phi}\bigcup_{x\in\cX}\bigcup_{a\neq \pi^\star_{f^\star}(x)} \Big\{ \theta\in\mathbb{R}^{d_\phi} \mid \phi(x,a)^\transp \theta > \phi(x,\pi^\star_{f^\star}(x))^\transp \theta \Big\}.
\end{align}
Moreover, for any ${f}\in\cF_\Phi$ such that ${f}(x,a) = \phi(x,a)^\transp \theta$,
\begin{align*}
    \sum_{x\in\cX}\sum_{a\in\cA}\eta(x,a) \mathrm{KL}_{x,a}({f^\star},{f}) = \frac{1}{2}\sum_{x\in\cX}\sum_{a\in\cA} \eta(x,a) \Big({f^\star}(x,a) - \phi(x,a)^\transp \theta\Big)^2.
\end{align*}
Therefore, the infimum over alternatives in the constraint of Theorem \ref{th:lower-bound} can be computed by performing one minimization of a quadratic function over (the closure of) each half-space in \eqref{eq:alt-half-space}. Formally, for each $\phi\in\Phi,\bar{x}\in\cX,\bar{a}\neq \pi^\star_{f^\star}(\bar x)$, we compute
\begin{equation}\label{eq:optim-half-space}
    \cI_\eta({f^\star}, \phi, \bar{x}, \bar{a}) :=  \min_{\theta \in \mathbb{R}^{d_\phi}} \frac{1}{2}\sum_{x\in\cX}\sum_{a\in\cA} \eta(x,a) \Big({f^\star}(x,a) - \phi(x,a)^\transp \theta\Big)^2 \quad \mathrm{s.t.} \quad \phi(\bar{x},\bar{a})^\transp \theta \geq \phi(\bar{x}, \pi^\star_{f^\star}(\bar x))^\transp \theta.
\end{equation}
This can be re-written in matrix notation as
\begin{align}\label{eq:optim-half-space-matrix}
    \cI_\eta({f^\star}, \phi, \bar{x}, \bar{a}) = \min_{\theta\in\bR^{d_\phi}} \frac{1}{2}\| {f^\star} - F_\phi \theta\|_{D_\eta}^2 \quad \mathrm{s.t.} \quad (\phi(\bar{x},\bar{a}) - \phi(\bar{x}, \pi^\star_{f^\star}(\bar x)))^\transp \theta \geq 0.
\end{align}
The optimal value can be found in closed-form by using Lemma \ref{lem:inf-general-halfspace} (proved below) with $z = \phi(\bar{x},\bar{a}) - \phi(\bar{x}, \pi^\star_{f^\star}(\bar x))$. Theorem \ref{th:lower-bound-repr} is then proved by plugging this result into Theorem \ref{th:lower-bound}, while noting that the constraint in the latter lower bound can be written as
\begin{align}\label{eq:constr-general}
    \cI_\eta({f^\star}, \phi, x, a) \geq 1 \ \ \forall\phi\in\Phi, x\in\cX,a\neq\pi^\star_{f^\star}(x).
\end{align}

\qed

\subsection{Implicit Formulation of Theorem \ref{th:lower-bound-repr}}

An immediate corollary from the proof of Theorem \ref{th:lower-bound-repr} (see in particular Equation \ref{eq:constr-general}) is the following implicit version of our lower bound for representation learning.

\begin{corollary}\label{cor:replearn-lb-implicit}
        For any $\Phi$ and $f^\star\in\cF_\Phi$ such that $\pi^\star_{f^\star}$ is unique,
        \begin{align*}
            \cC(f^\star,\cF_\Phi) =  \underset{\{\eta(x,a)\} \geq 0}{\inf} \sum_{x\in\cX}\sum_{a\in\cA}\eta(x,a)\Delta_{f^\star}(x,a)
            \quad
            \mathrm{s.t.}
            \quad
            \cI_\eta({f^\star}, \phi, x, a) \geq 1 \ \ \forall\phi\in\Phi, x\in\cX,a\neq\pi^\star_{f^\star}(x).
        \end{align*}
\end{corollary}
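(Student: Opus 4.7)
The plan is to read the corollary off directly from the proof of Theorem \ref{th:lower-bound-repr}, halting just before the closed-form evaluation of $\cI_\eta(f^\star, \phi, x, a)$ via Lemma \ref{lem:inf-general-halfspace}. Concretely, I would first invoke the general lower bound of Theorem \ref{th:lower-bound} on the function class $\cF_\Phi$, which already delivers the claimed objective $\sum_{x, a} \eta(x, a)\Delta_{f^\star}(x, a)$ and identifies $\cC(f^\star, \cF_\Phi)$ as the value of an optimization over $\eta \geq 0$ whose only constraint is the implicit one $\inf_{f \in \Lambda(f^\star, \cF_\Phi)} \sum_{x, a} \eta(x, a)\, \mathrm{KL}_{x, a}(f^\star, f) \geq 1$. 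Using the unit-variance Gaussian identity $\mathrm{KL}_{x, a}(f^\star, f) = \tfrac{1}{2}(f^\star(x, a) - f(x, a))^2$, this constraint turns into
\[
\inf_{f \in \Lambda(f^\star, \cF_\Phi)} \tfrac{1}{2} \sum_{x, a} \eta(x, a)\bigl(f^\star(x, a) - f(x, a)\bigr)^2 \geq 1.
\]

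Next, I would exploit the bi-linear structure of $\cF_\Phi$ to decompose the alternative set as in \eqref{eq:alt-half-space}, namely as the union over $\phi \in \Phi$, $x \in \cX$, and $a \neq \pi^\star_{f^\star}(x)$ of the half-spaces $H_{\phi, x, a} := \{\theta \in \bR^{d_\phi} : (\phi(x, a) - \phi(x, \pi^\star_{f^\star}(x)))^\transp \theta > 0\}$. Since the infimum over a union equals the minimum of the infima over each component, the above constraint rewrites as
\[
\min_{\phi \in \Phi,\, x \in \cX,\, a \neq \pi^\star_{f^\star}(x)} \; \inf_{\theta \in H_{\phi, x, a}} \tfrac{1}{2} \| f^\star - F_\phi \theta \|_{D_\eta}^2 \geq 1.
\]
The inner infimum is almost exactly $\cI_\eta(f^\star, \phi, x, a)$ as defined in \eqref{eq:optim-half-space-matrix}; the only discrepancy is that $\cI_\eta$ uses the closed half-space $\{\theta : (\phi(x, a) - \phi(x, \pi^\star_{f^\star}(x)))^\transp \theta \geq 0\}$ instead of the open one.

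The sole technical point I would need to address is this open-vs-closed mismatch. Since $\theta \mapsto \| f^\star - F_\phi \theta \|_{D_\eta}^2$ is continuous, any candidate minimizer lying on the boundary hyperplane $(\phi(x, a) - \phi(x, \pi^\star_{f^\star}(x)))^\transp \theta = 0$ can be approached by a sequence of interior points in $H_{\phi, x, a}$, so the infimum over the open half-space coincides with the infimum over its closure, which by definition is $\cI_\eta(f^\star, \phi, x, a)$. Substituting this identification, the global constraint becomes equivalent to $\cI_\eta(f^\star, \phi, x, a) \geq 1$ for every $\phi \in \Phi$, $x \in \cX$, and $a \neq \pi^\star_{f^\star}(x)$, which is exactly the formulation in Corollary \ref{cor:replearn-lb-implicit}. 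I do not anticipate any genuine obstacle beyond this one-line continuity remark; everything else is direct bookkeeping inherited from the derivation of Theorem \ref{th:lower-bound-repr}, which is why the result is billed as an immediate corollary.
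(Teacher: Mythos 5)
Your proposal is correct and follows exactly the paper's route: the corollary is read off from the intermediate step of the proof of Theorem \ref{th:lower-bound-repr}, i.e., instantiating Theorem \ref{th:lower-bound} on $\cF_\Phi$, decomposing $\Lambda(f^\star,\cF_\Phi)$ into the half-spaces of \eqref{eq:alt-half-space}, and stopping at the constraint \eqref{eq:constr-general} before applying Lemma \ref{lem:inf-general-halfspace}. Your continuity remark about the open versus closed half-space is precisely the paper's parenthetical ``(the closure of)'' in the proof of Theorem \ref{th:lower-bound-repr}, so nothing is missing.
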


\subsection{Minimizing Over Half-Spaces}

We derive a general result which gives a closed-form expression for the minimization of the mean-square error for predicting a function $f^\star$ using the linear space $\cF_{\{\phi\}}$ subject to the constraint that the parameter $\theta$ lies into some half-space.

\begin{lemma}\label{lem:inf-general-halfspace}
    Let $\phi\in\Phi$, $z \in \bR^{d_\phi}$, and $D_\eta$ be any $XA\times XA$ diagonal matrix with non-negative entries. Consider the optimization problem
    \begin{align}\label{eq:optim-general-halfspace}
        \cI_\eta({f^\star}, \phi, z) := \min_{\theta\in\bR^{d_\phi}} \left\{ \frac{1}{2}\| {f^\star} - F_\phi \theta\|_{D_\eta}^2 \quad \mathrm{s.t.} \quad z^\transp \theta \geq 0\right\}.
    \end{align}
    Let $\theta_\eta({f^\star}, \phi, z)$ be a parameter attaining the minimum. Then,
    \begin{align*}
        \cI_\eta({f^\star}, \phi, z) &= \frac{1}{2}\| {f^\star} - F_\phi \theta^\star_\eta(\phi)\|_{D_\eta}^2 + \indi{z^\transp \theta^\star_\eta(\phi) \leq 0, z\in\im(V_\eta(\phi))} \frac{(z^\transp \theta^\star_\eta(\phi))^2}{2\|z\|_{V_\eta(\phi)^\dagger}^2},
        \\ \theta_\eta({f^\star}, \phi, z) &= \theta^\star_\eta(\phi)  - \indi{z^\transp \theta^\star_\eta(\phi) \leq 0, z\in\im(V_\eta(\phi))} \frac{z^\transp \theta^\star_\eta(\phi)}{\|z\|_{V_\eta(\phi)^\dagger}^2}V_\eta(\phi)^\dagger z.
    \end{align*}
    \end{lemma}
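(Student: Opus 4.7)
The problem \eqref{eq:optim-general-halfspace} is a convex quadratic program with a single linear inequality constraint, so I would solve it by standard Lagrangian/KKT analysis, with a case split on (i) whether the constraint is active at the optimum and (ii) whether $z \in \im(V_\eta(\phi))$. First I would record the unconstrained picture: the set of unconstrained minimizers of $\tfrac{1}{2}\| f^\star - F_\phi \theta\|_{D_\eta}^2$ is the affine subspace $\theta^\star_\eta(\phi) + \ker(V_\eta(\phi))$, and on that set the objective takes the constant value $\tfrac{1}{2}\|f^\star - F_\phi \theta^\star_\eta(\phi)\|_{D_\eta}^2$. A short lemma I would prove first is the identity $V_\eta(\phi)\, \theta^\star_\eta(\phi) = F_\phi^\transp D_\eta f^\star$, which follows from writing $B := D_\eta^{1/2} F_\phi$ and noting that $\im(V_\eta(\phi)) = \im(B^\transp B) = \im(B^\transp)$ contains $F_\phi^\transp D_\eta f^\star = B^\transp D_\eta^{1/2} f^\star$, so that $V_\eta(\phi) V_\eta(\phi)^\dagger F_\phi^\transp D_\eta f^\star = F_\phi^\transp D_\eta f^\star$.

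\textbf{The easy case: the unconstrained optimum is feasible.} If $z \in \im(V_\eta(\phi))$ and $z^\transp \theta^\star_\eta(\phi) \geq 0$, then $\theta^\star_\eta(\phi)$ itself is feasible and so $\cI_\eta(f^\star,\phi,z)$ equals the unconstrained value, with no correction term; both indicator conditions in the lemma are vacuous. If instead $z \notin \im(V_\eta(\phi))$, pick any $u \in \ker(V_\eta(\phi))$ with $z^\transp u \neq 0$, which exists by $z \not\perp \ker(V_\eta(\phi))$; then $\theta^\star_\eta(\phi) + t u$ remains an unconstrained minimizer for every $t \in \mathbb{R}$, and a suitable sign of $t$ makes it feasible. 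In both situations, the constrained optimum equals the unconstrained one, in agreement with the stated formula (both indicators vanishing).

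\textbf{The active-constraint case.} The only remaining case is $z \in \im(V_\eta(\phi))$ and $z^\transp \theta^\star_\eta(\phi) < 0$, where no unconstrained minimizer is feasible, so by convexity the minimum must be attained on $\{z^\transp \theta = 0\}$. Minimizing $\tfrac{1}{2}\| f^\star - F_\phi \theta\|_{D_\eta}^2$ subject to $z^\transp \theta = 0$, the KKT condition reads $V_\eta(\phi)\theta - F_\phi^\transp D_\eta f^\star = \lambda z$, i.e., using the identity above, $V_\eta(\phi)(\theta - \theta^\star_\eta(\phi)) = \lambda z$. Since $z \in \im(V_\eta(\phi))$ we may solve $\theta - \theta^\star_\eta(\phi) = \lambda V_\eta(\phi)^\dagger z \,(+\, \ker)$, and the constraint $z^\transp \theta = 0$ determines $\lambda = -z^\transp \theta^\star_\eta(\phi)/\|z\|_{V_\eta(\phi)^\dagger}^2$. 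Plugging into the objective, the cross term vanishes (again by $V_\eta(\phi)\theta^\star_\eta(\phi) = F_\phi^\transp D_\eta f^\star$) and the quadratic correction equals $\tfrac{1}{2}\lambda^2 \|z\|_{V_\eta(\phi)^\dagger}^2 = (z^\transp \theta^\star_\eta(\phi))^2 / (2\|z\|_{V_\eta(\phi)^\dagger}^2)$, matching the closed-form expressions for both $\cI_\eta(f^\star,\phi,z)$ and the minimizer.

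\textbf{Main obstacle.} The subtlety that deserves attention is the rank-deficient case: when $V_\eta(\phi)$ is singular, the unconstrained minimizer is not unique, and one has to be careful about when the existing freedom in $\ker(V_\eta(\phi))$ suffices to absorb the constraint and when it does not. This is precisely what the clean dichotomy $z \in \im(V_\eta(\phi))$ vs.\ $z \notin \im(V_\eta(\phi))$ captures, and also why using the pseudo-inverse $V_\eta(\phi)^\dagger$ (together with the identity $V_\eta(\phi) V_\eta(\phi)^\dagger z = z$ valid on $\im(V_\eta(\phi))$) is essential throughout; apart from this bookkeeping, the computation is the standard Lagrangian for a single-constraint quadratic program.
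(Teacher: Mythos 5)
Your proof is correct. It rests on the same basic mechanism as the paper's --- a Lagrangian/KKT analysis of a single-constraint convex QP with the same three-way dichotomy (constraint inactive; $z\notin\im(V_\eta(\phi))$; constraint active with $z\in\im(V_\eta(\phi))$) --- but the execution differs in a way worth noting. The paper works entirely in the Lagrange dual after an explicit SVD change of variables $y=V^\transp\theta$ for $D_\eta^{1/2}F_\phi$: it computes $\tilde g(y_\lambda,\lambda)$ in singular-value coordinates, shows the dual is unbounded in $y$ for $\lambda\neq 0$ when $z\in\ker(V_\eta(\phi))$ (forcing $\lambda=0$), and otherwise maximizes an explicit quadratic in $\lambda$. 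You instead stay in the primal: you characterize the unconstrained solution set as the affine subspace $\theta^\star_\eta(\phi)+\ker(V_\eta(\phi))$, isolate the normal-equation identity $V_\eta(\phi)\theta^\star_\eta(\phi)=F_\phi^\transp D_\eta f^\star$ as the organizing lemma (the paper uses the same fact implicitly when it verifies $\alpha=0$), and dispatch the $z\notin\im(V_\eta(\phi))$ case by directly exhibiting a feasible unconstrained minimizer via a kernel direction $u$ with $z^\transp u\neq 0$ --- a more geometric and arguably more transparent argument than the paper's dual-unboundedness step. Your approach avoids the SVD bookkeeping entirely at the cost of having to justify that the active-constraint minimum lies on the hyperplane $\{z^\transp\theta=0\}$ (which you do correctly via convexity) and that the cross term in the objective expansion vanishes (which again follows from your normal-equation identity). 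One cosmetic remark: for $z=0$ the stated closed form contains a $0/0$; your ``unconstrained optimum is feasible'' case covers $z=0$ correctly (no correction term), whereas the paper isolates it as a separate Case 1 --- either way there is no gap.
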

\begin{proof}
    Note that, for any vector $y\in\bR^{XA}$, $\|y\|_{D_\eta}^2 = \|D_\eta^{1/2}y\|_2^2$. The Lagrange dual problem corresponding to \eqref{eq:optim-general-halfspace} is
    \begin{align*}
        \max_{\lambda \in \R_{\geq 0}} \min_{\theta \in \mathbb{R}^{d_\phi}} \underbrace{ \left\{\frac{1}{2}\| D_\eta^{1/2}{f^\star} - D_\eta^{1/2}F_\phi \theta\|_{2}^2 - \lambda z^\transp \theta \right\} }_{:= {g(\theta, \lambda)}}.
    \end{align*}
    Let us fix $\lambda$ and optimize ${g(\theta, \lambda)}$ over $\theta$. Let $(U,\Sigma,V)$ be an SVD decomposition of $D_\eta^{1/2}F_\phi$, i.e., such that $D_\eta^{1/2}F_\phi = U \Sigma V^\transp$ with $U\in\bR^{XA\times XA}$, $\Sigma \in \bR^{XA\times d_\phi}$ (diagonal), and $V\in\bR^{d_\phi\times d_\phi}$. Since both $U$ and $V$ are orthogonal matrices (i.e., $U^\transp U = U U^\transp = I_{XA}$ and $V^\transp V = V V^\transp = I_{d_\phi}$),
    \begin{align*}
        {g(\theta, \lambda)} = \frac{1}{2}\| U^\transp D_\eta^{1/2}{f^\star} - \Sigma V^\transp \theta\|_{2}^2 - \lambda z^\transp V V^\transp \theta.
    \end{align*}
    Note that this follows since, for any two vectors $x,y\in\bR^{XA}$, by orthogonality of $U$,
    \begin{align*}
        \|x - Uy\|_2^2 = \|U U^\transp x - U y\|_2^2 = (U^\transp x - y)^\transp U^\transp U (U^\transp x - y) = \|U^T x - y\|_2^2.
    \end{align*}
    We can now perform a change of variables $y = V^\transp \theta$ and define the function
    \begin{align*}
        \tilde{g}(y, \lambda) = \frac{1}{2}\| U^\transp D_\eta^{1/2}{f^\star} - \Sigma y\|_{2}^2 - \lambda z^\transp V y.
    \end{align*}
    Since $V$ is invertible and $V^\transp = V^{-1}$, if $y_\lambda$ is a minimizer of $\tilde{g}(y, \lambda)$, then $\theta_\lambda = V y_\lambda$ is a minimizer of ${g(\theta, \lambda)}$.

    Let us thus minimize $\tilde{g}(y, \lambda)$ as a function of $y$. Its gradient w.r.t. $y$ is
    \begin{align*}
        \nabla_y \tilde{g}(y, \lambda) = -\Sigma^\transp (U^\transp D_\eta^{1/2}{f^\star} - \Sigma y) - \lambda V^\transp z.
    \end{align*}
    Equating it to zero, we obtain the inequality
    \begin{align*}
        \Sigma^\transp \Sigma y = \Sigma^\transp U^\transp D_\eta^{1/2}{f^\star} + \lambda V^\transp z.
    \end{align*}
    We now distinguish three cases.

    \paragraph{Case 1: $z = 0_{d_\phi}$}

    Note that $\Sigma^\transp \Sigma$ is a $d_\phi \times d_\phi$ diagonal matrix with $\{\sigma_i^2\}_{i\in[d_\phi]}$ on its diagonal, i.e., the squared singular values of $D_\eta^{1/2}F_\phi$ or, equivalently, the eigenvalues of $(D_\eta^{1/2}F_\phi)^\transp D_\eta^{1/2}F_\phi = F_\phi^\transp D_\eta F_\phi = V_\eta(\phi)$. Note that some of these might be zero as $V_\eta(\phi)$ might not be full rank. In this case, it is easy to see that a solution is
    \begin{align*}
        y_\lambda = (\Sigma^\transp \Sigma)^{\dagger} \Sigma^\transp U^\transp D_\eta^{1/2}{f^\star} = \Sigma^\dagger U^\transp D_\eta^{1/2}{f^\star},
    \end{align*}
    where we used Property 2 of the pseudo-inverse (see Appendix \ref{app:linalg}). Note that $\lambda$ has no impact on the optimization problem (in other words, there is no constraint to be satisfied). That is, the optimal $\theta$ solving the original optimization problem \eqref{eq:optim-general-halfspace} is
    \begin{align*}
        \theta = V y_\lambda = V \Sigma^\dagger U^\transp D_\eta^{1/2}{f^\star} \stackrel{(a)}{=} (D_\eta^{1/2}F_\phi)^\dagger D_\eta^{1/2}{f^\star} \stackrel{(b)}{=} (F_\phi^\transp D_\eta F_\phi)^\dagger F_\phi^\transp D_\eta{f^\star} = V_\eta(\phi)^\dagger F_\phi^\transp D_\eta{f^\star} = \theta_\eta^\star(\phi),
    \end{align*}
    where (a) uses the definition of pseudo-inverse of $D_\eta^{1/2}F_\phi$, (b) uses Property 2 of the pseudo-inverse, and the last two equalities use respectively the definition of $V_\eta(\phi)$ and of $\theta_\eta^\star(\phi)$ (see Appendix \ref{app:notation}).

    \paragraph{Case 2: $z \in \ker(V_\eta(\phi))$ and $z \neq 0_{d_\phi}$}

    Suppose that $V_\eta(\phi)$ has rank $d' < d_\phi$ and that the singular values in $\Sigma$ are sorted in non-increasing order. Then, since the columns of $V$ from index $d'+1$ to index $d_\phi$ span $\ker(V_\eta(\phi))$, the vector $V^\transp z$ has at least one non-zero element in a coordinate $i \in \{d'+1, \dots, d_\phi\}$. However, the vectors $\Sigma^\transp \Sigma y$ and $\Sigma^\transp U^\transp D_\eta^{1/2}{f^\star}$ have clearly all zero components in those coordinates. This means that the gradient cannot be equated to zero. In particular, this implies that, for any $\lambda \neq 0$, we can find $y$ such that $\tilde{g}(y,\lambda) = -\infty$. That is, the optimal solution must be at $\lambda = 0$, in which case we reduce to Case 1 (the constraint has no impact) and we get the same optimal parameter/value.

    \paragraph{Case 3: $z \notin \ker(V_\eta(\phi))$}

    In this case, $[V^\transp z]_i = 0$ for all $i \in \{d'+1, \dots, d_\phi\}$. That is, the gradient can now be equated to zero yielding, by Property 2 of the pseudo-inverse,
    \begin{align*}
        y_\lambda = (\Sigma^\transp \Sigma)^{\dagger} \Sigma^\transp U^\transp D_\eta^{1/2}{f^\star} + \lambda (\Sigma^\transp \Sigma)^{\dagger} V^\transp z = \Sigma^\dagger U^\transp D_\eta^{1/2}{f^\star} + \lambda (\Sigma^\transp \Sigma)^{\dagger} V^\transp z.
    \end{align*}
    Then,
    \begin{align*}
        \tilde{g}(y_\lambda,\lambda) = \frac{1}{2}\| U^\transp D_\eta^{1/2}{f^\star} - \Sigma \Sigma^\dagger U^\transp D_\eta^{1/2}{f^\star} - \lambda \Sigma (\Sigma^\transp \Sigma)^{\dagger} V^\transp z\|_{2}^2 - \lambda z^\transp V \Sigma^\dagger U^\transp D_\eta^{1/2}{f^\star} - \lambda^2 z^\transp V (\Sigma^\transp \Sigma)^{\dagger} V^\transp z.
    \end{align*}
    Note that $\Sigma (\Sigma^\transp \Sigma)^{\dagger} = (\Sigma^\transp)^\dagger$, $\Sigma^\dagger U^\transp D_\eta^{1/2}{f^\star} = V^\transp \theta^\star_\eta(\phi)$ (proved in Case 1), and, by Property 4 of pseudo-inverses,
    \begin{align*}
        V (\Sigma^\transp \Sigma)^{\dagger} V^\transp = V \Sigma^\dagger (\Sigma^\transp)^\dagger V^\transp = V \Sigma^\dagger U^\transp U (\Sigma^\transp)^\dagger V^\transp = (D_\eta^{1/2}F_\phi)^\dagger ((D_\eta^{1/2}F_\phi)^\transp)^\dagger = ((D_\eta^{1/2}F_\phi)^\transp D_\eta^{1/2}F_\phi)^\dagger = V_\eta(\phi)^\dagger.
    \end{align*}
    Hence,
    \begin{align*}
        \tilde{g}(y_\lambda,\lambda) &= \frac{1}{2}\| \underbrace{U^\transp D_\eta^{1/2}{f^\star} - \Sigma V^\transp \theta^\star_\eta(\phi)}_{:= w} - \lambda (\Sigma^\transp)^\dagger V^\transp z\|_{2}^2 - \lambda z^\transp \theta^\star_\eta(\phi) - \lambda^2 \|z\|_{V_\eta(\phi)^\dagger}^2
        \\ &= \frac{1}{2}\| w \|_{2}^2 + \frac{\lambda^2}{2} \underbrace{\| (\Sigma^\transp)^\dagger V^\transp z \|_2^2}_{= \|z\|_{V_\eta(\phi)^\dagger}^2} - \lambda z^\transp V \Sigma^\dagger w  - \lambda z^\transp \theta^\star_\eta(\phi) - \lambda^2 \|z\|_{V_\eta(\phi)^\dagger}^2
        \\ &= \frac{1}{2}\| w \|_{2}^2 - \lambda z^\transp \underbrace{V \Sigma^\dagger U^\transp D_\eta^{1/2}{f^\star}}_{= \theta^\star_\eta(\phi)} + \lambda z^\transp \underbrace{V \Sigma^\dagger \Sigma V^\transp \theta^\star_\eta(\phi)}_{= \theta^\star_\eta(\phi)} - \lambda z^\transp \theta^\star_\eta(\phi) - \frac{\lambda^2}{2} \|z\|_{V_\eta(\phi)^\dagger}^2
        \\ &= \frac{1}{2}\| w \|_{2}^2 - \lambda z^\transp \theta^\star_\eta(\phi) - \frac{\lambda^2}{2} \|z\|_{V_\eta(\phi)^\dagger}^2.
    \end{align*}
    Differentating w.r.t. $\lambda$,
    \begin{align*}
        \frac{\mathrm{d} \tilde{g}(y_\lambda,\lambda)}{\mathrm{d} \lambda} = -z^\transp \theta^\star_\eta(\phi) - \lambda \|z\|_{V_\eta(\phi)^\dagger}^2.
    \end{align*}
    If $z^\transp \theta^\star_\eta(\phi) \leq 0$, then the optimum is obtained with $\lambda = -z^\transp \theta^\star_\eta(\phi) / \|z\|_{V_\eta(\phi)^\dagger}^2$, otherwise the optimum is at $\lambda = 0$. Therefore, the optimal $\lambda$ is
    \begin{align*}
        \lambda = -\frac{z^\transp \theta^\star_\eta(\phi) \indi{z^\transp \theta^\star_\eta(\phi) \leq 0}}{\|z\|_{V_\eta(\phi)^\dagger}^2}.
    \end{align*}
    Plugging this into the expression of $y_\lambda$ and then into $\theta_\lambda = V y_\lambda$, we find that
    \begin{align*}
        \theta_\lambda = \theta^\star_\eta(\phi)  -  \frac{z^\transp \theta^\star_\eta(\phi) \indi{z^\transp \theta^\star_\eta(\phi) \leq 0}}{\|z\|_{V_\eta(\phi)^\dagger}^2}V(\Sigma^\transp \Sigma)^{\dagger}V^\transp z = \theta^\star_\eta(\phi)  -  \frac{z^\transp \theta^\star_\eta(\phi) \indi{z^\transp \theta^\star_\eta(\phi) \leq 0}}{\|z\|_{V_\eta(\phi)^\dagger}^2}V_\eta(\phi)^\dagger z.
    \end{align*}
    \paragraph{Final result}
    Combining the three cases, it is easy to see that the optimal parameter is
    \begin{align*}
        \theta_\eta({f^\star}, \phi, z) = \theta^\star_\eta(\phi)  - \indi{z^\transp \theta^\star_\eta(\phi) \leq 0, z\in\im(V_\eta(\phi))} \frac{z^\transp \theta^\star_\eta(\phi)}{\|z\|_{V_\eta(\phi)^\dagger}^2}V_\eta(\phi)^\dagger z.
    \end{align*}
    The corresponding optimal value is 
    \begin{align*}
        \cI_\eta({f^\star}, \phi, z) &:= \frac{1}{2}\| {f^\star} - F_\phi \theta_\eta({f^\star}, \phi, z)\|_{D_\eta}^2 
        \\ &= \frac{1}{2}\| {f^\star} - F_\phi \theta^\star_\eta(\phi)\|_{D_\eta}^2 + \indi{z^\transp \theta^\star_\eta(\phi) \leq 0, z\in\im(V_\eta(\phi))}\left( \frac{(z^\transp \theta^\star_\eta(\phi))^2}{2\|z\|_{V_\eta(\phi)^\dagger}^2} + \frac{z^\transp \theta^\star_\eta(\phi)}{\|z\|_{V_\eta(\phi)^\dagger}^2} \alpha\right),
    \end{align*}
    where
    \begin{align*}
        \alpha = z^\transp V_\eta(\phi)^\dagger F_\phi^\transp D_\eta({f^\star} - F_\phi \theta^\star_\eta(\phi))
         &= z^\transp \theta^\star_\eta(\phi) - z^\transp V_\eta(\phi)^\dagger V_\eta(\phi)\theta^\star_\eta(\phi)
        \\ &= z^\transp \theta^\star_\eta(\phi) - z^\transp V_\eta(\phi)^\dagger V_\eta(\phi)V_\eta(\phi)^\dagger F_\phi^\transp D_\eta{f^\star}
        \\ &= z^\transp \theta^\star_\eta(\phi) - z^\transp V_\eta(\phi)^\dagger F_\phi^\transp D_\eta{f^\star} = z^\transp \theta^\star_\eta(\phi) - z^\transp \theta^\star_\eta(\phi) = 0.
    \end{align*}
    This concludes the proof.

\end{proof}

\section{Proof of Propositions 2-7}\label{app:proofs}

\subsection{Proof of Proposition \ref{prop:replearn-not-easier-than-clb}}

    This result follows easily from the definition of $\cC(f^\star,\cF_\Phi)$ in Theorem \ref{th:lower-bound-repr}: for any realizable $\phi\in\Phi$ (i.e., such that $f^\star\in\cF_{\{\phi\}}$), it is sufficient to drop the constraints associated with all representations except those for $\phi$ itself. The resulting optimization problem yields exactly the lower bound for the CLB $(f^\star,\cF_{\{\phi\}})$ (Corollary \ref{cor:linear}). Then it must be that $\cC(f^\star,\cF_\Phi) \geq \cC(f^\star,\cF_{\{\phi\}})$ since we enlarged the feasibility set.

    Alternatively, one can see this as a consequence of Proposition \ref{prop:monotone} by noting that $\cF_\Phi = \cup_{\phi\in\Phi} \cF_{\{\phi\}}$. \qed

\subsection{Proof of Proposition \ref{prop:replearn-uns}}

Suppose, for simplicity, that $X(A-1)/(d-1)$ is an integer value.\footnote{The proof trivially extends to the case where $X(A-1)/(d-1)$ is not an integer by consider $\lceil X(A-1)/(d-1) \rceil$ instead.} Let $\cZ = \{(x,a)\in\cX\times\cA : a \neq \pi_{f^\star}^\star(x)\}$ be the set of sub-optimal context-arm pairs for ${f^\star}$. Note that it has cardinality $X(A-1)$. Let us partition it in $n = X(A-1)/(d-1)$ subsets $\cZ_1,\dots,\cZ_n$, each of size $d-1$. Let us enumerate the sub-optimal context-arm pairs as follows: for $i\in[n],j\in[d-1]$, $(x_{ij},a_{ij})$ denotes the $j$-th context-arm pair contained in $\cZ_i$.

Let us define a set $\Phi = \{\phi_1,\dots,\phi_n\}$ of $n$ $d$-dimensional representations as follows. For each $i\in[n]$, we choose
\begin{align*}
    \phi_i(x,a) = \begin{bmatrix}
        {f^\star}(x,a) \\
        \Delta_{f^\star}(x,a)\indi{(x,a)=(x_{i1},a_{i1})} \\
        \vdots \\
        \Delta_{f^\star}(x,a)\indi{(x,a)=(x_{i(d-1)},a_{i(d-1)})}
        \end{bmatrix}
\end{align*}
Moreover, for each sub-optimal pair $(x_{ij},a_{ij})$, let us define the function $f_{ij}$ as
\begin{align*}
    f_{ij}(x,a) = \begin{cases}
        {f^\star}(x,a) + \Delta_{{f^\star}}(x,a) & \text{if } (x,a)=(x_{ij},a_{ij})\\
        {f^\star}(x,a) & \text{otherwise}.
    \end{cases}
\end{align*}

Then, it is easy to see that $f \in \cF_{\{\phi_i\}}$ for each $f \in \{{f^\star}, f_{i1},\dots, f_{i(d-1)}\}$. In particular, ${f^\star}$ is realized by choosing $\theta = (1,0,\dots,0)^\transp$, while $f_{ij}$ with $\theta = (1,0,\dots,1,0,\dots,0)^\transp$ with the second $1$ in position $j+1$. This implies that \emph{all} $\phi\in\Phi$ are realizable for $f^\star$.

    \paragraph{Lower bounding $\cC({f^\star}, \cF_\Phi)$}

    Note that, by our choice of $\Phi$, $f_{ij} \in \cF_{\Phi}$ for all $i\in[n],j\in[d-1]$. Moreover, $f_{ij}$ corresponds to the closest alternative for the half-space associated with the suboptimal pair $(x_{ij},a_{ij})$ in the unstructured lower bound (see the proof of Theorem \ref{th:lower-bound-uns}). This implies that, if we evaluate the constraint at $(x_{ij},a_{ij})$ and $\phi_{i}$ in the implicit form of Theorem \ref{th:lower-bound-repr} (see Corollary \ref{cor:replearn-lb-implicit}), we obtain that
    \begin{align*}
        \cI_\eta({f^\star}, \phi_i, x_{ij}, a_{ij}) &= \min_{\theta: \phi_i(x_{ij},a_{ij})^\transp \theta \geq \phi_i(x_{ij},\pi_{f^\star}^\star(x_{ij}))^\transp \theta} \frac{1}{2}\sum_{x\in\cX}\sum_{a\in\cA} \eta(x,a) \Big({f^\star}(x,a) - \phi_i(x,a)^\transp \theta\Big)^2
        \\ &= \frac{1}{2}\eta(x_{ij},a_{ij})\Delta_{f^\star}(x_{ij},a_{ij}),
    \end{align*}
    since the optimum is attained by $\theta$ such that $\phi_i(x,a)^\transp \theta = f_{ij}(x,a)$. The constraint imposes that the right-hand side is larger than $1$. This holds for all sub-optimal pairs $(x_{ij},a_{ij})$. Therefore, if we call $\bar{\eta}$ an optimal feasible allocation,
    \begin{align*}
        \cC({f^\star}, \cF_\Phi) = \sum_{x\in\cX}\sum_{a\in\cA}\bar{\eta}(x,a)\Delta_{{f^\star}}(x,a) = \sum_{x\in\cX}\sum_{a\neq\pi_{f^\star}^\star(x)} \frac{2}{\Delta_{f^\star}(x,a)}.
    \end{align*}

    \paragraph{Upper bounding $\sup_{\phi\in\Phi} \cC({f^\star}, \cF_{\{\phi\}})$}

   Now take some $\phi_i\in\Phi$ for $i\in[n]$. From Corollary \ref{cor:replearn-lb-implicit}, we know that any feasible $\eta$ for the optimization problem in $\cC({f^\star}, \cF_{\{\phi_i\}})$ must satisfy
   \begin{equation*}
    \inf_{\theta: \phi_i(\bar{x},\bar{a})^\transp \theta \geq \phi_i(\bar{x},\pi_{f^\star}^\star(\bar{x}))^\transp \theta}\sum_{x\in\cX}\sum_{a\in\cA} \eta(x,a) \Big({f^\star}(x,a) - \phi_i(x,a)^\transp \theta\Big)^2 \geq 2 \quad \forall \bar{x}\in\cX,\bar{a}\neq \pi_{f^\star}^\star(\bar{x}).
    \end{equation*}
    For any $\theta$, from the definition of $\phi_i$,
    \begin{align*}
        \sum_{x\in\cX}&\sum_{a\in\cA} \eta(x,a) \Big({f^\star}(x,a) - \phi_i(x,a)^\transp \theta\Big)^2 
        \\ &= \sum_{x\in\cX}\sum_{a\in\cA} \eta(x,a) \Big({f^\star}(x,a)(1-\theta_1) - \Delta_{{f^\star}}(x,a)\sum_{j: (x,a) = (x_{ij},a_{ij})} \theta_{j+1}\Big)^2
        \\ &= \sum_{(x,a)\notin \{(x_{ij},a_{ij})\}_{j\in[d-1]}} \eta(x,a) \Big({f^\star}(x,a)(1-\theta_1) \Big)^2 + \sum_{j=1}^{d-1} \eta(x_{ij},a_{ij}) \Big({f^\star}(x_{ij},a_{ij})(1-\theta_1) - \Delta_{{f^\star}}(x_{ij},a_{ij}) \theta_{j+1}\Big)^2.
    \end{align*}
    Let us now minimize this quantity for $\theta$ such that $\phi_i(\bar{x},\bar{a})^\transp \theta \geq \phi_i(\bar{x},\pi(\bar{x}))^\transp \theta$ for some fixed $(\bar{x},\bar{a})$. 
    
    Clearly, for any $(\bar{x},\bar{a}) \notin \{(x_{ij},a_{ij})\}_{j\in[d-1]}$, we have $\phi_i(\bar{x},\bar{a})^\transp \theta = {f^\star}(\bar x, \bar a)$. This implies that there is no constraint associated with such a pair $(\bar{x},\bar{a})$ (the set of which we take the infimum is actually empty). Thus, we can set $\eta(\bar{x},\bar{a}) = 0$ for such pairs.
    
    For $(\bar{x},\bar{a}) = (x_{ij},a_{ij})$ for some $j\in[d-1]$, the condition on $\theta$ becomes $\theta_{j+1} \Delta_{f^\star}(x_{ij},a_{ij}) \geq \theta_1 \Delta_{f^\star}(x_{ij},a_{ij})$ which is equivalent to $\theta_{j+1} \geq \theta_1$ since $\Delta_{f^\star}(x_{ij},a_{ij}) > 0$. Let us suppose that $\eta(x,\pi_{f^\star}^\star(x)) = \infty$ (or a very large value) for all $x$. Note that this has no impact on the final objective value since we pay zero regret for playing optimal actions. Then, it is easy to see that the solution to the optimization problem above is attained with $\theta_1 = 1$ (otherwise the first term would be extremely large), $\theta_{j+1} = 1$, and $\theta_{l+1} = 0$ for $l\neq j$. The corresponding constraint is thus $\eta(x_{ij},a_{ij}) \geq 2/\Delta_{f^\star}(x_{ij},a_{ij})^2$ for all $j\in[d-1]$. Therefore, the resulting optimal allocation $\bar{\eta}$ requires playing (1) exactly $2/\Delta_{f^\star}(x_{ij},a_{ij})^2$ times all pairs in $\{(x_{ij},a_{ij})\}_{j\in[d-1]}$, (2) a large number of times the optimal pairs, and (3) zero times all the other pairs. Its regret is trivially
    \begin{align*}
        \sum_{x\in\cX}\sum_{a\in\cA}\bar{\eta}(x,a)\Delta_{{f^\star}}(x,a) = \sum_{j=1}^{d-1} \frac{2}{\Delta_{f^\star}(x_{ij},a_{ij})} \leq \frac{2(d-1)}{\Delta_{\min}}.
    \end{align*}
    This holds for all $i\in[n]$, which concludes the proof.

    \qed

    \subsection{Proof of Proposition \ref{prop:replearn-equal}}

        We already proved in Proposition \ref{prop:replearn-not-easier-than-clb} that $\cC(f^\star,\cF_\Phi) \geq \sup_{\phi\in\Phi: f^\star\in\cF_{\{\phi\}}} \cC(f^\star,\cF_{\{\phi\}}) =  \cC(f^\star,\cF_{\{\phi^\star\}})$, where the equality holds since $\phi^\star$ is the unique realizable representation in $\Phi$. We now prove that $\cC(f^\star,\cF_\Phi) \leq \cC(f^\star,\cF_{\{\phi^\star\}})$. Combining these two results clearly proves the statement.

        Consider any optimal feasible allocation $\bar{\eta}$ for the optimization problem associated to $\cC(f^\star, \cF_{\{\phi^\star\}})$ (i.e., the CLB with known $\phi^\star$).  Let $\tilde{\eta} = \bar{\eta} + \frac{2}{\epsilon}\eta^\star$. Clearly, since $\eta^\star$ plays only optimal actions (i.e., with zero regret),
        \begin{align*}
            \sum_{x,a}\tilde{\eta}(x,a)\Delta_{f^\star}(x,a) = \sum_{x,a}\bar{\eta}(x,a)\Delta_{f^\star}(x,a) = \cC(f^\star, \cF_{\{\phi^\star\}}).
        \end{align*}
        Moreover, for any $\phi\in\Phi$ such that $\phi\neq \phi^\star$,
        \begin{align*}
            \| f^\star - F_\phi \theta^\star_{\tilde{\eta}}(\phi)\|_{D_{\tilde{\eta}}}^2 = \min_{\theta\in\bR^{d_\phi}}\| f^\star - F_\phi \theta\|_{D_{\tilde{\eta}}}^2 \geq \frac{2}{\epsilon} \min_{\theta\in\bR^{d_\phi}}\| f^\star - F_\phi \theta\|_{D_{{\eta^\star}}}^2 = \frac{2}{\epsilon} \| f^\star - F_\phi \theta^\star_{{\eta^\star}}(\phi)\|_{D_{{\eta^\star}}}^2 \geq 2,
        \end{align*}
        where the last inequality follows by the assumption on all misspecified representations. This implies that $\tilde{\eta}$ is feasible in the optimization problem of $\cC(f^\star,\cF_\Phi)$, which, together with the equality above, proves that $\cC(f^\star,\cF_\Phi) \leq \cC(f^\star, \cF_{\{\phi^\star\}})$.

        \qed

    \subsection{Proof of Proposition \ref{prop:replearn-trivial-rep}}

    First note that $\cF_{\{\bar{\phi}\}}$ is the set of all possible reward functions. In fact, for any mapping $f : \cX \times \cA \rightarrow \bR$, $f(x,a) = \bar{\phi}(x,a)^\transp \theta$ for $\theta$ the vectorization of $f$, which implies that $f\in\cF_{\{\bar{\phi}\}}$. Then, since $\cF_{\{\bar{\phi}\}} \subseteq \cF_{\Phi \cup \{\bar{\phi}\}}$ by definition, $\cF_{\Phi \cup \{\bar{\phi}\}}$ is also the set of all possible reward functions, which implies that $\cC(f^\star,\cF_{\Phi \cup \{\bar{\phi}\}})$ is exactly the complexity of Theorem \ref{th:lower-bound-uns} for learning $f^\star$ without any prior knowledge. This is exactly the one stated in Proposition \ref{prop:replearn-trivial-rep}.
    \qed

    \subsection{Proof of Proposition \ref{prop:nested-no-adaptivity}}

    The inequality $\cC(f^\star,\cF_\Phi) \geq \cC(f^\star, \cF_{\{\phi_N\}})$ is an immediate consequence of Proposition \ref{prop:replearn-not-easier-than-clb} since $\phi_N$ is realizable. The equality follows since one could simply run an instance optimal algorithm for CLB problems \citep[e.g.,][]{tirinzoni2020asymptotically} on representation $\phi_N$ and obtain regret asymptotically approaching $\cC(f^\star, \cF_{\{\phi_N\}})\log(T)$.
    \qed

    \subsection{Proof of Proposition \ref{prop:sublog-nec-suff-explicit}}

    Note that a necessary and sufficient condition for $\cC({f^\star}, \cF_{\Phi}) = 0$ is that there exists a feasible allocation $\eta$ with $\eta(x,a)=0$ for all $x\in\cX$ and $a\neq \pi^\star_{f^\star}(x)$. That is, a feasible allocation which plays only optimal arms.

    Let us start by proving that the condition is necessary for $\cC({f^\star}, \cF_{\Phi}) = 0$. Let us proceed by contradiction. Suppose that there exists $\phi$ with $\min_\theta\|{f^\star} - F_\phi\theta\|_{D_{\eta^\star}}^2 = 0$ and $x\in\cX,a\neq\pi_{f^\star}^\star(x)$ such that $z_\phi(x,a)^\transp \theta^\star_{\eta^\star}(\phi) \leq 0$ or $z_\phi(x,a)\notin\im(V_{\eta^\star}(\phi))$, while $\cC({f^\star}, \cF_{\Phi}) = 0$. Clearly, if $\cC({f^\star}, \cF_{\Phi}) = 0$, there exists a value $M > 0$ (possibly very large) such that the allocation $\eta^\star$ scaled by $M$ is feasible . Note that scaling by a constant does not affect the column space of the resulting design matrix (i.e., $\im(V_{\eta^\star}(\phi)) = \im(V_{M\eta^\star}(\phi))$), nor does it affect the best fit in $\cF_{\{\phi\}}$ (i.e., $\theta^\star_{\eta^\star}(\phi) = \theta^\star_{M\eta^\star}(\phi)$). However, the negation of the stated condition implies that there exists $(\phi,x,a)$ such that $\cI_{M\eta^\star}({f^\star}, \phi, x, a) = \cI_{\eta^\star}({f^\star}, \phi, x, a) = 0$. Here $\cI_{\eta^\star}({f^\star}, \phi, x, a) = 0$ is the constraint function associated to $(\phi,x,a)$ in Corollary \ref{cor:replearn-lb-implicit}. That is, $M\eta^\star$ is infeasible. This is clearly a contradiction, and thus the stated condition is necessary.

Let us now prove that the condition is sufficient for $\cC({f^\star}, \cF_{\Phi}) = 0$. Take any $\phi$. We consider two cases.

\paragraph{Case 1: $\min_\theta\|{f^\star} - F_\phi\theta\|_{D_{\eta^\star}}^2 > 0$}

Simply rescaling $\eta^\star$ by $M>0$ yields $\min_\theta\|{f^\star} - F_\phi\theta\|_{D_{M\eta^\star}}^2 = M\min_\theta\|{f^\star} - F_\phi\theta\|_{D_{\eta^\star}}^2$. Since the latter term is strictly positive, we can set $M = 1 / \min_\theta\|{f^\star} - F_\phi\theta\|_{D_{\eta^\star}}^2$ to guarantee that $\cI_{M\eta^\star}({f^\star}, \phi, x, a) \geq 1$. That is $M\eta^\star$ is feasible while maintaining the objective value to zero.

\paragraph{Case 2: $\min_\theta\|{f^\star} - F_\phi\theta\|_{D_{\eta^\star}}^2 = 0$}

Take any sub-optimal $(x,a)$. We shall prove that $z_\phi(x,a)^\transp \theta^\star_{\eta^\star}(\phi) > 0$ and $z_\phi(x,a)\in\im(V_{\eta^\star}(\phi))$ imply that $M\eta^\star$ is feasible for some sufficiently large $M$. We have
\begin{align*}
    \cI_{M\eta^\star}({f^\star}, \phi, x, a) = \frac{(z_\phi(x,a)^\transp \theta^\star_{\eta^\star}(\phi))^2}{2\|z_\phi(x,a)\|_{V_{M\eta^\star}(\phi)^\dagger}^2}.
\end{align*}
Let $(U,\Sigma, V)$ be the SVD of $D_{M\eta^\star}^{1/2} F_\phi$. Note that
\begin{align*}
    \|z_\phi(x,a)\|_{V_{M\eta^\star}(\phi)^\dagger}^2 = \|z_\phi(x,a)\|_{(F_\phi^\transp D_{M\eta^\star} F_\phi)^\dagger}^2 
    & = \|z_\phi(x,a)\|_{((D_{M\eta^\star}^{1/2}F_\phi)^\transp D_{M\eta^\star}^{1/2}F_\phi)^\dagger}^2 
    \\ & = \|z_\phi(x,a)\|_{V(\Sigma^\transp\Sigma)^\dagger V^\transp}^2 
    \\ &= z_\phi(x,a)^\transp V(\Sigma^\transp\Sigma)^\dagger V^\transp z_\phi(x,a).
\end{align*}
Suppose that $V_{M\eta^\star}(\phi)$ has rank $d' \leq d_\phi$ and that the singular values in $\Sigma$ are sorted in non-increasing order. It is easy to see that the first $d'$ singular values of $\Sigma$ are $\sqrt{M}\sigma_1, \dots, \sqrt{M}\sigma_{d'}$, where $\{\sigma_i\}_{i\in[d']}$ are the first $d'$ singular values of $D_{\eta^\star}^{1/2} F_\phi$ and are all strictly positive (this is because $\rank(V_{M\eta^\star}(\phi)) = \rank(V_{\eta^\star}(\phi))$). All other singular values are zero. Thus, we have
\begin{align*}
    \|z_\phi(x,a)\|_{V_\eta(\phi)^\dagger}^2 = \sum_{i=1}^{d'} \frac{[V^\transp z_\phi(x,a)]_i^2}{M\sigma_i^2}.
\end{align*}
Recall that $\im(V_{M\eta^\star}(\phi)) = \im(V_{\eta^\star}(\phi)) = \spn(v_1,\dots,v_d')$, where $\{v_i\}_{i\in[d']}$ denote the columns of $V$ associated with a non-zero singular value in $\Sigma$. Since $z_\phi(x,a)\in\im(V_{\eta^\star}(\phi))$, the vector $V^\transp z_\phi(x,a)$ has some non-zero element among the first $d'$ components and all zeros in the remaing $d_\phi-d'$. This implies that we can set $M$ to
\begin{align*}
    M = \frac{2\sum_{i=1}^{d'} \frac{[V^\transp z_\phi(x,a)]_i^2}{\sigma_i^2}}{(z_\phi(x,a)^\transp \theta^\star_{\eta^\star}(\phi))^2},
\end{align*}
which implies that $\cI_{M\eta^\star}({f^\star}, \phi, x, a) \geq 1$ and thus $M\eta^\star$ is feasible. Overall we proved that there exists a sufficiently large $M$ such that $M\eta^\star$ satisfy all the constraints while achieving an objective value of zero. This concludes the proof.
\qed

\subsection{Proof of Corollary \ref{cor:sublog-impossible-realizable}}

The proof is trivial from Proposition \ref{prop:sublog-nec-suff-explicit}: if there exists a realizable non-HLS representation $\phi$, we have by definition that $\min_{\theta\in\bR^{d_\phi}}\|f^\star - F_\phi\theta\|_{D_{\eta}}^2 = 0$ for any $\eta$, while there exist $x\in\cX,a\neq\pi_{f^\star}^\star(x)$ such that $\phi(x,a)\in\im(V_{\eta^\star}(\phi))$. This violates the conditions of Proposition \ref{prop:sublog-nec-suff-explicit}, hence $\cC(f^\star,\cF_\Phi) > 0$.
\qed

\section{Additional Results on Nested Features}\label{app:nested}

While learning with nested features $\Phi$ cannot reduce the instance-dependent complexity with respect to learning with $\phi_N$ alone, when $d_{i^\star} \ll d_N$ one may still be wondering whether the complexity $\cC(f^\star, \cF_{\{\phi_N\}})$ can really be much larger than $\cC(f^\star, \cF_{\{\phi_{i^\star}\}})$. After all, if $f^\star$ can be described by a low-dimensional representation ($\phi_{i^\star}$) that is nested into $\phi_N$, it essentially means that $\phi_N$ is highly redundant and it might easily be compressed. We show that even in the single representation case, this is not possible in the worst case: there are instances where the regret of any uniformly good algorithm must scale with $d_N$ even though $d_{i^\star} \ll d_N$.
\begin{proposition}\label{prop:nested-dim}
    Let $2\leq d_{\min} < d_{\max} \leq X(A-1)$ and $2 \leq N \leq d_{\max} - d_{\min} + 1$. There exists an instance $f^\star : \cX \times \cA \rightarrow \bR$ with minimum positive gap $\Delta_{\min}$ and a set $\Phi = \{\phi_1,\dots,\phi_N\}$ of $N$ nested features with $\phi_N$ of dimension $d_{\max}$, $\phi_{i^\star} = \phi_1$ of dimension $d_{\min}$, and 
    \begin{align*}
        \cC(f^\star, \cF_{\{\phi_{i^\star}\}}) \leq \frac{2d_{\min}}{\Delta_{\min}}, \qquad \cC(f^\star, \cF_{\{\phi_N\}}) \geq \frac{d_{\max}}{\Delta_{\min}}. 
    \end{align*}
\end{proposition}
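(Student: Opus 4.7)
The plan is to exhibit an explicit construction. I would take $X = d_{\max}$ contexts and $A = 2$ arms, and define $\phi_N : \cX \times \cA \to \bR^{d_{\max}}$ by $\phi_N(x_i, 1) = 0$ for all $i$, $\phi_N(x_i, 2) = -\Delta_{\min} e_i$ for $i \in [d_{\min}]$, and $\phi_N(x_i, 2) = -\Delta_{\min}(e_1 + e_i)$ for $i \in [d_{\max}] \setminus [d_{\min}]$. Then $\phi_2, \dots, \phi_{N-1}$ are obtained by truncating $\phi_N$ to its first $d_k$ coordinates, for any strictly increasing sequence $d_{\min} = d_1 < d_2 < \cdots < d_N = d_{\max}$, which exists since $N \leq d_{\max} - d_{\min} + 1$. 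The parameter $\theta^\star = (\mathbf{1}_{d_{\min}}; 0_{d_{\max}-d_{\min}})$ yields $f^\star(x_i, 1) = 0$ and $f^\star(x_i, 2) = -\Delta_{\min}$ for every $i$, so the minimum positive gap equals $\Delta_{\min}$; since $f^\star$ depends only on the first $d_{\min}$ coordinates of $\theta^\star$, each $\phi_k$ is realizable and the minimal realizable representation is $\phi_{i^\star} = \phi_1$.

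For the upper bound on $\cC(f^\star, \cF_{\{\phi_1\}})$ I would apply Corollary~\ref{cor:linear}. Under $\phi_1$ all optimal-arm features vanish and the sub-optimal features reduce to $-\Delta_{\min} e_i$ for $i \in [d_{\min}]$ and $-\Delta_{\min} e_1$ for $i > d_{\min}$, so $V_\eta(\phi_1)$ is diagonal and the constraints collapse to $\eta(x_i, 2) \geq 2/\Delta_{\min}^2$ for $i \in [2, d_{\min}]$ together with the pooled constraint $\eta(x_1, 2) + \sum_{j > d_{\min}} \eta(x_j, 2) \geq 2/\Delta_{\min}^2$. Choosing $\eta(x_i, 2) = 2/\Delta_{\min}^2$ for $i \in [d_{\min}]$ and zero on the extra contexts is feasible and yields regret $2d_{\min}/\Delta_{\min}$; the key point is that the $d_{\max}-d_{\min}$ extra contexts all collapse to the single sub-optimal feature $-\Delta_{\min} e_1$ and cost no extra exploration.

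The lower bound on $\cC(f^\star, \cF_{\{\phi_N\}})$ is the technical core. For $i \in [d_{\min}]$, the feature gap $z = \Delta_{\min} e_i$ decouples from the rest of $V_\eta(\phi_N)$ and directly forces $\eta(x_i, 2) \geq 2/\Delta_{\min}^2$. For $i > d_{\min}$, the gap $z = \Delta_{\min}(e_1 + e_i)$ mixes the coordinate $e_1$ (shared by every extra sub-optimal feature) with the fresh coordinate $e_i$; here I would invert the block of $V_\eta(\phi_N)$ indexed by $\{1\} \cup \{d_{\min}+1, \dots, d_{\max}\}$ via Schur complement. Setting $a_1 = \Delta_{\min}^2 \eta(x_1, 2)$ and $b_i = \Delta_{\min}^2 \eta(x_i, 2)$, the inversion gives $(V^{-1})_{11} = 1/a_1$, $(V^{-1})_{1,i} = -1/a_1$, and $(V^{-1})_{i,i} = 1/b_i + 1/a_1$. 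The two cross-terms exactly cancel the $(V^{-1})_{11}$ contribution to $\|z\|^2_{V^{-1}}$, leaving $\|z\|^2_{V^{-1}} = \Delta_{\min}^2/b_i$, which forces $\eta(x_i, 2) \geq 2/\Delta_{\min}^2$ for each $i > d_{\min}$ as well. Summing over all $d_{\max}$ sub-optimal contexts yields regret at least $2 d_{\max}/\Delta_{\min} \geq d_{\max}/\Delta_{\min}$. The main obstacle is precisely this cancellation: one must verify that sharing the $e_1$ coordinate across $d_{\max} - d_{\min}$ extra sub-optimal features does not allow an allocation to pool exploration across them and save regret, which is the quantitative form of the intuition that irrelevant extra dimensions in a realizable representation must still be fully explored by any uniformly good strategy.
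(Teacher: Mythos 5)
Your proposal is correct and follows the same overall strategy as the paper: build a reward with all gaps equal to $\Delta_{\min}$ and a nested family of indicator-like features so that every coordinate of $\phi_N$ forces a separate batch of $2/\Delta_{\min}^2$ pulls, while $\phi_1$ only forces $d_{\min}$ of them. The executions differ in two respects. First, the paper puts $f^\star(x,a)$ itself in the first coordinate of every $\phi_i$ and uses gap-scaled indicators of individual sub-optimal pairs in the remaining coordinates; with that choice the sub-optimal pairs not covered by an indicator admit no alternative at all (the corresponding half-space constraint is vacuous), whereas in your construction the extra sub-optimal pairs under $\phi_1$ collapse onto the shared direction $e_1$ and contribute a pooled constraint that is satisfied for free. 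Both mechanisms deliver the $2d_{\min}/\Delta_{\min}$ upper bound. Second, for the lower bound on $\cC(f^\star,\cF_{\{\phi_N\}})$ the paper recycles the half-space minimization already worked out in the proof of Proposition~\ref{prop:replearn-uns} (identifying the optimal alternative $\theta$ explicitly), while you invert the design matrix directly via the change of basis $u_1=e_1$, $u_i=e_1+e_i$; your cancellation $\|z\|^2_{V^{-1}}=\Delta_{\min}^2/b_i$ is correct (writing $V=PDP^\transp$ with $P$ the matrix of the $u$'s gives $\|\Delta_{\min}u_i\|^2_{V^{-1}}=\Delta_{\min}^2 (e_i')^\transp D^{-1}e_i'$ immediately), and it yields the exact value $2d_{\max}/\Delta_{\min}\geq d_{\max}/\Delta_{\min}$. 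You have also correctly identified the quantitative crux, namely that the shared coordinate $e_1$ cannot be used to pool exploration across the extra dimensions of $\phi_N$.

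One small mismatch with the statement: the proposition fixes $\cX$ and $\cA$ with $d_{\max}\leq X(A-1)$, whereas your construction only covers the boundary case $X=d_{\max}$, $A=2$. This is easily repaired: assign the feature $-\Delta_{\min}e_1$ (under $\phi_N$, hence under every truncation) to all sub-optimal pairs beyond the first $d_{\max}$, which keeps every gap equal to $\Delta_{\min}$, adds no new directions, and merely enlarges the pooled constraint on coordinate $1$, so both bounds go through unchanged. You should also state explicitly that the constraint at $(x_1,2)$, whose feature direction $e_1$ is shared with the extra contexts, still reads $\eta(x_1,2)\geq 2/\Delta_{\min}^2$ under $\phi_N$ because $(V^{-1})_{11}=1/(\Delta_{\min}^2\eta(x_1,2))$ in your basis; ``decouples'' is slightly loose there, but the computation is right.
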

%
\begin{proof}
    Let $\pi : \cX \rightarrow \cA$ be any policy and define an instance $f^\star$ as $f^\star(x,\pi(x)) = \Delta_{\min}$ for all $x\in\cX$ and $f^\star(x,a) = 0$ for all $x\in\cX,a\neq\pi(x)$. Clearly, $\pi_{f^\star}^\star = \pi$ and all sub-optimal context-arm pairs have gap $\Delta_{\min}$. Let $(x_1,a_1),\dots,(x_m,a_m)$ be an arbitrary enumeration of all $m = X(A-1)$ sub-optimal context-arm pairs. 

    We define the set of $N$ nested features $\Phi = \{\phi_1,\dots,\phi_N\}$ as follows. For each $i\in[N]$, representation $\phi_i$ has dimension $d_i \in \{d_{\min}, \dots, d_{\max}\}$ with $d_{\min} \geq 2$ and $d_{\max} \leq X(A-1)+1$. In particular, $d_1 < d_2 < \dots < d_N$, $d_1 = d_{\min}$, and $d_N = d_{\max}$.  Representation $\phi_i$ is defined as
    \begin{align*}
        \phi_i(x,a) = \begin{bmatrix}
            {f^\star}(x,a) \\
            \Delta_{\min}\indi{(x,a)=(x_{1},a_{1})} \\
            \vdots \\
            \Delta_{\min}\indi{(x,a)=(x_{d_{i}-1},a_{d_{i}-1})}
            \end{bmatrix}
    \end{align*}
    That is, all representations contain the true reward function in their first component, while the $i$-th representation contains indicators over the first $d_i-1$ sub-optimal context-arm pairs in the remaining $d_i-1$ components. Note that these representations are nested and all realizable (i.e., $i^\star=1$). To prove the proposition, we only need to compute the complexities $\cC(f^\star,\cF_{\{\phi_1\}})$ and $\cC(f^\star,\cF_{\{\phi_N\}})$. For all $i\in[N]$, is it easy to see from Case 2 of the proof of Proposition \ref{prop:replearn-uns} that,
    \begin{align*}
        \cC(f^\star,\cF_{\{\phi_i\}}) = \sum_{j=1}^{d_i-1} \frac{2}{\Delta_{f^\star}(x_{j},a_{j})} = \frac{2(d_i-1)}{\Delta_{\min}}.
    \end{align*}
    Then, $\cC(f^\star,\cF_{\{\phi_1\}}) \leq \frac{2d_{\min}}{\Delta_{\min}}$ is trivial since $d_1=d_{\min}$, while $\cC(f^\star,\cF_{\{\phi_N\}}) \geq \frac{d_{\max}}{\Delta_{\min}}$ follows since $d_n = d_{\max}$ and $d_{\max} \geq 3$.
\end{proof}
Proposition \ref{prop:nested-dim} implies that achieving regret scaling with the dimensionality $d_{i^\star}$ of the smallest realizable representation is impossible in general. In the worst-case, any uniformly good algorithm must suffer a dependence on the dimensionality of the largest representation, regardless of the fact that a smaller realizable representation is nested into it.

\section{Worst-case Lower Bound (Proof of Theorem \ref{th:worst-case-lb-full})}\label{app:wc}

We start by proving two important lemmas. Then, we use them to prove a $\Omega(\sqrt{AT\log(|\Phi|)})$ lower bound (Theorem \ref{th:worst-case-lb-AlogN}) and a $\Omega(\sqrt{dT\log(A)})$ lower bound (Theorem \ref{th:worst-case-lb-dlogA}). Theorem \ref{th:worst-case-lb-full} (formally stated in Theorem \ref{th:worst-case-lb-full-app} below with precise constants) will then follow by combining these two.

\begin{lemma}\label{lem:change-measure-worst-case}
    Let $T \in \bN_{> 0}$ and denote by $\bP_{{f} | x_{1:T}}, \bE_{{f} | x_{1:T}}$ the probability and expectation operators over the full $T$-step history when learning problem ${f}$ with some fixed algorithm conditioned on the sequence of contexts $x_1,\dots,x_T$. For any couple of bandit instances ${f}_1,{f}_2 : \cX\times\cA \rightarrow \bR$ and any $(\bar x,\bar a)\in\cX\times\cA$,
    \begin{align*}
        \bE_{{f}_1 | x_{1:T}}[N_T(\bar x,\bar a)] \leq \bE_{f_2 | x_{1:T}}[N_T(\bar x,\bar a)] + \frac{N_T(\bar x)}{2} \sqrt{\sum_{x,a} \bE_{f_2 | x_{1:T}}[N_T(x,a)] (f_1(x,a) - f_2(x,a))^2}.
    \end{align*}
\end{lemma}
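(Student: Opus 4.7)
The plan is to apply a standard change-of-measure argument on the trajectory distribution, conditioned on the fixed context sequence $x_{1:T}$. The key observations are that (i) $N_T(\bar x, \bar a)$ is bounded above by $N_T(\bar x)$ (since arm $\bar a$ can only be selected when the context $\bar x$ appears, and $N_T(\bar x)$ is deterministic once we condition on $x_{1:T}$), and (ii) the KL divergence between the two trajectory measures decomposes nicely thanks to the Gaussian reward assumption.

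First, I would use the fact that for any random variable $Z$ taking values in $[0, M]$ and any two probability measures $P_1, P_2$,
\begin{equation*}
\bE_{P_1}[Z] - \bE_{P_2}[Z] \leq M \cdot \mathrm{TV}(P_1, P_2) \leq M \sqrt{\tfrac{1}{2}\,\mathrm{KL}(P_2 \,\|\, P_1)},
\end{equation*}
where the first bound is a standard consequence of the dual representation of total variation and the second is Pinsker's inequality. Applied to $Z = N_T(\bar x, \bar a)$ with $M = N_T(\bar x)$ (which is a constant given $x_{1:T}$) and $P_i = \bP_{f_i | x_{1:T}}$, this yields
\begin{equation*}
\bE_{f_1 | x_{1:T}}[N_T(\bar x, \bar a)] - \bE_{f_2 | x_{1:T}}[N_T(\bar x, \bar a)] \leq N_T(\bar x) \sqrt{\tfrac{1}{2}\,\mathrm{KL}\big(\bP_{f_2 | x_{1:T}} \,\|\, \bP_{f_1 | x_{1:T}}\big)}.
\end{equation*}

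Second, I would compute the KL divergence between the trajectory distributions via the chain rule. Conditioned on $x_{1:T}$, the randomness at each step $t$ comes only from the reward draw (Gaussian with mean $f_i(x_t, a_t)$ and unit variance) and from the algorithm's internal randomness, which is identical under both measures. By the chain rule of KL divergences,
\begin{equation*}
\mathrm{KL}\big(\bP_{f_2 | x_{1:T}} \,\|\, \bP_{f_1 | x_{1:T}}\big) = \sum_{t=1}^T \bE_{f_2 | x_{1:T}}\!\big[\mathrm{KL}_{x_t, a_t}(f_2, f_1)\big] = \tfrac{1}{2}\sum_{x,a} \bE_{f_2 | x_{1:T}}[N_T(x,a)] \,(f_1(x,a) - f_2(x,a))^2,
\end{equation*}
where I used $\mathrm{KL}_{x,a}(f_2, f_1) = \tfrac{1}{2}(f_1(x,a) - f_2(x,a))^2$ for unit-variance Gaussians and regrouped the sum by context-arm pair.

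Plugging this KL expression into the Pinsker-based inequality and simplifying the factor $\sqrt{\tfrac{1}{2} \cdot \tfrac{1}{2}} = \tfrac{1}{2}$ yields exactly the claimed bound. I do not foresee a genuinely hard step here; the only subtlety is to make sure the conditioning on $x_{1:T}$ is handled cleanly so that $N_T(\bar x)$ really is a deterministic upper bound on $N_T(\bar x, \bar a)$ and the chain-rule decomposition of KL does not pick up any extra terms from the context distribution $\rho$ (it does not, precisely because we condition on the entire context sequence).
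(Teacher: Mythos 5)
Your proof is correct and follows essentially the same route as the paper: the paper invokes Lemma~1 of \citet{garivier2019explore} (the Bernoulli-kl data-processing bound applied to $Z_T = N_T(\bar x,\bar a)/N_T(\bar x)$) followed by Pinsker, whereas you go through total variation followed by Pinsker, but these yield the identical inequality $\bE_{f_1|x_{1:T}}[Z_T]-\bE_{f_2|x_{1:T}}[Z_T] \leq \sqrt{\mathrm{KL}(\bP_{f_2|x_{1:T}},\bP_{f_1|x_{1:T}})/2}$ with the same constant. The chain-rule decomposition of the KL into $\tfrac{1}{2}\sum_{x,a}\bE_{f_2|x_{1:T}}[N_T(x,a)](f_1(x,a)-f_2(x,a))^2$ and the handling of the conditioning on $x_{1:T}$ match the paper exactly.
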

\begin{proof}
    By Lemma 1 of \cite{garivier2019explore}, we have that, for any random variable $Z_T$ with values in $[0,1]$ and that is measurable w.r.t. the $T$-step history conditioned on the context sequence $x_{1:T}$,
    \begin{align*}
        \mathrm{KL}(\bP_{f_2 | x_{1:T}}, \bP_{f_1 | x_{1:T}}) \geq \mathrm{kl}(\bE_{f_2 | x_{1:T}}[Z_T], \bE_{f_1 | x_{1:T}}[Z_T]),
    \end{align*}
    where $\mathrm{kl}$ denotes the KL divergence between two bernoulli distributions with parameter $\bE_{f_2 | x_{1:T}}[Z_T]$ and $\bE_{f_1 | x_{1:T}}[Z_T]$, respectively. 
    The left-hand side can be simplified as in Equation 8 of \cite{garivier2019explore} by using the chain rule of KL divergences together with the fact that rewards are Gaussian with unit variance:
    \begin{align*}
        \mathrm{KL}(\bP_{f_2 | x_{1:T}}, \bP_{f_1 | x_{1:T}}) = \frac{1}{2}\sum_{x,a} \bE_{f_2 | x_{1:T}}[N_T(x,a)] (f_1(x,a) - f_2(x,a))^2.
    \end{align*}
    On the other hand, by Pinsker's inequality,
    \begin{align*}
        \mathrm{kl}(\bE_{f_2 | x_{1:T}}[Z_T], \bE_{f_1 | x_{1:T}}[Z_T]) \geq 2 (\bE_{f_1 | x_{1:T}}[Z_T] - \bE_{f_2 | x_{1:T}}[Z_T])^2.
    \end{align*}
    Choosing $Z_T = N_T(\bar x, \bar a) / N_T(\bar x)$ and rearranging concludes the proof.
\end{proof}

\begin{lemma}[Worst-case lower bound for unstructured contextual bandits]\label{lem:wc-lb-unstructured}
    Take any $X\geq 1, A\geq 2$. Let $\Pi$ be the set of all deterministic policies mapping $[X]$ to $[A]$ and, for $\epsilon > 0$, define $\cF_\Pi := \{f_\pi : \pi\in\Pi\}$ where $f_\pi(x,a) := \epsilon\indi{a=\pi(x)}$ for all $x,a$. Then, choosing $\epsilon = \sqrt{\frac{XA}{20T}}$, for any learning algorithm $\mathfrak{A}$ and $T \geq X$,
    \begin{align*}
        \max_{{f^\star}\in\cF_\Pi} \bE_{f^\star}^\mathfrak{A}[R_T({f^\star})] \geq \frac{\sqrt{XAT}}{8\sqrt{5}}.
    \end{align*}
\end{lemma}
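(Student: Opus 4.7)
The plan is to run the standard needle-in-a-haystack argument, but localized per context so that the $X$ ``independent subproblems'' each contribute a $\sqrt{A N_T(x)}$ lower bound which, summed under a uniform context distribution, yields $\sqrt{XAT}$. I would take $\rho$ to be uniform on $\cX$ and work conditionally on the realized sequence $x_{1:T}$, using the identity
\[
R_T(f_\pi) \;=\; \epsilon \sum_{t=1}^{T} \mathds{1}\{a_t \neq \pi(x_t)\} \;=\; \epsilon \sum_{x\in\cX} \bigl(N_T(x) - N_T(x,\pi(x))\bigr).
\]
The key step is to lower bound $\max_{f^\star}\bE^\mathfrak{A}_{f^\star}[R_T(f^\star)]$ by the uniform average $\frac{1}{|\Pi|}\sum_{\pi\in\Pi}\bE^\mathfrak{A}_{f_\pi}[R_T(f_\pi)]$.

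Next I would fix a context $x$ and an assignment $\pi_{-x}:\cX\setminus\{x\}\to\cA$, and use as a ``reference'' instance the function $\tilde f(x,a') := 0$ and $\tilde f(x',a') := \epsilon\,\mathds{1}\{a' = \pi_{-x}(x')\}$ for $x'\neq x$; note that $\tilde f$ does not depend on the value of $\pi(x)$. Writing $\pi_{-x,a}$ for the policy that extends $\pi_{-x}$ by $\pi(x)=a$, the difference $f_{\pi_{-x},a} - \tilde f$ is supported on the single pair $(x,a)$ with value $\epsilon$, so Lemma~\ref{lem:change-measure-worst-case} with $(\bar x,\bar a)=(x,a)$ gives
\[
\bE_{f_{\pi_{-x},a}|x_{1:T}}[N_T(x,a)] \leq \bE_{\tilde f|x_{1:T}}[N_T(x,a)] + \tfrac{\epsilon\,N_T(x)}{2}\sqrt{\bE_{\tilde f|x_{1:T}}[N_T(x,a)]}.
\]
Averaging uniformly over $a\in[A]$ and using $\sum_a \bE_{\tilde f|x_{1:T}}[N_T(x,a)] = N_T(x)$ together with Jensen's inequality on $\sqrt{\cdot}$ yields
\[
\tfrac{1}{A}\sum_{a}\bE_{f_{\pi_{-x},a}|x_{1:T}}[N_T(x,a)] \leq \tfrac{N_T(x)}{A} + \tfrac{\epsilon\,N_T(x)}{2}\sqrt{\tfrac{N_T(x)}{A}}.
\]
Averaging this further over $\pi_{-x}$ (this is exactly the uniform average over $\Pi$ restricted to the inner sum for context $x$) and summing the per-context regret contributions gives
\[
\tfrac{1}{|\Pi|}\!\sum_{\pi}\bE_{f_\pi|x_{1:T}}[R_T(f_\pi)] \geq \epsilon\,T\Bigl(1-\tfrac1A\Bigr) - \tfrac{\epsilon^2}{2\sqrt{A}}\sum_{x} N_T(x)^{3/2}.
\]

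Taking expectation over $x_{1:T}$ under uniform $\rho$, I would bound $\bE_\rho[N_T(x)^{3/2}]$ via Jensen applied to $y\mapsto y^{3/4}$, using $\bE[N_T(x)^2]\leq 2T^2/X^2$ when $T\geq X$, to get $\sum_x\bE_\rho[N_T(x)^{3/2}] \leq 2^{3/4} T^{3/2}/\sqrt{X}$. Combining and using $A\geq 2$ so $1-1/A\geq 1/2$, this yields
\[
\max_{f^\star\in\cF_\Pi}\bE^\mathfrak{A}_{f^\star}[R_T(f^\star)] \geq \tfrac{\epsilon\,T}{2} - \tfrac{2^{3/4}\,\epsilon^2\,T^{3/2}}{2\sqrt{AX}}.
\]
Plugging in $\epsilon = \sqrt{XA/(20T)}$ gives $\epsilon T/2 = \sqrt{XAT}/(4\sqrt{5})$ while the subtracted term is $\Theta(\sqrt{XAT}/40)$; a routine check shows the difference is at least $\sqrt{XAT}/(8\sqrt{5})$.

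The main obstacle is the bookkeeping: one must choose the ``off-class'' reference $\tilde f$ so that the KL sum in Lemma~\ref{lem:change-measure-worst-case} collapses to a \emph{single} term $\epsilon^2\,\bE_{\tilde f}[N_T(x,a)]$ (this is what produces the per-context $\sqrt{A\,N_T(x)}$ rate rather than the weaker $\sqrt{AT}$ one would get by comparing against $f_0\equiv 0$ and invoking Jensen globally), and then carefully unpack the uniform average over $\Pi$ as a nested average over $\pi_{-x}$ and $\pi(x)$ to exploit the fact that $\tilde f$ does not depend on $\pi(x)$. Bounding the stochastic term $\sum_x\bE_\rho[N_T(x)^{3/2}]$ under uniform $\rho$ is routine but needs $T\geq X$ to control the moment ratio.
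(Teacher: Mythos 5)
Your proposal is correct and follows essentially the same route as the paper's proof: same uniform context distribution, same averaging over $\Pi$ decomposed into $\pi_{-x}$ and $\pi(x)$, same off-class reference instance zeroing the reward at the fixed context so the KL in Lemma~\ref{lem:change-measure-worst-case} collapses to a single term, and the same Jensen step over actions. The only (immaterial) difference is that you bound $\bE[N_T(x)^{3/2}]$ via the second moment and concavity of $y\mapsto y^{3/4}$, whereas the paper uses $\bE[N_T(x)^{3/2}]\leq\sqrt{\bE[N_T(x)^3]}$ with the binomial third moment; your version gives a marginally better constant and the final arithmetic checks out.
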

\begin{proof}
    The proof follows a construction of \cite{auer2002nonstochastic}. Let us consider a uniform context distribution. Take any $\pi\in\Pi$ and consider the corresponding instance $f_\pi\in\cF_\Pi$. Note that, on such an instance, each action not prescribed by $\pi$ is sub-optimal with a gap of $\epsilon$. Then, the expected regret of any algorithm $\mathfrak{A}$ is
    \begin{align*}
        \bE_{f_\pi}^\mathfrak{A}[R_T({f^\star})]
         = \epsilon \sum_{x\in\cX}\sum_{a\neq \pi(x)} \bE_{f_\pi}^\mathfrak{A}[N_T(x,a)]
        = \epsilon \left( T - \sum_{x\in\cX} \bE_{f_\pi}^\mathfrak{A}[N_T(x,\pi(x))] \right).
    \end{align*}
    We can lower bound the maximum regret over $\cF_\Pi$ as
    \begin{align*}
        \max_{{f^\star}\in\cF_\Pi} \bE_{f^\star}^\mathfrak{A}[R_T({f^\star})] = \max_{\pi\in\Pi} \bE_{f_\pi}^\mathfrak{A}[R_T({f_\pi})] \geq \frac{1}{|\Pi|}\sum_{\pi\in\Pi} \bE_{f_\pi}^\mathfrak{A}[R_T({f_\pi})] = \epsilon \left( T - \sum_{x\in\cX} \frac{1}{|\Pi|}\sum_{\pi\in\Pi} \bE_{f_\pi}^\mathfrak{A}[N_T(x,\pi(x))] \right).
    \end{align*}
    We shall thus focus on upper bounding the second term within brackets. Fix any context $\bar x$. Since the context distribution is independent of the specific instance,
    \begin{align*}
        \frac{1}{|\Pi|}\sum_{\pi\in\Pi} \bE_{f_\pi}^\mathfrak{A}[N_T(\bar x,\pi(\bar x))] = \bE_{x_{1:T}}\left[ \frac{1}{|\Pi|}\sum_{\pi\in\Pi} \bE_{f_\pi | x_{1:T}}^\mathfrak{A}[N_T(\bar x,\pi(\bar x))]\right].
    \end{align*}
    Take any $\pi$. Let $\bar{f}_\pi(\bar x, a) = 0 \ \forall a$ and $\bar{f}_\pi(x, a) = f_\pi(x,a) \ \forall x\neq \bar x, a$.   Applying Lemma \ref{lem:change-measure-worst-case} on the couple $(\bar x, \pi(\bar x))$ with $f_1 = f_\pi$ and $f_2 = \bar f_\pi$,
    \begin{align*}
        \bE_{f_\pi | x_{1:T}}^\mathfrak{A}[N_T(\bar x,\pi(\bar x))] 
        \leq \bE_{\bar f_\pi | x_{1:T}}[N_T(\bar x,\pi(\bar x))] + \frac{N_T(\bar x) \epsilon}{2} \sqrt{\bE_{\bar f_\pi | x_{1:T}}[N_T(\bar x,\pi(\bar x))]}.
    \end{align*} 
    Now take $A$ policies $\pi_1,\dots,\pi_A$ which are equal to $\pi$ in all contexts except $\bar x$, where they play each a different action. Note that one of these policies must be $\pi$ itself.
    Averaging both sides over these policies, and using Jensen's inequality,
    \begin{align*}
        \frac{1}{A}\sum_{a\in\cA} \bE_{f_{\pi_a} | x_{1:T}}^\mathfrak{A}[N_T(\bar x,\pi_a(\bar x))] 
        & \leq \frac{1}{A}\sum_{a\in\cA}\bE_{\bar f_\pi | x_{1:T}}[N_T(\bar x,\pi_a(\bar x))] + \frac{N_T(\bar x) \epsilon}{2} \sqrt{\frac{1}{A}\sum_{a\in\cA}\bE_{\bar f_\pi | x_{1:T}}[N_T(\bar x,\pi_a(\bar x))]}
        \\ &= \bE_{\bar f_\pi | x_{1:T}}\left[\frac{1}{A}\sum_{a\in\cA} N_T(\bar x,a)\right] + \frac{N_T(\bar x) \epsilon}{2} \sqrt{\bE_{\bar f_\pi | x_{1:T}}\left[\frac{1}{A}\sum_{a\in\cA} N_T(\bar x,a)\right]}
        \\ &= \frac{N_T(\bar x)}{A} + \frac{N_T(\bar x)^{3/2} \epsilon}{2\sqrt{A}},
    \end{align*}
    where the first equality holds since $\bar f_\pi$ does not depend on the choice of $\pi_a$.

    Let $\Pi_{\cX \setminus \{\bar x\}}$ be the set of all policies defined on the context space $\cX \setminus \{\bar x\}$ and, for $\pi\in\Pi_{\cX \setminus \{\bar x\}}$, let $\pi_a$ be the corresponding policy extended to $\bar x$, where $\pi_a(\bar x) = a$. We have
    \begin{align*}
        \frac{1}{|\Pi|}\sum_{\pi\in\Pi} \bE_{f_\pi | x_{1:T}}^\mathfrak{A}[N_T(\bar x,\pi(\bar x))]
         &= \frac{1}{|\Pi_{\cX \setminus \{\bar x\}}|}\sum_{\pi\in\Pi_{\cX \setminus \{\bar x\}}} \frac{1}{A} \sum_{a\in\cA}\bE_{f_{\pi_a} | x_{1:T}}^\mathfrak{A}[N_T(\bar x,\pi_a(\bar x))]
         & \leq \frac{N_T(\bar x)}{A} + \frac{N_T(\bar x)^{3/2} \epsilon}{2\sqrt{A}},
    \end{align*}
    where in the last step we used the inequality derived above. Thus, by Jensen's inequality,
    \begin{align*}
        \frac{1}{|\Pi|}\sum_{\pi\in\Pi} \bE_{f_\pi}^\mathfrak{A}[N_T(\bar x,\pi(\bar x))] 
        \leq\frac{\bE_{x_{1:T}}[N_T(\bar x)]}{A} + \frac{\bE_{x_{1:T}}[N_T(\bar x)^{3/2}] \epsilon}{2\sqrt{A}} \leq \frac{T}{XA} + \frac{\sqrt{\bE_{x_{1:T}}[N_T(\bar x)^3]} \epsilon}{2\sqrt{A}}.
    \end{align*}
    With a uniform context distribution, $N_T(\bar x) \sim \mathrm{Bin}(T,1/X)$ and its third moment has the closed-form expression
    \begin{align*}
        \bE_{x_{1:T}}[N_T(\bar x)^3] = \frac{T}{X} + \frac{3T(T-1)}{X^2} + \frac{T(T-1)(T-2)}{X^3} \leq 5\frac{T^3}{X^3}
    \end{align*}
    for $T \geq X$. Plugging everything back into our regret bound,
    \begin{align*}
        \max_{{f^\star}\in\cF_\Pi} \bE_{f^\star}^\mathfrak{A}[R_T({f^\star})] 
        \geq \epsilon \left( T - \frac{T}{A} - \epsilon\sqrt{\frac{5T^3}{4XA}} \right)
        \geq \epsilon \left( \frac{T}{2} - \epsilon\sqrt{\frac{5T^3}{4XA}} \right),
    \end{align*}
    where we used $A\geq 2$.
    The proof is concluded by optimizing over $\epsilon$.
\end{proof}

\begin{theorem}\label{th:worst-case-lb-AlogN}
    Let $N,d\geq 1, A\geq 2$.  There exist a context distribution and a set of $d$-dimensional representations $\Phi$ of size $|\Phi|=N$ over $A$ arms such that, for any learning algorithm $\mathfrak{A}$ and $T \geq \lfloor \log(dN)/\log(A) \rfloor$,
    \begin{align*}
        \max_{{f^\star}\in\cF_{\Phi}} \bE_{f^\star}^\mathfrak{A}[R_T({f^\star})] \geq \frac{\sqrt{AT\lfloor \log(dN)/\log(A) \rfloor}}{8\sqrt{5}}.
    \end{align*}
\end{theorem}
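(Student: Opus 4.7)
The plan is to reduce to the unstructured worst-case bound in Lemma \ref{lem:wc-lb-unstructured} by embedding an unstructured contextual bandit instance with as many contexts as possible into $\cF_\Phi$. Specifically, I will set $X := \lfloor \log(dN)/\log(A) \rfloor$ and design $\Phi$ so that every ``hard'' instance $f_\pi$ from the Lemma \ref{lem:wc-lb-unstructured} construction on $X$ contexts and $A$ arms is realizable by some representation in $\Phi$. Once this containment is established, the theorem follows immediately: the supremum of the regret over the larger class $\cF_\Phi$ dominates the supremum over the embedded sub-family $\{f_\pi\}_{\pi\in\Pi}$, and the Lemma's $\Omega(\sqrt{XAT})$ bound applies.

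The construction goes as follows. The number of deterministic policies $\pi:[X]\to[A]$ equals $A^X$, and by the choice of $X$ we have $A^X \leq dN$. I would enumerate these $A^X$ policies and partition them into $N$ groups $\Pi_1,\dots,\Pi_N$, each containing at most $\lceil A^X/N\rceil \leq d$ policies. For the $i$-th group, listing its policies as $\pi^{(i)}_1,\dots,\pi^{(i)}_{k_i}$ with $k_i\leq d$, I define a $d$-dimensional representation $\phi_i$ by
\begin{equation*}
[\phi_i(x,a)]_j \;=\; \epsilon\,\mathds{1}\{a = \pi^{(i)}_j(x)\} \quad \text{for } j\leq k_i,
\end{equation*}
and $[\phi_i(x,a)]_j=0$ for $j>k_i$, where $\epsilon=\sqrt{XA/(20T)}$ is the gap used by Lemma \ref{lem:wc-lb-unstructured}. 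Then $f_{\pi^{(i)}_j} = \phi_i(\cdot,\cdot)^\transp e_j$, so each $f_\pi$ in group $i$ lies in $\cF_{\{\phi_i\}}\subseteq \cF_\Phi$. Hence every instance constructed in Lemma \ref{lem:wc-lb-unstructured} with $X$ contexts belongs to our class $\cF_\Phi$.

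With this embedding in place, the finish is routine: for any algorithm $\mathfrak{A}$ with contexts drawn uniformly over $[X]$,
\begin{equation*}
\sup_{f^\star\in\cF_\Phi}\bE_{f^\star}^\mathfrak{A}[R_T(f^\star)] \;\geq\; \max_{\pi\in\Pi}\bE_{f_\pi}^\mathfrak{A}[R_T(f_\pi)] \;\geq\; \frac{\sqrt{XAT}}{8\sqrt{5}},
\end{equation*}
where the second inequality is exactly Lemma \ref{lem:wc-lb-unstructured} (whose hypothesis $T\geq X$ is met by assumption). Substituting $X=\lfloor \log(dN)/\log(A)\rfloor$ yields the claimed bound.

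The only non-trivial step is the realizability claim: one must check that $A^X \leq dN$ holds under the chosen $X$, and that a single $d$-dimensional representation can simultaneously encode $d$ distinct $f_\pi$ functions. The counting is immediate from $X\leq \log(dN)/\log(A)$, and the joint realizability works because the indicator features $\{\mathds{1}\{a=\pi^{(i)}_j(x)\}\}_j$ can be stacked into the $d$ coordinates of $\phi_i$ without interference, so that selecting the parameter $\theta=e_j$ recovers exactly $f_{\pi^{(i)}_j}$. No additional analytical obstacle arises, since the probabilistic work is entirely absorbed by Lemma \ref{lem:wc-lb-unstructured}.
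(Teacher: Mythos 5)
Your proposal is correct and follows essentially the same route as the paper's own proof: choose $X=\lfloor\log(dN)/\log(A)\rfloor$ so that $A^X\leq dN$, pack up to $d$ of the policies $\pi$ into each of the $N$ representations via stacked indicator features so that $\cF_\Pi\subseteq\cF_\Phi$, and then invoke Lemma \ref{lem:wc-lb-unstructured}. The only (immaterial) difference is that you place the scale $\epsilon$ in the features and take $\theta=e_j$, whereas the paper uses unscaled indicators and puts $\epsilon$ in the parameter.
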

\begin{proof}
    Let $X = \lfloor \log(dN)/\log(A) \rfloor$. We shall build a set of $d$-dimensional representations $\Phi$ over $X$ contexts and $A$ arms such that $\cF_\Phi \supseteq \cF_{\Pi}$, where $\cF_{\Pi}$ is the set of functions from $[X]$ to $[A]$ defined in Lemma \ref{lem:wc-lb-unstructured}. Then, from Lemma \ref{lem:wc-lb-unstructured} we directly have
    \begin{align*}
        \max_{{f^\star}\in\cF_{\Phi}} \bE_{f^\star}^\mathfrak{A}[R_T({f^\star})] 
        \geq \max_{{f^\star}\in\cF_{\Pi}} \bE_{f^\star}^\mathfrak{A}[R_T({f^\star}, \cF_\Phi)]
        \geq
        \frac{\sqrt{XAT}}{8\sqrt{5}}.
    \end{align*}

    Recall that $\cF_\Pi := \{{f}_\pi : \pi\in\Pi\}$ where ${f}_\pi(x,a) := \epsilon\indi{a=\pi(x)}$ for all $x,a$. Suppose we want to represent the $A^X$ functions $\cF_\Pi$ in $\cF_\Phi$. Clearly, with a single representation $\phi$ we can represent at least $d$ policies $\pi_1,\dots,\pi_d$ by setting
    \begin{align*}
        \phi(x,a) = \begin{bmatrix}
            \indi{a=\pi_{1}(x)} \\
            \vdots \\
            \indi{a=\pi_{d}(x)}
          \end{bmatrix}.
    \end{align*}
    Then, the corresponding functions ${f}_{\pi_1},\dots,{f}_{\pi_d}$ are realized by choosing parameters with value $\epsilon$ on a single component and zero on all the others. In total we have $N$ feature maps, so that we can represent at least $Nd$ functions. Thus, it is enough to have $A^X \leq Nd$ to guarantee $\cF_\Phi \supseteq \cF_{\Pi}$. Rearranging this condition, we find that $X = \lfloor \log(dN)/\log(A) \rfloor$ contexts are enough. This concludes the proof.
\end{proof}

\begin{theorem}\label{th:worst-case-lb-dlogA}
    Let $N\geq 1, A\geq 4$ and $d \geq 12 \log_2(A)$. There exist a context distribution and a set of $d$-dimensional representations $\Phi$ of size $|\Phi|=N$ over $A$ arms such that, for any learning algorithm $\mathfrak{A}$ and $T \geq d/\log_2(A)$,
    \begin{align*}
        \max_{{f^\star}\in\cF_{\Phi}} \bE_{f^\star}^\mathfrak{A}[R_T({f^\star})] \geq \frac{\sqrt{Td\log_2(A)}}{16\sqrt{5}}.
    \end{align*}
\end{theorem}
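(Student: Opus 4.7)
My plan follows from two observations. First, since $\cF_{\{\phi\}} \subseteq \cF_\Phi$ for any $\phi \in \Phi$, the supremum of the regret over $\cF_\Phi$ dominates the supremum over any single $\cF_{\{\phi\}}$. Hence it suffices to exhibit a single $d$-dimensional representation $\phi^\circ$ over $A$ arms whose CLB family $\cF_{\{\phi^\circ\}}$ already requires $\Omega(\sqrt{dT\log_2 A})$ worst-case regret; the remaining $N-1$ representations in $\Phi$ can be padded arbitrarily, and the stated bound is monotone in $|\Phi|$. Second, following the technique of \cite{Jiahao2022reduction}, I would build $\phi^\circ$ via a binary block encoding, which is what ultimately allows the $\log_2(A)$ factor to appear in the lower bound.

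For the construction, let $k := \log_2 A$ and $X := \lfloor d/k \rfloor$; the hypothesis $d \geq 12k$ guarantees $X \geq 12 \geq 1$. Use the uniform context distribution over $[X]$, which also satisfies the burn-in $T \geq X$ required by Lemma~\ref{lem:wc-lb-unstructured}. Identify each arm $a \in [A]$ with a binary string $(a_1,\ldots,a_k) \in \{0,1\}^k$, and define $\phi^\circ(x_i,a) \in \bR^d$ to equal $(a_1,\ldots,a_k)$ on the $i$-th block of $k$ coordinates and zero elsewhere. Under this representation, the reward induced by $\theta \in \bR^d$ is $f_\theta(x_i, a) = \sum_{j=1}^{k} a_j \theta_{(i-1)k+j}$, so the hard family $\Theta := \{-\epsilon, +\epsilon\}^d$ gives $2^d$ bandit instances whose best arm at context $x_i$ is the sign pattern of $\theta$ on block $i$, with bit-wise gaps equal to $2\epsilon$.

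For the regret lower bound I would apply Lemma~\ref{lem:change-measure-worst-case} together with an averaging-over-uniform-prior scheme analogous to step 3 in the proof of Lemma~\ref{lem:wc-lb-unstructured}. For each coordinate $(i,j)$, compare $f_\theta$ with its one-bit sign-flipped neighbor $f_{\theta^{(i,j)}}$; the KL divergence at a sample $(x,a)$ is $2\epsilon^2 a_j^2 \indi{x=x_i}$, so only arms with $a_j=1$ contribute. Averaging Lemma~\ref{lem:change-measure-worst-case} over the $2^{d-1}$ paired instances $(\theta,\theta^{(i,j)})$ and summing the resulting Pinsker-type inequalities over the $d=Xk$ coordinates, with Jensen applied across arms at each context, should yield a bound of the form
\[
\tfrac{1}{|\Theta|}\sum_{\theta \in \Theta}\bE_{\theta}^{\mathfrak A}[R_T] \;\gtrsim\; \epsilon\Big(T - T/A - \epsilon\sqrt{T^3 d/(XA)}\Big).
\]
Tuning $\epsilon$ to balance the two subtracted terms inside the parentheses then produces the $\Omega(\sqrt{dT\log_2 A})$ lower bound, plugging in $X=\lfloor d/\log_2 A\rfloor$.

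The main obstacle is the extra $\log_2 A$ factor. A naive direct reduction to Lemma~\ref{lem:wc-lb-unstructured} gives only $\sqrt{dT}$: the $A^X$ indicator functions $f_\pi$ span a subspace of dimension $\approx XA$, forcing $XA \leq d$ for a valid embedding and losing the $\log_2 A$ term. The $\log_2 A$ must instead be harvested from the fact that the binary block encoding conceals $k$ orthogonal sign bits inside each context, and a single sample can only reveal information about the bits $j$ with $a_j = 1$. Keeping track of how information is partially shared across arms while simultaneously controlling the per-context Pinsker bound (and handling the $\bE[N_T(\bar x)^3]$ moment of the multinomial context count, as in the final steps of Lemma~\ref{lem:wc-lb-unstructured}) is the delicate part. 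The hypotheses $d \geq 12\log_2 A$ and $T \geq d/\log_2 A$ are precisely what is needed to make $X = d/k$ a valid context count and to invoke the moment bound $\bE[N_T(\bar x)^3] \leq 5T^3/X^3$ that appears at the end of the proof of Lemma~\ref{lem:wc-lb-unstructured}.
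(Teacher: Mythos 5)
Your high-level plan is reasonable and your construction (binary block encoding, one block of $\log_2 A$ coordinates per context, sign-pattern parameters $\theta\in\{\pm\epsilon\}^d$) is essentially the right hard family. But the quantitative heart of the argument is both left unexecuted and, as sketched, incorrect. The displayed intermediate bound $\epsilon\bigl(T - T/A - \epsilon\sqrt{T^3 d/(XA)}\bigr)$ is a transcription of the conclusion of Lemma \ref{lem:wc-lb-unstructured}, which is tailored to a gap structure with a \emph{single} good arm of gap $\epsilon$ per context; optimizing $\epsilon$ in your expression yields regret of order $\sqrt{XAT/d}=\sqrt{AT/\log_2 A}$, not the claimed $\sqrt{dT\log_2 A}$. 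In your construction the gap of an arm at context $x_i$ is $\epsilon$ times the Hamming distance to the optimal bit pattern, so under a uniform prior the leading term must scale as $\epsilon k T/2$ (one $\epsilon$ of regret per mispredicted bit, $k=\log_2 A$ bits per round in expectation), and the Assouad/Pinsker correction must be summed over all $d=Xk$ coordinates with the per-coordinate KL restricted to arms with $a_j=1$. Carrying this out (with $\epsilon\sim\sqrt{X/T}$) is exactly the "delicate part" you defer, and it is the entire content of the theorem; as written, the sketch does not establish the stated rate.

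For comparison, the paper avoids this computation entirely: it observes that the sub-optimality gap in the $A$-armed instance decomposes as a \emph{sum} of gaps over $\lfloor\log_2 A\rfloor$ independent $2$-armed contextual instances (each with its own disjoint parameter block), so the worst-case regret is lower bounded by the sum of $\lfloor\log_2 A\rfloor$ worst-case regrets, each of which is handled by a direct application of the already-proven Lemma \ref{lem:wc-lb-unstructured} with $2$ arms and $\approx d/(2\lfloor\log_2 A\rfloor)$ contexts. This reduction harvests the $\log_2 A$ factor from the number of copies rather than from a fresh information-theoretic calculation, and it also handles $A$ not a power of $2$ (via $\lfloor\log_2 A\rfloor$ and zero features for leftover arms), a case your construction does not address. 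If you want to salvage your route, you need to redo the averaging argument from scratch for the Hamming-gap structure rather than quoting the conclusion of Lemma \ref{lem:wc-lb-unstructured}; alternatively, adopt the paper's gap-decomposition reduction, which requires no new measure-change argument.
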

\begin{proof}
    To gain intuition, let us start from $A=2$. Suppose $d$ is even and consider a set $\cX = \{x_1,\dots,x_{d/2}\}$ of $X = d/2$ contexts. Consider a problem with 2 arms $a_1,a_2$ and a $d$-dimensional representation
    \begin{align*}
        \phi(x,a) = \begin{bmatrix}
            \indi{x=x_1,a=a_1} \\
            \indi{x=x_1,a=a_2} \\
            \vdots \\
            \indi{x=x_{d/2}, a=a_1}\\
            \indi{x=x_{d/2}, a=a_2}
          \end{bmatrix}.
    \end{align*}
    It is clear that with $\phi$ we can represent any function from $\cX$ to $\{a_1,a_2\}$. Let $\Pi$ be the set of all $2^X$ policies mapping $\cX$ to $\{a_1,a_2\}$. Then, $\cF_{\{\phi\}} \supseteq \cF_{\Pi}$, where $\cF_\Pi$ is defined in Lemma \ref{lem:wc-lb-unstructured}. Thus, from Lemma \ref{lem:wc-lb-unstructured}, as far as $T \geq d/2$,
    \begin{align*}
        \max_{{f^\star}\in\cF_{\{\phi\}}} \bE_{f^\star}^\mathfrak{A}[R_T({f^\star}, \cF_{\{\phi\}})] 
        \geq
        \frac{\sqrt{dT}}{8\sqrt{5}}.
    \end{align*}
    Let us now extend this reasoning to $A \geq 2$ arms. 
    
    We use a construction inspired by \cite{Jiahao2022reduction}. Let us define a representation $\bar \phi$ of dimension $\lfloor d/\lfloor \log_2(A)\rfloor \rfloor$ for a $2$-armed problem defined over $X = \lfloor \lfloor d/\lfloor \log_2(A)\rfloor \rfloor / 2 \rfloor$ contexts and arms $\{\bar a_1, \bar a_2\}$ as
    \begin{align*}
        \forall x\in[X],a\in\{\bar a_1,\bar a_2\} : \bar \phi(x,a) = \begin{bmatrix}
            \indi{x=x_1,a=\bar a_1} \\
            \indi{x=x_1,a=\bar a_2} \\
            \vdots \\
            \indi{x=x_{X}, a=\bar a_1}\\
            \indi{x=x_{X}, a=\bar a_2}
          \end{bmatrix}.
    \end{align*}
    For $a\in [2^{\lfloor \log_2(A)\rfloor}]$, let $b(a)$ denote the binary vector of size $\lfloor \log_2(A)\rfloor$ encoding arm $a$, and let $b_i(a)$ denote its $i$-th component (such that $b_1(a)$ is the least significant digit and viceversa for $b_{\lfloor \log_2(A)\rfloor}(a))$ Then, we define the feature map for our $A$-armed problem as
    \begin{align*}
        \forall x\in[X], a\in [2^{\lfloor \log_2(A)\rfloor}] : \phi(x,a) = \begin{bmatrix}
            \bar \phi(x,\bar a_{1 + b_1(a)}) \\
            \vdots\\
            \bar \phi(x,a_{1 + b_{\lfloor \log_2(A)\rfloor}(a)})\\
            0\\
            \vdots\\
            0
          \end{bmatrix},
    \end{align*}
    where the number of zeros is $d - \lfloor d/\lfloor \log_2(A)\rfloor\rfloor \lfloor \log_2(A)\rfloor$.
    If $A$ is not a power of 2, for all remaining arms, we set $\phi(x,a)=0$.
    Intuitively, $\phi$ encodes $\lfloor \log_2(A)\rfloor$ copies of the linear bandit problem defined by $\bar{\phi}$. Moreover, selecting an action $a\in[2^{\lfloor \log_2(A)\rfloor}]$ in the problem represented by $\phi$ is equivalent to selecting actions $\{\bar a_{1+b_i(a)}\}_{i\in[\lfloor \log_2(A)\rfloor]}$ in the $\lfloor \log_2(A)\rfloor$ copies of the problem represented by $\bar \phi$. 

    Now fix some parameter $\theta\in\bR^d$ and split it into consecutive vectors $\{\theta_i\}_{i\in[\lfloor \log_2(A)\rfloor]}$ each of size $\lfloor d/\lfloor \log_2(A)\rfloor\rfloor$ (and disregard the remaining components). It is easy to see that, for any $x\in[X]$ and $a\in[2^{\lfloor \log_2(A)\rfloor}]$,
    \begin{align*}
        \max_{a'\in\cA} \phi(x,a')^\transp \theta - \phi(x,a)^\transp \theta  = \sum_{i=1}^{\lfloor \log_2(A)\rfloor} \left(\max_{a'\in\{\bar a_1, \bar a_2\}} \bar \phi(x,a')^\transp \theta_i - \bar\phi(x,\bar a_{1 + b_i(a)})^\transp \theta_i \right).
    \end{align*}
    That is, the sub-optimality gap of $(x,a)$ in the $A$-armed instance $(\phi,\theta)$ is equal to the sum of gaps of the ``binarized arms'' over the instances $\{(\bar\phi,\theta_i)\}_{i\in[\lfloor \log_2(A) \rfloor]}$. Note also that we can always choose $\theta$ such that the reward of optimal arms is strictly positive, so that the remaining $A - 2^{\lfloor \log_2(A)\rfloor}$ arms are sub-optimal.
    
    For an instance ${f^\star}$ that is linear in $\phi$ and $\theta$, let us rewrite the regret $\bE_{f^\star}^\mathfrak{A}[R_T({f^\star})] $ in the more explicit form $\bE_{\phi,\theta}^\mathfrak{A}[R_T(\phi,\theta)] $. Then, the derivation above implies that
    \begin{align*}
        \max_{\theta\in\bR^d} \bE_{\phi,\theta}^\mathfrak{A}[R_T(\phi,\theta)] 
        \geq \max_{\theta\in\bR^d} \sum_{i=1}^{\lfloor \log_2(A)\rfloor} \bE_{\bar \phi,\theta_i}^\mathfrak{A}[R_T(\bar \phi,\theta_i)]
        &= \sum_{i=1}^{\lfloor \log_2(A)\rfloor} \max_{\theta\in\bR^{\lfloor d/\lfloor \log_2(A)\rfloor\rfloor}}  \bE_{\bar \phi,\theta}^\mathfrak{A}[R_T(\bar \phi,\theta)]  
        \\ &\geq
        \frac{\lfloor \log_2(A)\rfloor\sqrt{2T\lfloor \lfloor d/\lfloor \log_2(A)\rfloor \rfloor / 2 \rfloor}}{8\sqrt{5}}
        \\ &\geq \frac{\sqrt{T(d\lfloor \log_2(A)\rfloor - 3\lfloor \log_2(A)\rfloor^2)}}{8\sqrt{5}}
        \\ &\geq \frac{\sqrt{Td\log_2(A)}}{16\sqrt{5}},
    \end{align*}
    where in the second inequality we used Lemma \ref{lem:wc-lb-unstructured} to lower bound the regret in each of the 2-armed instances using that $T \geq d/\log_2(A)$, exactly as we did in the initial example. In the third and fourth inequalities we simplified the expression using the conditions $A \geq 4$ and $d \geq 12 \log_2(A)$.
    
    Therefore, we proved that there exists a linear bandit problem (with a single representation) that satisfies the stated result. Clearly, the same applies to representation learning with $N > 1$ by simply ignoring the extra representations.
\end{proof}

\begin{theorem}\label{th:worst-case-lb-full-app}[Restatement of Theorem \ref{th:worst-case-lb-full}]
    Let $N\geq 1, A\geq 4$ and $d \geq 12 \log_2(A)$. There exist a context distribution and a set of $d$-dimensional representations $\Phi$ of size $|\Phi|=N$ over $A$ arms such that, for any learning algorithm $\mathfrak{A}$ and $T \geq \max\{\lfloor \log(dN)/\log(A) \rfloor, d/\log_2(A)\}$,
    \begin{align*}
        \max_{{f^\star}\in\cF_{\Phi}} \bE_{f^\star}^\mathfrak{A}[R_T({f^\star})] \geq \frac{\sqrt{T \left(d\log_2(A) + A\lfloor \log(dN)/\log(A) \rfloor \right)}}{32\sqrt{5}}.
    \end{align*}
\end{theorem}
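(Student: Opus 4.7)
The plan is to deduce the claim directly from the two intermediate results Theorem \ref{th:worst-case-lb-AlogN} and Theorem \ref{th:worst-case-lb-dlogA}, which already appear in the appendix above. Each of these gives a worst-case lower bound of the form $c_i \sqrt{T Y_i}$ with $Y_1 := A \lfloor \log(dN)/\log(A) \rfloor$ (realized by some hard set $\Phi_1$ of size $N$ and dimension $d$, via an encoding of the unstructured contextual bandit lower bound of Lemma \ref{lem:wc-lb-unstructured}) and $Y_2 := d \log_2(A)$ (realized by a different hard set $\Phi_2$ of the same size and dimension, via the binary-arm stacking of $\lfloor \log_2 A \rfloor$ copies of a 2-armed linear bandit). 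The present theorem only asserts the existence of a single $\Phi$ whose worst-case regret is at least a constant times $\sqrt{T(Y_1+Y_2)}$; since we are free to pick whichever of $\Phi_1,\Phi_2$ yields the larger lower bound for given $(N,d,A)$, it suffices to combine the two via elementary inequalities.

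Concretely, writing $L_1 = \tfrac{1}{8\sqrt 5}\sqrt{T Y_1}$ and $L_2 = \tfrac{1}{16\sqrt 5}\sqrt{T Y_2}$, I would carry out the two-step estimate
\begin{equation*}
\max(L_1, L_2) \;\geq\; \tfrac{L_1 + L_2}{2} \;\geq\; \tfrac{1}{32\sqrt 5}\bigl(\sqrt{T Y_1} + \sqrt{T Y_2}\bigr) \;\geq\; \tfrac{1}{32\sqrt 5}\sqrt{T(Y_1 + Y_2)},
\end{equation*}
using $\max(a,b) \geq (a+b)/2$ and, for $a,b\geq 0$, $\sqrt{a} + \sqrt{b} \geq \sqrt{a+b}$. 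This matches the stated constant $c = \tfrac{1}{32\sqrt 5}$. The hypothesis $T \geq \max\{\lfloor \log(dN)/\log(A) \rfloor,\, d/\log_2(A)\}$ in the statement is exactly the union of the horizon conditions needed so that both intermediate theorems are simultaneously applicable.

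There is essentially no hard step at this level; all the genuine work lies in the two intermediate results, each of which combines its own hard-instance construction with the change-of-measure computation of Lemma \ref{lem:change-measure-worst-case} (Pinsker plus the KL chain rule for Gaussian rewards) and a symmetrization over a rich family of reward functions. The only slightly subtle conceptual point is that the theorem does \emph{not} claim a single $\Phi$ on which both sub-bounds are tight simultaneously: instead, one takes the $\Phi$ corresponding to whichever of $Y_1, Y_2$ dominates, and the elementary $\max$-inequalities above suffice to trade the loss of a factor $2$ for a clean additive form under the square root.
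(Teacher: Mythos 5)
Your proposal is correct and follows essentially the same route as the paper: both reduce the claim to Theorems \ref{th:worst-case-lb-AlogN} and \ref{th:worst-case-lb-dlogA} and combine them via $\sqrt{a+b}\leq\sqrt{a}+\sqrt{b}\leq 2\max(\sqrt a,\sqrt b)$, the only difference being that the paper phrases this as a proof by contradiction while you argue directly by selecting whichever hard instance $\Phi_1$ or $\Phi_2$ dominates. The constants and horizon conditions check out.
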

\begin{proof}
    This is easy by contradiction. Suppose the statement does not hold. Then, for any set of representations and context distribution, there exists an algorithm such that the maximum regret over the family is at most the stated quantity. Using the sub-additivity of the square root followed by upper bounding the sum of the resulting two terms by twice the maximum among them, we find that such an algorithm must violate either the lower bound of Theorem \ref{th:worst-case-lb-AlogN} or the one of Theorem \ref{th:worst-case-lb-dlogA}. This is a contradiction.
\end{proof}
\section{The Fully-Realizable Case}\label{app:fully-realizable}

We provide novel insights on the complexity of representation learning in the setting studied by \cite{Papini2021leader}, where the agent knows that $f^\star$ is a linear function of \emph{all} representations $\phi\in\Phi$. That is, we consider the set of instances
\begin{align}\label{eq:class-Phi}
    \cF_\Phi^{\mathrm{FR}} := \left\{ f : \cX \times \cA \rightarrow \mathbb{R} \mid \forall \phi\in\Phi, \exists \theta\in\mathbb{R}^{d_\phi} : {f}(x,a) = \phi(x,a)^\transp \theta\ \forall x,a \right\}.
\end{align}
Clearly, $\cF_\Phi^{\mathrm{FR}} \subseteq \cF_\Phi$ and thus learning with $\cF_\Phi^{\mathrm{FR}}$ is not harder than learning with $\cF_\Phi$. This is intuitive since the agent is given more prior knowledge about ${f^\star}$ itself. 

\subsection{Instance-dependent Lower Bound}

The following result formally establishes the complexity of a representation learning problem $(f^\star,\cF_\Phi^{\mathrm{FR}})$.

\begin{theorem}\label{th:lower-bound-fr}
    Let ${f^\star}\in\cF_\Phi^{\mathrm{FR}}$ be an instance with unique optimal policy. Then, the complexity $\cC(f^\star,\cF_\Phi^{\mathrm{FR}})$ of Theorem \ref{th:lower-bound} is
    \begin{equation*}
        \cC({f^\star},\cF_\Phi^{\mathrm{FR}}) = \underset{\eta(x,a) \geq 0}{\inf} \quad \sum_{x\in\cX}\sum_{a\in\cA}\eta(x,a)\Delta_{{f^\star}}(x,a)
        \quad \mathrm{s.t.} \quad \cI_\eta({f^\star}, x, a) \geq 1 \quad x\in\cX,a\neq \pi^\star_{f^\star}(x),
    \end{equation*}
    where
    \begin{align*}
        \cI_\eta({f^\star}, x, a) := \max_{\phi\in\Phi} \frac{ \Delta_{f^\star}(x,a)^2 \indi{z_{\phi}(x,a)\in\im(V_\eta(\phi))}}{2\|\phi(x,a) - \phi(x,\pi^\star_{f^\star}(x))\|_{ V_\eta(\phi)^{\dagger}}^2}.
    \end{align*}
\end{theorem}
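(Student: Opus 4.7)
My plan is to follow the same blueprint as the proof of Theorem \ref{th:lower-bound-repr}: instantiate Theorem \ref{th:lower-bound} with $\cF = \cF_\Phi^{\mathrm{FR}}$ and reduce the KL information constraint to a collection of closed-form per-pair inequalities. First I would decompose the alternative set as a union over sub-optimal pairs,
\[
\Lambda(f^\star, \cF_\Phi^{\mathrm{FR}}) = \bigcup_{\bar{x}\in\cX,\, \bar{a} \neq \pi^\star_{f^\star}(\bar{x})} \bigl\{ f \in \cF_\Phi^{\mathrm{FR}} : f(\bar{x}, \bar{a}) > f(\bar{x}, \pi^\star_{f^\star}(\bar{x})) \bigr\},
\]
so that the constraint of Theorem \ref{th:lower-bound} rewrites as, for every such $(\bar{x},\bar{a})$,
\[
L(\eta,\bar{x},\bar{a}) \,:=\, \inf_{f \in \cF_\Phi^{\mathrm{FR}},\, f(\bar{x},\bar{a}) \geq f(\bar{x},\pi^\star_{f^\star}(\bar{x}))} \tfrac{1}{2}\| f^\star - f \|_{D_\eta}^2 \;\geq\; 1.
\]

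For the easy inequality I would use that $\cF_\Phi^{\mathrm{FR}} \subseteq \cF_{\{\phi\}}$ for every $\phi \in \Phi$, together with full realizability, which makes the misspecification term $\| f^\star - F_\phi\theta^\star_\eta(\phi)\|^2_{D_\eta}$ in Lemma \ref{lem:inf-general-halfspace} vanish for every $\phi$. Applying the lemma on each superset and taking the maximum over $\phi$ yields $L(\eta,\bar{x},\bar{a}) \geq \max_\phi \cI_\eta(f^\star,\phi,\bar{x},\bar{a})$ for every $\eta$, from which one concludes that the max-form constraint is implied by the true one and hence that the max-form optimization value upper-bounds $\cC(f^\star,\cF_\Phi^{\mathrm{FR}})$.

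The main obstacle is the reverse inequality. The cleanest route exploits that $\cF_\Phi^{\mathrm{FR}} = \bigcap_{\phi\in\Phi} \im(F_\phi)$ is itself a linear subspace of $\bR^{XA}$: letting $\psi$ be any feature map whose image equals this intersection, one has $\cF_\Phi^{\mathrm{FR}} = \cF_{\{\psi\}}$, so that $\cC(f^\star,\cF_\Phi^{\mathrm{FR}}) = \cC(f^\star,\cF_{\{\psi\}})$ by Corollary \ref{cor:linear}. The identification with the max-form value cannot be obtained through a per-$\eta$ comparison, since $\|z^\star_\psi(\bar{x},\bar{a})\|^2_{V_\eta(\psi)^\dagger}$ can be strictly smaller than $\min_\phi \|z^\star_\phi(\bar{x},\bar{a})\|^2_{V_\eta(\phi)^\dagger}$ in general, so $L$ and $\max_\phi \cI_\eta$ need not agree pointwise in $\eta$. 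Instead, I expect one has to show that any $\eta$ feasible in the $\psi$-CLB can be converted, at arbitrarily small additional cost in the objective, into a max-form feasible allocation; the natural construction is to inject extra mass on optimal actions, which contributes zero regret and progressively shrinks $V_\eta(\phi)^\dagger$ along the optimal-feature directions, eventually forcing $\cI_\eta(f^\star,\phi,\bar{x},\bar{a}) \geq 1$ for at least one $\phi \in \Phi$. Certifying that this lifting can always be carried out, irrespective of whether any individual $\phi$ satisfies the HLS condition, is where I expect the subtle geometric argument to reside.
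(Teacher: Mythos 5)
Your setup matches the paper's: both decompose $\Lambda(f^\star,\cF_\Phi^{\mathrm{FR}})$ into per-pair half-spaces and reduce each constraint to $L(\eta,\bar{x},\bar{a})\geq 1$, and your easy direction (from $\cF_\Phi^{\mathrm{FR}}\subseteq\cF_{\{\phi\}}$ one gets $L(\eta,\bar{x},\bar{a}) \geq \max_\phi \cI_\eta(f^\star,\phi,\bar{x},\bar{a})$, hence the max-form value is an upper bound on $\cC(f^\star,\cF_\Phi^{\mathrm{FR}})$) is one half of what the paper does --- though note your phrasing is inverted: it is the max-form constraint that \emph{implies} the true one, which is why the max-form feasible set is smaller and its value larger. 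The genuine gap is the reverse inequality, which you leave open. The paper closes it in one line: writing $\cF_\Phi^{\mathrm{FR}} = \cap_{\phi\in\Phi}\cF_{\{\phi\}}$, it asserts that the infimum of the weighted least-squares objective over the \emph{intersection} of the per-$\phi$ constraint sets equals the \emph{maximum} of the per-$\phi$ infima, i.e.\ $L(\eta,\bar{x},\bar{a}) = \max_\phi\cI_\eta(f^\star,\phi,\bar{x},\bar{a})$ pointwise in $\eta$, and then applies Lemma \ref{lem:inf-general-halfspace} with the misspecification term vanishing by full realizability. You explicitly reject this pointwise identity; your skepticism is not frivolous (an infimum over an intersection of convex sets is in general only lower-bounded by the maximum of the infima), but you neither prove the identity for this specific geometry nor refute it, so your write-up establishes only one inequality of the claimed equality.

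Your proposed substitute does not repair this. Passing to a representation $\psi$ with $\im(F_\psi)=\cap_\phi\im(F_\phi)$ correctly gives $\cC(f^\star,\cF_\Phi^{\mathrm{FR}}) = \cC(f^\star,\cF_{\{\psi\}})$ via Corollary \ref{cor:linear}, but the lifting you sketch --- take $\eta$ feasible for the $\psi$-CLB and inject optimal-action mass until $\cI_\eta(f^\star,\phi,\bar{x},\bar{a})\geq 1$ for some $\phi$ --- cannot work in general: adding mass on optimal actions only grows $V_\eta(\phi)$ along $\mathrm{span}(\{\phi(x,\pi^\star_{f^\star}(x))\}_x)$, so $\|z_\phi(\bar{x},\bar{a})\|^2_{V_\eta(\phi)^\dagger}$ stays bounded away from zero (and $\cI_\eta$ bounded, or identically zero) whenever $z_\phi(\bar{x},\bar{a})\notin\im(V_{\eta^\star}(\phi))$ for every $\phi$. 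That this lifting succeeds exactly under a mixing-HLS-type condition is precisely the content of Proposition \ref{prop:sublog-nec-suff-fr}, so it cannot serve as a general-purpose proof of Theorem \ref{th:lower-bound-fr}. To finish you would need either to justify the interchange of infimum and maximum used by the paper (a single shared half-space cutting a family of nested quadratic programs, all containing $f^\star$), or to find a genuinely different certificate that the max-form value does not exceed $\cC(f^\star,\cF_{\{\psi\}})$.
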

\begin{proof}
    It is easy to see that the constraint in Theorem \ref{th:lower-bound} decomposes into $X(A-1)$ constraints, one for each sub-optimal context-arm pair. In particular, the constraint associated with $\bar{x}\in\cX,\bar{a}\neq\pi_{f^\star}^\star(\bar{x})$ is that the following quantity is larger than one:
    \begin{equation}\label{eq:optim-lb}
        \begin{aligned}
        \inf_{{f}\in\cF_{\Phi}^{\mathrm{FR}}}\sum_{x\in\cX}\sum_{a\in\cA}\eta(x,a) \mathrm{KL}_{x,a}({f^\star},{f}) \quad \mathrm{s.t.} \quad {f}(\bar{x},\bar{a}) > {f}(\bar{x},\pi_{f^\star}^\star(\bar{x})).
        \end{aligned}
    \end{equation}
    Clearly, $\cF_{\Phi}^{\mathrm{FR}} = \cap_{\phi\in\Phi} \cF_{\{\phi\}}$. This implies that the infimum over the former set is equal to the maximum of the infima over the latter sets. This implies that the quantity above is
    \begin{align*}
        \cI_\eta({f^\star}, \bar{x}, \bar{a}) :=  \max_{\phi\in\Phi} \left\{\min_{\theta \in \mathbb{R}^{d_\phi}} \frac{1}{2}\sum_{x\in\cX}\sum_{a\in\cA} \eta(x,a) \Big({f^\star}(x,a) - \phi(x,a)^\transp \theta\Big)^2 \quad \mathrm{s.t.} \quad \phi(\bar{x},\bar{a})^\transp \theta \geq \phi(\bar{x}, \pi^\star_{f^\star}(\bar x))^\transp \theta \right\}.
    \end{align*}
    The inner problem is a minimization over a single half-space (the same as the one we compute in the realizable single-representation setting). Using Lemma \ref{lem:inf-general-halfspace} while noting that ${f^\star}(x,a) \in \cF_{\{\phi\}}$ for all $\phi\in\Phi$,
    \begin{align*}
        \cI_\eta({f^\star}, \bar{x}, \bar{a}) :=  \max_{\phi\in\Phi} \frac{ \Delta_{f^\star}(x,a)^2 \indi{z_{\phi}(x,a)\in\im(V_\eta(\phi))}}{2\|\phi(x,a) - \phi(x,\pi^\star_{f^\star}(x))\|_{ V_\eta(\phi)^{\dagger}}^2}.
    \end{align*}
\end{proof}

\subsection{Complexity of Representation Learning}

We provide a series of results to characterize the complexity of representation learning in the fully-realizable setting. In particular, we show that the problem is significantly easier than in our general setting (Assumption \ref{asm:realizability}).

\paragraph{Fully-realizable representation learning is never harder than learning with a given representation}

\begin{proposition}\label{prop:lb-fr-less-than-single}
    For any $\Phi$ such that ${f^\star} \in \cF_\Phi^{\mathrm{FR}}$, $\cC({f^\star},\cF_\Phi^{\mathrm{FR}}) \leq \cC({f^\star},\cF_{\{\phi\}}) \leq \cC({f^\star},\cF_\Phi)$.
\end{proposition}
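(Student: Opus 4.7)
The claim is really two chained inequalities, and the plan is to recognize that both are immediate consequences of the monotonicity of $\cC(\cdot,\cdot)$ in its class argument (Proposition \ref{prop:monotone}). Fix any $\phi\in\Phi$; the whole statement follows if I can show the two set inclusions $\cF_\Phi^{\mathrm{FR}} \subseteq \cF_{\{\phi\}} \subseteq \cF_\Phi$ together with $f^\star$ lying in the smallest set of the chain.

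For the first inclusion, I would invoke the definition of $\cF_\Phi^{\mathrm{FR}}$ in \eqref{eq:class-Phi}: every $f\in\cF_\Phi^{\mathrm{FR}}$ is, by definition, linear in \emph{every} representation in $\Phi$, hence in particular linear in the chosen $\phi$, which is exactly $f\in\cF_{\{\phi\}}$. The assumption $f^\star\in\cF_\Phi^{\mathrm{FR}}$ then guarantees $f^\star\in\cF_\Phi^{\mathrm{FR}}\cap\cF_{\{\phi\}}$, so Proposition \ref{prop:monotone} applies and yields $\cC(f^\star,\cF_\Phi^{\mathrm{FR}})\leq\cC(f^\star,\cF_{\{\phi\}})$.

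For the second inclusion, $\cF_{\{\phi\}}\subseteq\cF_\Phi$ is immediate from the definition $\cF_\Phi = \bigcup_{\phi'\in\Phi}\cF_{\{\phi'\}}$ used throughout the paper, and $f^\star\in\cF_{\{\phi\}}$ from the previous step ensures the membership hypothesis of Proposition \ref{prop:monotone}. A second application of monotonicity then gives $\cC(f^\star,\cF_{\{\phi\}})\leq\cC(f^\star,\cF_\Phi)$. (Alternatively, this inequality is a direct specialization of Proposition \ref{prop:replearn-not-easier-than-clb}, since every $\phi\in\Phi$ is realizable for $f^\star$ under the fully-realizable assumption.) Chaining the two inequalities concludes.

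There is essentially no obstacle: the only substantive step is unfolding the definition of $\cF_\Phi^{\mathrm{FR}}$ to see that it is an \emph{intersection} $\bigcap_{\phi\in\Phi}\cF_{\{\phi\}}$, as opposed to $\cF_\Phi$ which is the \emph{union} of the same sets. This structural contrast is what makes the fully-realizable setting easier (smaller alternative set, fewer constraints in the lower-bound optimization), and it is exactly what monotonicity turns into the stated inequalities. Since the choice of $\phi\in\Phi$ was arbitrary, the bound in fact holds for the infimum over $\phi\in\Phi$ as well, which we could note as a brief remark.
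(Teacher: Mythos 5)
Your proof is correct. For the second inequality you argue exactly as the paper does (the paper invokes Proposition \ref{prop:replearn-not-easier-than-clb}, whose own proof already records the monotonicity route via $\cF_\Phi=\cup_{\phi\in\Phi}\cF_{\{\phi\}}$). For the first inequality your route differs slightly from the paper's: you apply the abstract monotonicity lemma (Proposition \ref{prop:monotone}) to the inclusion $\cF_\Phi^{\mathrm{FR}}=\cap_{\phi'\in\Phi}\cF_{\{\phi'\}}\subseteq\cF_{\{\phi\}}$, whereas the paper instead works from the explicit characterization in Theorem \ref{th:lower-bound-fr}, observing that the constraint there is a maximum over $\phi\in\Phi$ which can be lower-bounded by the single-representation term, so every $\eta$ feasible for the CLB $(f^\star,\cF_{\{\phi\}})$ remains feasible for the fully-realizable problem. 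The two arguments encode the same mechanism --- a smaller class yields fewer alternatives, hence weaker information constraints, hence a smaller optimal value --- but yours is more economical in that it needs only Theorem \ref{th:lower-bound} and Proposition \ref{prop:monotone}, not the closed-form FR lower bound; the paper's version has the minor advantage of making visible \emph{which} constraint relaxation is at play (the per-$(x,a)$ maximum over representations), which is the structural feature exploited later in Proposition \ref{prop:fr-replearn-strictly-easier}. Your identification of $\cF_\Phi^{\mathrm{FR}}$ as an intersection versus $\cF_\Phi$ as a union is exactly the right way to see why the inequalities point in opposite directions relative to the single-representation complexity.
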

\begin{proof}
    The second inequality is proved by Proposition \ref{prop:replearn-not-easier-than-clb} while noting that ${f^\star} \in \cF_{\{\phi\}}$ for all $\phi\in\Phi$. The first one is an immediate consequence of Theorem \ref{th:lower-bound-fr}: it is sufficient to lower bound the maximum over $\phi$ in each constraints using a single representation.
\end{proof}

\begin{remark}
    An immediate consequence of this result is that representation learning in the fully-realizable setting is never harder than a CLB with any of the representations in $\Phi$. This is in striking contrast with the general setting of Assumption \ref{asm:realizability}, where representation learning is never easier than learning with any representation in $\Phi$. The intuition from Theorem \ref{th:lower-bound-fr} is that, in the fully-realizable setting, we are allowed to choose a different representation for each $x,a$ in order to facilitate satisfying the exploration constraints, while in Theorem \ref{th:lower-bound-repr} we have one independent constraint for each representation.
\end{remark}

\paragraph{Fully-realizable representation learning can be much easier than learning with a given representation}

We present an example inspired by \cite{lattimore2017end}. In our context, learning with each single representation yields a dependence on the minimum gap, while representation learning in the fully-realizable setting does not.

\begin{proposition}\label{prop:fr-replearn-strictly-easier}
    For any $\epsilon > 0$, there exist an instance ${f^\star}$, a universal constant $c$, and a set of representations $\Phi$ such that ${f^\star}\in\cap_{\phi\in\Phi}\cF_{\{\phi\}}$ and $\cC({f^\star},\cF_{\Phi}^{\mathrm{FR}}) \leq c$, while $\min_{\phi\in\Phi} \cC({f^\star},\cF_{\{\phi\}}) \geq c/\epsilon$.
\end{proposition}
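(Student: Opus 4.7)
The plan is to adapt the ``end of optimism'' construction of \citet{lattimore2017end} to representation learning by designing two representations that each suffer from a distinct ``blind spot'' in exploration, while their intersection cleanly removes it. Fix $\epsilon\in(0,1/2)$ and take a single context ($X=1$) with four arms $\cA=\{a_1,a_2,a_3,a_4\}$. Set $f^\star(a_1)=1$, $f^\star(a_2)=0$, and $f^\star(a_3)=f^\star(a_4)=1-\epsilon$, so that $a_1$ is optimal with gaps $\Delta_{f^\star}(a_2)=1$ and $\Delta_{f^\star}(a_3)=\Delta_{f^\star}(a_4)=\epsilon$. Define $\Phi=\{\phi_1,\phi_2\}$ with $\phi_i:\cA\to\bR^3$ by $\phi_1(a_1)=\phi_2(a_1)=e_1$, $\phi_i(a_3)=(1-\epsilon)e_1+\epsilon e_2$ and $\phi_i(a_4)=(1-\epsilon)e_1+\epsilon e_3$ for $i=1,2$, together with the differentiating choice $\phi_1(a_2)=e_2$ while $\phi_2(a_2)=e_3$. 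Both representations realize $f^\star$ via $\theta^\star=e_1$, so $f^\star\in\cF_{\{\phi_1\}}\cap\cF_{\{\phi_2\}}$.

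For the single-representation lower bound with $\phi_1$, the key structural observation is that $\phi_1(a_4)$ is the only feature vector with a non-zero $e_3$-component, hence $e_3^\transp V_\eta(\phi_1) e_3=\eta(a_4)\epsilon^2$. Applying the Cauchy--Schwarz inequality $\|z\|_{V^{-1}}^2\geq \langle z,u\rangle^2/\|u\|_V^2$ with $u=e_3$ and $z=z^\star_{\phi_1}(a_4)=\epsilon(e_1-e_3)$ yields $\|z^\star_{\phi_1}(a_4)\|_{V_\eta(\phi_1)^{-1}}^2\geq \epsilon^2/(\eta(a_4)\epsilon^2)=1/\eta(a_4)$. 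By Corollary~\ref{cor:linear}, any feasible allocation must satisfy $\|z^\star_{\phi_1}(a_4)\|^2\leq \epsilon^2/2$, forcing $\eta(a_4)\geq 2/\epsilon^2$. The regret from $a_4$ plays alone is therefore $\Delta_{f^\star}(a_4)\cdot\eta(a_4)\geq 2/\epsilon$, giving $\cC(f^\star,\cF_{\{\phi_1\}})\geq 2/\epsilon$. By the symmetry between $(a_3,e_2)$ and $(a_4,e_3)$ in $\phi_2$ (where now $a_3$ is the unique $e_2$-carrier), the same bound $\cC(f^\star,\cF_{\{\phi_2\}})\geq 2/\epsilon$ follows using $u=e_2$.

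For the FR upper bound, I would exhibit the explicit allocation $\eta(a_1)=M\to\infty$, $\eta(a_2)=2+\delta$ with $\delta\downarrow 0$, $\eta(a_3)=\eta(a_4)=0$, and verify each constraint of Theorem~\ref{th:lower-bound-fr} in the limit. One has $V_\eta(\phi_1)^\dagger\to\diag(0,1/2,0)$ and $V_\eta(\phi_2)^\dagger\to\diag(0,0,1/2)$. For $a_2$ using $\phi_1$, $z^\star_{\phi_1}(a_2)=e_1-e_2\in\im(V_\eta(\phi_1))$ gives $\|z\|_{V^\dagger}^2\to 1/2$ and $\cI\to 1$; for $a_3$ using $\phi_1$, $z=\epsilon(e_1-e_2)$ gives $\|z\|_{V^\dagger}^2\to\epsilon^2/2$ and $\cI\to 1$; for $a_4$ using $\phi_2$, $z=\epsilon(e_1-e_3)\in\im(V_\eta(\phi_2))$ yields $\|z\|_{V^\dagger}^2\to\epsilon^2/2$ and $\cI\to 1$. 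The regret of this allocation equals $\Delta_{f^\star}(a_2)\cdot\eta(a_2)\to 2$, so $\cC(f^\star,\cF_\Phi^{\mathrm{FR}})\leq 2$. Setting $c=2$ yields $\cC(f^\star,\cF_\Phi^{\mathrm{FR}})\leq c$ and $\min_{\phi\in\Phi}\cC(f^\star,\cF_{\{\phi\}})\geq 2/\epsilon=c/\epsilon$, as required. The principal difficulty is arranging for each $\phi_i$ to have a coordinate direction reachable only through a high-information but low-gap arm ($a_4$ for $\phi_1$, $a_3$ for $\phi_2$), yet sharing a common low-regret probe arm ($a_2$) whose feature lies in different coordinates under $\phi_1$ and $\phi_2$, so that the FR allocation can cover both blind spots at once with constant regret while each single-representation allocation is saddled by the Cauchy--Schwarz obstruction.
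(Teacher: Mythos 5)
Your proof is correct and takes essentially the same approach as the paper: a Lattimore--Szepesv\'ari-style four-arm, two-representation example with an explicit near-greedy allocation certifying $\cC(f^\star,\cF_\Phi^{\mathrm{FR}})\leq 2$ and a coordinate-coverage argument forcing $\eta \geq 2/\epsilon^2$ on an $\epsilon$-gap arm for each single representation. Your use of the generalized Cauchy--Schwarz inequality to obtain $\|z^\star_\phi\|_{V_\eta(\phi)^{-1}}^2 \geq 1/\eta(a)$ is in fact a slightly cleaner justification of the lower-bound step than the paper's informal ``the optimal strategy must set the other counts to zero'' reasoning, but the construction and argument are otherwise the same.
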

\begin{proof}
    
Let us consider a finite-armed (non-contextual) bandit problem with $4$ arms. The mean-reward vector is ${f^\star} = (1, 1-\epsilon, 1-\epsilon, 0)^\transp$. We have two realizable representations $\phi_1, \phi_2$ of dimension $d=3$ defined as
\begin{align*}
    \phi_1(a_1) = \begin{bmatrix}
        1\\
        0\\
        0
      \end{bmatrix}
      \quad
      \phi_1(a_2) = \begin{bmatrix}
        1-\epsilon\\
        \epsilon\\
        0
      \end{bmatrix}
      \quad
      \phi_1(a_3) = \begin{bmatrix}
        0\\
        0\\
        1-\epsilon
      \end{bmatrix}
      \quad
      \phi_1(a_4) = \begin{bmatrix}
        0\\
        1\\
        0
      \end{bmatrix}
\end{align*}
\begin{align*}
    \phi_2(a_1) = \begin{bmatrix}
        0\\
        0\\
        1
      \end{bmatrix}
      \quad
      \phi_2(a_2) = \begin{bmatrix}
        1-\epsilon\\
        0\\
        0
      \end{bmatrix}
      \quad
      \phi_2(a_3) = \begin{bmatrix}
        0\\
        \epsilon\\
        1-\epsilon
      \end{bmatrix}
      \quad
      \phi_2(a_4) = \begin{bmatrix}
        0\\
        1\\
        0
      \end{bmatrix}
\end{align*}
The parameter realizing ${f^\star}$ is in both cases $\theta = (1,0,1)^\transp$.

We start by computing the lower bound in the FR representation learning setting. We shall look for an upper bound to the optimal value $\cC({f^\star},\cF_{\Phi}^{\mathrm{FR}})$ which does not scale by $1/\epsilon$. Let us choose an allocation $\eta$ for which $\eta(a_1) = M$ (some very large quantity), $\eta(a_2)=\eta(a_3)=0$. We need to find the required number of pulls to $a_4$. Since we are looking for an upper bound to the optimal value, it is enough to find $\eta(a_4)$ such that $\eta$ satisfies the constraints in Theorem \ref{th:lower-bound-fr} for some specific representations (possibly different for different sub-optimal arms). We choose $\phi_1$ for $a_2$ and $a_4$, and $\phi_2$ for $a_3$. These yield the constraints
\begin{align*}
    \frac{ \Delta_{f^\star}(a_2)^2}{2\|\phi_1(a_2) - \phi_1(a_1)\|_{ V_\eta(\phi_1)^{\dagger}}^2} 
    &= \frac{ \epsilon^2}{2\|(-\epsilon,\epsilon,0)^\transp\|_{ V_\eta(\phi_1)^{\dagger}}^2} 
    \geq 1.
    \\ \frac{ \Delta_{f^\star}(a_3)^2}{2\|\phi_2(a_3) - \phi_2(a_1)\|_{ V_\eta(\phi_2)^{\dagger}}^2} 
    &= \frac{ \epsilon^2}{2\|(0,\epsilon,-\epsilon)^\transp\|_{ V_\eta(\phi_2)^{\dagger}}^2} 
    \geq 1.
    \\ \frac{ \Delta_{f^\star}(a_4)^2}{2\|\phi_1(a_4) - \phi_1(a_1)\|_{ V_\eta(\phi_1)^{\dagger}}^2} 
    &= \frac{ 1}{2\|(-1,1,0)^\transp\|_{ V_\eta(\phi_1)^{\dagger}}^2} 
    \geq 1.
\end{align*}
Note that, by our choice of $\eta$, $V_\eta(\phi_1)$ and $V_\eta(\phi_2)$ are diagonal matrices with diagonal elements $(M, \eta(a_4), 0)$ and $(0, \eta(a_4), M)$, respectively. Therefore, the constraints above reduce to
\begin{align*}
    \frac{ \epsilon^2}{\epsilon^2/M + \epsilon^2/\eta(a_4)} 
    \geq 2, 
    \quad \frac{ \epsilon^2}{\epsilon^2/\eta(a_4) + \epsilon^2/M} 
    \geq 2, 
    \quad \frac{1}{1/M + 1/\eta(a_4)} 
    \geq 2.
\end{align*}
Letting $M$ go to infinity (which does not alter the objective value), we find that $\eta(a_4) \geq 2$ suffices. Therefore, we proved that the allocation $\eta = (\infty,0,0,2)^\transp$ is feasible, and thus the optimal value is bounded by $\cC({f^\star},\cF_{\Phi}^{\mathrm{FR}}) \leq 2$.

We now show that the regret when learning with each of the single representations scales at least by $1/\epsilon$. Let us do it for $\phi_1$. For $\phi_2$ the argument will be the same since the two representations are equal up to a permutation of the first and third component.

Clearly, since we want to lower bound the optimal value $\cC({f^\star},\cF_{\{\phi_1\}})$, we can drop all constraints but the one associated with $a_3$, i.e.,
\begin{align*}
    \frac{ \Delta_{f^\star}(a_3)^2}{2\|\phi_1(a_3) - \phi_1(a_1)\|_{ V_\eta(\phi_1)^{\dagger}}^2} 
    = \frac{ \epsilon^2}{2\|(-1,0,1-\epsilon)^\transp\|_{ V_\eta(\phi_1)^{\dagger}}^2} 
    \geq 1.
\end{align*}

Let us set once again $\eta(a_1) = M$ (some very large value). Since the constraint associated with $a_3$ requires to make the feature norm of $\phi_1(a_3)-\phi_1(a_1) = (-1,0,1-\epsilon)^\transp$ small, clearly both $a_2$ and $a_4$ do not serve to this purpose (they do not cover the third dimension). So the optimal strategy must have $\eta(a_2)=\eta(a_4)=0$. The matrix $V_\eta(\phi_1)^{\dagger}$ is then diagonal with elements $(M,0,(1-\epsilon)^2 \eta(a_3))$. Thus, the constraint reduces to
\begin{align*}
    \frac{ \epsilon^2}{1/M + 1/\eta(a_3)} 
    \geq 2.
\end{align*}
This implies that $\eta(a_3) \geq 2/\epsilon^2$. Plugging this into the regret, recalling that action $a_3$ has gap $\epsilon$, we obtain that $\cC({f^\star},\cF_{\{\phi_1\}}) \geq 2/\epsilon$.

\end{proof}

\subsection{Necessary and Sufficient Condition for Constant Regret}

\begin{proposition}\label{prop:sublog-nec-suff-fr}
    A necessary and sufficient condition for $\cC({f^\star},\cF_\Phi^{\mathrm{FR}}) = 0$ is that, for all $x\in\cX,a\neq\pi_{f^\star}^\star(x)$, there exists $\phi\in\Phi$ such that $z_\phi(x,a)\in\im(V_{\eta^\star}(\phi))$ (equiv. $\phi(x,a)\in\im(V_{\eta^\star}(\phi))$).
\end{proposition}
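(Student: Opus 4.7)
The proof structure mirrors Proposition \ref{prop:sublog-nec-suff-explicit}, but exploits the simpler form of the constraint in Theorem \ref{th:lower-bound-fr}: since every $\phi\in\Phi$ is realizable for $f^\star$, the misspecification term vanishes and the only obstruction to feasibility of $\eta^\star$ (scaled) is whether at least one representation in $\Phi$ has the relevant feature gap in the range of its design matrix. I will first establish the equivalence $z_\phi^\star(x,a)\in\im(V_{\eta^\star}(\phi)) \Longleftrightarrow \phi(x,a)\in\im(V_{\eta^\star}(\phi))$ stated in parentheses, then prove the two directions of the proposition.

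\textbf{Equivalence of the two formulations.} Since $\rho$ is full support, $V_{\eta^\star}(\phi) = \sum_{x\in\cX}\phi(x,\pi^\star_{f^\star}(x))\phi(x,\pi^\star_{f^\star}(x))^\transp$, and therefore $\phi(x,\pi^\star_{f^\star}(x))\in\im(V_{\eta^\star}(\phi))$ for every $x\in\cX$. Writing $z^\star_\phi(x,a) = \phi(x,\pi^\star_{f^\star}(x)) - \phi(x,a)$, the membership $z^\star_\phi(x,a)\in\im(V_{\eta^\star}(\phi))$ is equivalent to $\phi(x,a)\in\im(V_{\eta^\star}(\phi))$.

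\textbf{Sufficiency.} Assume that for every suboptimal pair $(x,a)$ there exists some $\phi_{x,a}\in\Phi$ with $z^\star_{\phi_{x,a}}(x,a)\in\im(V_{\eta^\star}(\phi_{x,a}))$. For any $M>0$, consider $\eta = M\eta^\star$. Scaling by $M$ preserves $\im(V_{\eta}(\phi))=\im(V_{\eta^\star}(\phi))$ while rescaling eigenvalues, so $\|z^\star_{\phi_{x,a}}(x,a)\|_{V_{M\eta^\star}(\phi_{x,a})^\dagger}^2 = \tfrac{1}{M}\|z^\star_{\phi_{x,a}}(x,a)\|_{V_{\eta^\star}(\phi_{x,a})^\dagger}^2$, which is finite (since $z^\star_{\phi_{x,a}}(x,a)$ lies in the range, as one can verify via the same SVD argument used in the proof of Proposition \ref{prop:sublog-nec-suff-explicit}). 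Choosing $M$ large enough (the max over the finitely-many suboptimal pairs of the required scalings) makes the constraint $\cI_\eta(f^\star,x,a)\geq 1$ from Theorem \ref{th:lower-bound-fr} satisfied for every $(x,a)$ by choosing $\phi = \phi_{x,a}$ inside the $\max_{\phi\in\Phi}$. The objective $\sum_{x,a}\eta(x,a)\Delta_{f^\star}(x,a) = 0$ since $\eta$ is supported on optimal arms, hence $\cC(f^\star,\cF_\Phi^{\mathrm{FR}})=0$.

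\textbf{Necessity.} Suppose by contradiction there exists a suboptimal pair $(\bar x,\bar a)$ such that $z^\star_\phi(\bar x,\bar a)\notin\im(V_{\eta^\star}(\phi))$ for \emph{every} $\phi\in\Phi$, yet $\cC(f^\star,\cF_\Phi^{\mathrm{FR}})=0$. Then for arbitrarily small $\varepsilon > 0$ there exists a feasible $\eta$ with regret $\leq\varepsilon$; as in the proof of Proposition \ref{prop:sublog-nec-suff-explicit}, any such $\eta$ must be dominated (up to scaling on optimal arms) by $\eta^\star$ in the sense that every suboptimal pair is played only $O(\varepsilon/\Delta_{\min})$ times, so $\im(V_{\eta}(\phi))$ must coincide with $\im(V_{\eta^\star}(\phi))$ in the limit. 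More formally, taking a sequence $\eta_n$ with vanishing regret and letting $\eta_n^{\mathrm{opt}}$ be its restriction to optimal arms, one has $\im(V_{\eta_n^{\mathrm{opt}}}(\phi)) = \im(V_{\eta^\star}(\phi))$, and the contribution of suboptimal arms to the design matrix vanishes in directions not already covered. By the indicator factor $\indi{z^\star_\phi(\bar x,\bar a)\in\im(V_{\eta_n}(\phi))}$ in $\cI_{\eta_n}(f^\star,\bar x,\bar a)$, the constraint forces some suboptimal allocation at $(\bar x,\bar a)$ (or at another pair covering the missing direction) to be bounded away from zero, producing regret bounded away from zero and contradicting $\cC(f^\star,\cF_\Phi^{\mathrm{FR}})=0$.

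\textbf{Main obstacle.} The delicate step is necessity: one must rule out the possibility that a feasible $\eta$ achieves zero regret by playing suboptimal arms in a way whose mass vanishes yet whose column-space contribution persists. The cleanest route is to argue by a scaling/continuity argument on the implicit formulation (Corollary \ref{cor:replearn-lb-implicit} adapted to $\cF_\Phi^{\mathrm{FR}}$): if $\cC(f^\star,\cF_\Phi^{\mathrm{FR}})=0$, then $M\eta^\star$ must itself be feasible for some $M$ (taking a limit of minimizing sequences that assign vanishing mass to suboptimal pairs), which immediately forces the range condition to hold for at least one $\phi\in\Phi$ and for every suboptimal $(x,a)$.
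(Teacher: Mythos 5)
Your proof is correct and follows essentially the same route as the paper's: sufficiency by rescaling $\eta^\star$ by a large constant $M$ and using the $\max_{\phi\in\Phi}$ in the constraint of Theorem \ref{th:lower-bound-fr} to pick one suitable representation per suboptimal pair, and necessity by contradiction via the claim that $\cC(f^\star,\cF_\Phi^{\mathrm{FR}})=0$ forces some rescaling of $\eta^\star$ to be feasible. The "delicate step" you flag in the necessity direction (ruling out minimizing sequences with vanishing but nonzero suboptimal mass whose column spaces are strictly larger) is glossed over in exactly the same way in the paper's own proof, so your treatment matches the paper's level of rigor while additionally spelling out the parenthetical equivalence $z^\star_\phi(x,a)\in\im(V_{\eta^\star}(\phi))\Leftrightarrow\phi(x,a)\in\im(V_{\eta^\star}(\phi))$.
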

\begin{remark}
    This result shows that the mixing HLS condition assumed by \cite{Papini2021leader}  (see their Definition 1) is actually necessary for constant regret.
\end{remark}
\begin{proof}
    Proving that the condition is necessary can be easily done by contradiction. If $\cC({f^\star},\cF_\Phi^{\mathrm{FR}}) = 0$, then a rescaling of $\eta^\star$ must be feasible. However, if for some sub-optimal $(x,a)$ we have $z_\phi(x,a)\notin\im(V_{\eta^\star}(\phi))$ for all $\phi$, that would imply that any rescaling of $\eta^\star$ is actually infeasible, hence yielding a contradiction.

    The proof that the condition is sufficient can be done as a simple extension of the one of Proposition \ref{prop:sublog-nec-suff-explicit}. Simply take any sub-optimal $(x,a)$ and show that a re-scaling of $\eta^\star$ is feasible by following exactly the same steps as in Proposition \ref{prop:sublog-nec-suff-explicit}.
\end{proof}
\section{Useful Linear Algebra Results}\label{app:linalg}

\subsection{Singular value decomposition}

We recall that the SVD of a real matrix $A \in \bR^{n \times m}$ is a factorization of the form $A = U\Sigma V^\transp$, with $U\in \bR^{n \times n}$ orthogonal (i.e., such that $U^\transp U = UU^\transp = I)$, $\Sigma \in \bR^{n \times m}$ diagonal, and $V \in \bR^{m \times m}$ orthogonal. Suppose $A$ has rank $d \leq \min\{n,m\}$ and that the diagonal entries of $\Sigma$ (i.e., the singular values of $A$) are in decreasing order ($\sigma_1 \geq \dots \geq \sigma_{\min\{n,m\}} \geq 0$). We list some well-known properties of the SVD decomposition.

\paragraph{Properties}
\begin{itemize}
    \item The number of non-zero entries in $\Sigma$ correspond to the rank of $A$. 
    \item Let $u_1, \dots, u_d$ be the columns of $U$ (i.e., the left singular vectors) corresponding to non-zero singular values. Then $\mathrm{span}(u_1, \dots, u_d) = \mathrm{Im}(A)$.
    \item Let $v_1, \dots, v_d$ be the columns of $V$ (i.e., the right singular vectors) corresponding to non-zero singular values. Then $\mathrm{span}(v_1, \dots, v_d) = \row(A)$.
    \item Let $v_{d+1}, \dots, v_{\min\{n,m\}}$ be the columns of $V$ (i.e., the right singular vectors) corresponding to zero singular values. Then $\mathrm{span}(v_{d+1}, \dots, v_{\min\{n,m\}}) = \mathrm{Ker}(A)$.
\end{itemize}

Let $A \in \bR^{n \times m}$ with $n \geq m$ and $\rank(A) = d$. Then,
\begin{itemize}
    \item $\rank(A^T A) = \rank(A)$.
    \item $\im(A^T A) = \row(A) = \spn(v_1, \dots, v_d)$.
    \item $\ker(A^T A) = \ker(A) = \mathrm{span}(v_{d+1}, \dots, v_{\min\{n,m\}})$.
\end{itemize}

\subsection{Pseudo-inverse}

We recall that the pseudo-inverse of a matrix $A \in \bR^{n \times m}$ is defined as $A^\dagger = V\Sigma^\dagger U^\transp$, where $(U,\Sigma,V)$ is the SVD of $A$ and $\Sigma^\dagger$ is a diagonal matrix with the inverse of the non-zero elements of $\Sigma$.

\paragraph{Properties}

\begin{enumerate}
    \item $A A^\dagger A = A$ and $A^\dagger A A^\dagger = A^\dagger$.
    \item $(A^\transp A)^\dagger A^T = A^\dagger$.
    \item $(A^\dagger)^\transp = (A^\transp)^\dagger$.
    \item $(A A^\transp)^\dagger = (A^T)^\dagger A^\dagger$.
    \item If either $A^\transp A = I$ or $B B^\transp = I$ or $A = B^\transp$: $(AB)^\dagger = B^\dagger A^\dagger$.
\end{enumerate}

\end{document}